\definecolor{mydarkblue}{rgb}{0,0.08,0.45}
\newtheorem{theorem}{Theorem}
\newtheorem{lemma}{Lemma}
\newtheorem{conjecture}{Conjecture}
\newtheorem{definition}{Definition}
\theoremstyle{remark}
\newcounter{todocounter}
\newcommand{\reals}{\mathbb{R}}
\DeclarePairedDelimiter{\norm}{\lVert}{\rVert}
\newcommand{\lpar}{\left(}
\newcommand{\rpar}{\right)}
\newcommand{\lspar}{\left[}
\newcommand{\rspar}{\right]}
\newcommand{\diff}{\, \text{d}}
\newcommand{\vect}[1]{\mathbf{#1}}
\newcommand{\mat}[1]{\mathbf{#1}}
\newcommand{\eye}{\mathbf{I}}
\newcommand{\mJ}{\mat{J}}
\newcommand{\vp}{\vect{p}}
\newcommand{\vq}{\vect{q}}
\newcommand{\mK}{\mat{K}}
\newcommand{\mLam}{\mat{\Lambda}}
\newcommand{\mThe}{\mat{\Theta}}
\newcommand{\Kp}{\mK \vp}
\newcommand{\Kq}{\mK \vq}
\newcommand{\KOEnt}{\mathbb{H}_{1}^\mK}
\newcommand{\KEnt}{\mathbb{H}_{\alpha}^\mK}
\newcommand{\BDiv}{\mathbb{D}^\mK}
\newcommand{\X}{\mathcal{X}}
\newcommand{\Y}{\mathcal{Y}}
\newcommand{\Xs}{(\mathcal{X}, \kappa)}
\newcommand{\vx}{\vect{x}}
\newcommand{\Spx}{\mathbf{\Delta}_n}
\newcommand{\Prob}{\mathbb{P}}
\newcommand{\Qrob}{\mathbb{Q}}
\newcommand{\Exp}{\mathbb{E}}
\newcommand{\Ent}{\mathbb{H}}
\DeclareMathOperator*{\Uniform}{\mathcal{U}}
\begin{document}
	
	%
	
	%
	
	\twocolumn[
	
	\runningauthor{Gallego, Vani, Schwarzer, Lacoste-Julien}
	\aistatstitle{GAIT: A Geometric Approach to Information Theory}
	\aistatsauthor{Jose~Gallego \hspace{5mm} Ankit~Vani \hspace{5mm} Max~Schwarzer \hspace{5mm}  Simon~Lacoste-Julien$^\dagger$}
	\aistatsaddress{ Mila and DIRO, Université de Montréal}
	
	]

	\begin{abstract}
		
		We advocate the use of a notion of entropy that reflects the relative abundances of the symbols in an alphabet, as well as the similarities between them. This concept was originally introduced in theoretical ecology to study the diversity of ecosystems. Based on this notion of entropy, we introduce geometry-aware counterparts for several concepts and theorems in information theory. Notably, our proposed divergence exhibits performance on par with state-of-the-art methods based on the Wasserstein distance, but enjoys a closed-form expression that can be computed efficiently. We demonstrate the versatility of our method via experiments on a broad range of domains: training generative models, computing image barycenters, approximating empirical measures and counting modes.
		
		
		
	\end{abstract}
	
	
	
	\vspace{-2ex}

\section{Introduction}
Shannon's seminal theory of information \citeyearpar{shannon} has been of paramount importance in the development of modern machine learning techniques. However, standard information measures deal with probability distributions over an alphabet considered as a mere set of symbols and disregard additional geometric structure, which might be available in the form of a metric or similarity function. As a consequence of this, information theory concepts derived from the Shannon entropy (such as cross entropy and the Kullback-Leibler divergence) are usually blind to the geometric structure in the domains over which the distributions are defined.

This blindness limits the applicability of these concepts. For example, the Kullback-Leibler divergence cannot be optimized for empirical measures with non-matching supports. Optimal transport distances, such as Wasserstein, have emerged as practical alternatives with theoretical grounding. These methods have been used to compute barycenters~\citep{cuturi_fast_bar} and train generative models~\citep{cuturi_learning}. However, optimal transport is computationally expensive as it generally lacks closed-form solutions and requires the solution of linear programs or the execution of matrix scaling algorithms, even when solved only in approximate form~\citep{cuturi_regularized}. Approaches based on kernel methods \citep{ak2st, mmd_gan, ot_gan}, which take a functional analytic view on the problem, have also been widely applied. However, further exploration on the interplay between kernel methods and information theory is lacking. 

\textbf{Contributions.} We \textit{i)} introduce to the machine learning community a similarity-sensitive definition of entropy developed by~\citet{leinster2012}. Based on this notion of entropy we \textit{ii)} propose geometry-aware counterparts for several information theory concepts. We \textit{iii)} present a novel notion of divergence which incorporates the geometry of the space when comparing probability distributions, as in optimal transport. However, while the former methods require the solution of an optimization problem or a relaxation thereof via matrix-scaling algorithms, our proposal enjoys a closed-form expression and can be computed efficiently.  We refer to this collection of concepts as Geometry-Aware Information Theory: \textit{GAIT}.


\textbf{Paper structure.} We introduce the theory behind the GAIT entropy and provide motivating examples justifying its use. We then introduce and characterize a divergence as well as a definition of mutual information derived from the GAIT entropy. Finally, we demonstrate applications of our methods including training generative models, approximating measures and finding barycenters. We also show that the GAIT entropy can be used to estimate the number of modes of a probability distribution.

\begin{figure*}[t]
    \vspace{-2mm}
	\begin{minipage}{.32\textwidth}
		\centering
		\vspace{-2mm}
		\includegraphics[trim=5 17 5 40, clip, width=0.93\textwidth]{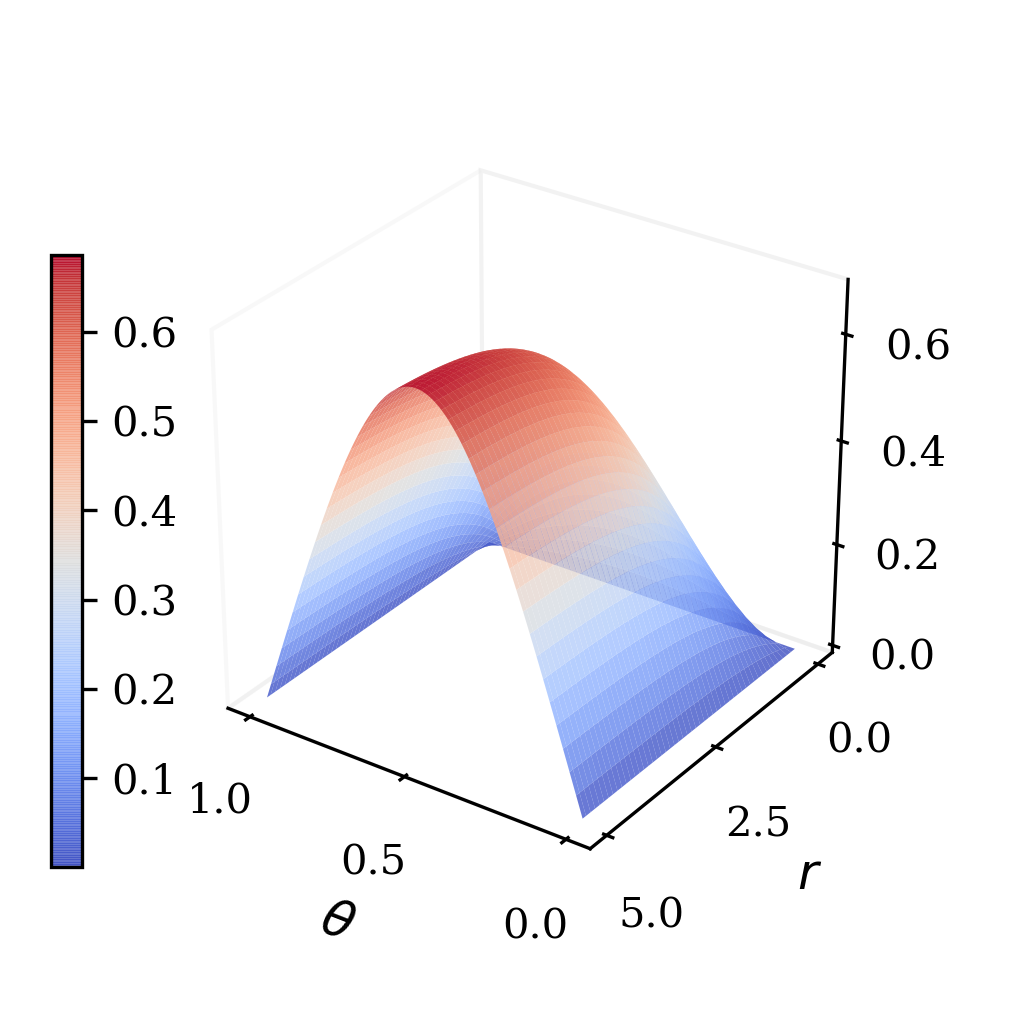}
		\captionof{figure}{$\mathbb{H}^\mK_1$ interpolates towards the Shannon entropy as $r \rightarrow \infty$.}
		\label{fig:ent_2d}
	\end{minipage}
	\hspace{7mm}
	\begin{minipage}{.27\textwidth}
		\vspace{8mm}
		\hspace{2mm}
		\begin{tikzpicture}[<->,>=stealth',shorten >=1pt,auto,node distance=2cm, scale=0.95, every node/.style={scale=0.85}, thick, main node/.style={circle, draw}]
		
		\node[main node] (a) {A};
		\node[main node] (b) [below of=a] {B};
		\node[main node] (c) [above right=0.5cm and 2.1cm of b] {C};
		
		\path
		(a) edge [loop left] node {1} (a)
		edge node [left] {0.7} (b)
		edge node {0.1} (c)
		(b) edge [loop left] node {1} (b)
		(c) edge [loop above] node {1} (c)
		edge node {0.1} (b);
		\end{tikzpicture}
		\vspace{6mm}
		\captionof{figure}{A 3-point space with two highly similar elements.}
		\label{fig:3pt_space}
	\end{minipage}
	\hfill
	\begin{minipage}{.3\textwidth}
		\centering
		\vspace{5mm}
		\includegraphics[width=0.93\textwidth]{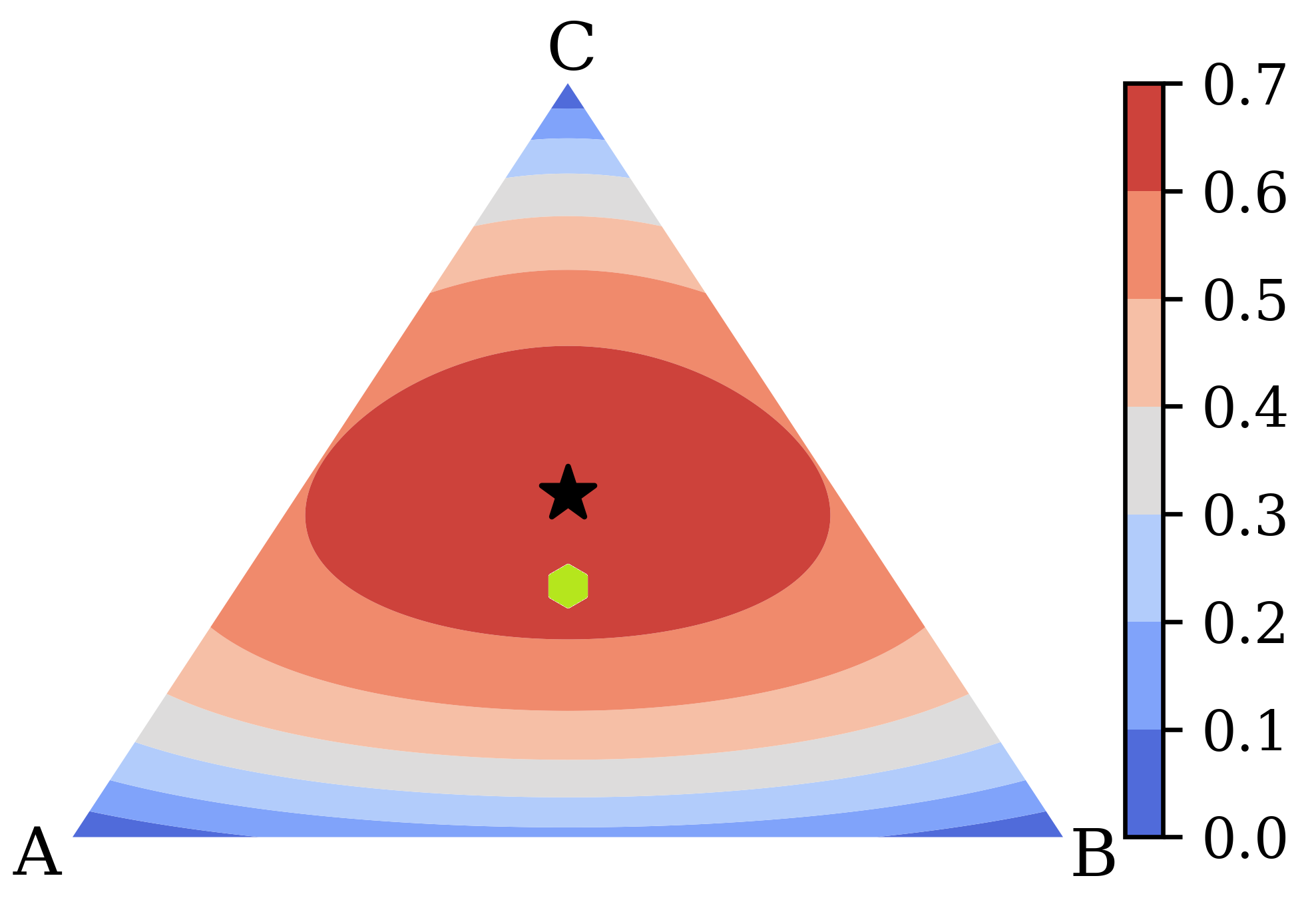}
		\captionof{figure}{$\mathbb{H}^\mK_1$ for distributions over the space in Fig. \ref{fig:3pt_space}.}
		\label{fig:toy_heatmap}
	\end{minipage}
\end{figure*}

\textbf{Notation.}
Calligraphic letters denote $\mathcal{S}$ets, bold letters represent $\vect{M}$atrices and $\vect{v}$ectors, and double-barred letters denote $\Prob$robability distributions and information-theoretic functionals. To emphasize certain computational aspects, we alternatively  denote a distribution $\Prob$ over a finite space $\X$ as a vector of probabilities $\vp$. $\eye$, $\mathbf{1}$ and $\vect{J}$ denote the identity matrix, a vector of ones and matrix of ones, with context-dependent dimensions.
For vectors $\vect{v}$, $\vect{u}$ and $\alpha \in \reals$, $\frac{\vect{v}}{\vect{u}}$ and $\vect{v}^\alpha$ denote element-wise division and exponentiation.  $\left< \cdot, \cdot \right>$ denotes the Frobenius inner-product between two vectors or matrices. $\Spx \triangleq \{ \vect{x} \in \reals^n | \left< \mathbf{1}, \vect{x} \right> = 1 \text{ and  } x_i \ge 0 \}$ denotes the probability simplex over $n$ elements. $\delta_x$ denotes a Dirac distribution at point $x$. We adopt the conventions $0 \cdot \log(0) = 0$ and $x \log(0) = -\infty$ for $x > 0$. 

\textbf{Reproducibility.} Our experiments can be reproduced via: 
\url{https://github.com/jgalle29/gait}

	\section{\hspace{-2mm}Geometry-Aware Information Theory}
\label{sec:theory}

Suppose that we are given a finite space $\mathcal{X}$ with $n$ elements along with a symmetric function that measures the similarity between elements, $\kappa: \X \times \X \to [0, 1]$. Let $\mK$ be the matrix induced by $\kappa$ on $\X$; i.e, $\mK_{x,y} \triangleq \kappa_{xy} \triangleq \kappa(x, y) = \kappa(y, x)$. $\mK_{x,y} = 1$ indicates that the elements $x$ and $y$ are identical, while $\mK_{x,y} = 0$ indicates full dissimilarity. We assume that $\kappa(x, x) = 1$ for all $x \in \X$. We call $\Xs$ a (finite) similarity space. For brevity we denote $\Xs$ by $\X$ whenever $\kappa$ is clear from the context. 

Of particular importance are the similarity spaces arising from metric spaces. Let $(\X, d)$ be a metric space and define $\kappa(x, y) \triangleq e^{- d(x, y)}$. Here, the symmetry and range conditions imposed on $\kappa$ are trivially satisfied. The triangle inequality in $(\X, d)$ induces a multiplicative transitivity on $\Xs$: for all $x, y, z \in \X$, $\kappa(x, y) \ge \kappa(x, z) \kappa(z, y)$. Moreover, for any metric space of the negative type, the matrix of its associated similarity space is positive definite~\citep[Lemma 2.5]{Reams1999}. 


In this section, we present a theoretical framework which quantifies the ``diversity'' or ``entropy'' of a probability distribution defined on a similarity space, as well as a notion of divergence between such distributions. 

\subsection{Entropy and diversity}

Let $\Prob$ be a probability distribution on $\X$.  $\Prob$ induces a \emph{similarity profile} $\mK \Prob : \X \to [0,1]$, given by $\mK \Prob(x) \triangleq \Exp_{y \sim \Prob} \lspar \kappa(x, y) \rspar = (\Kp)_x$.\footnote{This denotes the $x$-th entry of the result of the matrix-vector multiplication $\Kp$.}  $\mK \Prob(x)$ represents the expected similarity between element $x$ and a random element of the space sampled according to $\Prob$. Intuitively, it assesses how ``satisfied'' we would be by selecting $x$ as a one-point summary of the space. In other words, it measures the ordinariness of $x$, and thus $\frac{1}{\mK \Prob(x)}$ is the rarity or \emph{distinctiveness} of $x$ \citep{leinster2012}. Note that the distinctiveness depends crucially on both the similarity structure of the space and the probability distribution at hand.

Much like the interpretation of Shannon's entropy as the expected surprise of observing a random element of the space, we can define a notion of diversity as \emph{expected distinctiveness}: ${ \scriptstyle \sum_{x \in \X} \Prob(x)} \frac{1}{\mK \Prob(x)}$. This arithmetic weighted average is a particular instance of the family of power (or H\"older) means. Given $\vect{w} \in \Spx$ and $\vx \in \reals^n_{\ge 0}$, the \emph{weighted power mean of order $\beta$} is defined as ${ M_{\vect{w}, \beta}(\vx) \triangleq \left< \vect{w}, \vx^\beta \right> ^\frac{1}{\beta}}$. Motivated by this averaging scheme, \cite{leinster2012} proposed the following definition:

\begin{definition}
	\label{def:entropy} \citep{leinster2012}  \emph{(\textbf{GAIT Entropy})}
	The GAIT entropy of order $\alpha \ge 0$ of distribution $\Prob$ on finite similarity space $\Xs$ is given by:
	\vspace{-3ex}
	\begin{align}
	\label{eq:entropy}
	\hspace{0mm}\KEnt[\Prob] &\triangleq \log M_{\vp, 1 - \alpha}\lpar \frac{1}{\Kp} \rpar \\
	 &= \frac{1}{1-\alpha} \log \sum_{i = 1}^n  \vp_i \frac{1}{(\Kp)^{1-\alpha}_i}.
	\end{align}	
\end{definition}

It is evident that whenever $\mK = \eye$, this definition reduces to the R\'enyi entropy \citep{Renyi1961}. Moreover, a continuous extension of Eq. \eqref{eq:entropy} to $\alpha =1$ via a L'H\^opital argument reveals a similarity-sensitive version of Shannon's entropy: 
\begin{equation}
 \mathbb{H}^{\mK}_1[\Prob] = - \left< \vp,  \log  (\mK \vp) \right>  = - \Exp_{x \sim \Prob}[\log (\mK \Prob)_x ].   
\end{equation}

Let us dissect this definition via two simple examples. First, consider a distribution $\vp_
\theta = [\theta, 1-\theta]^\top$ over the points $\{x, y\}$ at distance $r \ge 0$, and define the similarity $\kappa_{xy} \triangleq e^{-r}$. As the points get further apart, the Gram matrix $\mK_r$ transitions from $\mJ$ to $\eye$. Fig. \ref{fig:ent_2d} displays the behavior of $\mathbb{H}^{\mK_r}_1[\vp_\theta]$. We observe that when $r$ is large we recover the usual shape of Shannon entropy for a Bernoulli variable. In contrast, for low values of $r$, the curve approaches a constant zero function. In this case, we regard both elements of the space as identical: no matter how we distribute the probability among them, we have low uncertainty about the qualities of random samples. Moreover, the exponential of the maximum entropy, ${ \exp \lspar \sup_{\theta} \mathbb{H}^{\mK_r}_1[\vp_\theta] \rspar = 1 + \tanh(r) \in [1, 2]}$, measures the \emph{effective number of points} \citep{leinster2016} at scale $r$.

Now, consider the space presented in Fig. \ref{fig:3pt_space}, where the edge weights denote the similarity between elements. The maximum entropy distribution in this space following Shannon's view is the uniform distribution $\vect{u} = [\frac{1}{3}, \frac{1}{3}, \frac{1}{3}]^\top$. This is counter-intuitive when we take into account the fact that points A and B are very similar. We argue that a reasonable expectation for a maximum entropy distribution is one which allocates roughly probability $\frac{1}{2}$ to point C and the remaining mass in equal proportions to points A and B. Fig. \ref{fig:toy_heatmap} displays the value of $\mathbb{H}^{\mK}_1$ for all distributions on the 3-simplex. The green dot represents $\vect{u}$, while the black star corresponds to the maximum GAIT entropy with [A, B, C]-coordinates $\vp^* \triangleq [0.273, 0.273, 0.454]^\top$. The induced similarity profile is $\mK \vp^* = [\frac{1}{2}, \frac{1}{2}, \frac{1}{2}]^\top$. Note how Shannon's probability-uniformity gets translated into a constant similarity profile.

\textbf{Properties.} We now list several important properties satisfied by the GAIT entropy, whose proofs and formal statements are contained in \citep{leinster2012} and \citep{leinster2016}:
\vspace{-1ex}
\begin{itemize}
	\item \textbf{Range}: $0 \le \KEnt[\Prob] \le \log(|\X|)$.
	\item $\mK$\textbf{-monotonicity}: Increasing the similarity reduces the entropy. Formally, if  $\kappa_{xy} \ge \kappa'_{xy}$ for all $x, y  \in \X$, then $\mathbb{H}^{\mJ}_\alpha[\Prob] \le \KEnt[\Prob] \le \mathbb{H}^{\mK'}_\alpha[\Prob] \le \mathbb{H}^{\eye}_\alpha[\Prob]$.
	\item \textbf{Modularity}: If the space is partitioned into fully dissimilar groups, $\Xs = \bigotimes_{c=1}^C (\X_c, \kappa_c)$, so that $\mK$ is a block matrix ($x \in \X_c, y \in \X_{c'}, c \neq c' \Rightarrow \kappa_{xy} = 0$), then the entropy of a distribution on $\X$ is a weighted average of the block-wise entropies.
	\item \textbf{Symmetry}: Entropy is invariant to relabelings of the elements, provided that the rows of $\mK$ are permuted accordingly.
	\item \textbf{Absence}: The entropy of a distribution $\Prob$ over $\Xs$ remains unchanged when we restrict the similarity space to the support of $\Prob$.
	\item \textbf{Identical elements}: If two elements are identical (two equal rows in $\mK$), then combining them into one and adding their probabilities leaves the entropy unchanged.
	\item \textbf{Continuity}: $\KEnt[\Prob]$ is continuous in $\alpha \in [0, \infty]$ for fixed $\Prob$, and continuous in $\Prob$ (w.r.t. standard topology on $\mathbf{\Delta}$) for fixed $\alpha \in (0, \infty)$. 
	\item $\alpha$-\textbf{Monotonicity}: $\KEnt[\Prob]$ is non-increasing in  $\alpha$. 
\end{itemize}

\textbf{The role of} $\alpha$\textbf{.} Def. \ref{def:entropy} establishes a family of entropies indexed by a non-negative parameter $\alpha$, which determines the \textit{relative importance of rare elements versus common ones}, where rarity is quantified by $\frac{1}{\mK \Prob}$. In particular, $\mathbb{H}^{\mK}_0[\Prob] = \log \left< \vp , \frac{1}{\mK \vp} \right>$. When $\mK = \eye$, $\mathbb{H}^{\mK}_0[\Prob] = \log |\text{supp}(\Prob)|$, which values rare and common species equally, while $ \mathbb{H}^{\mK}_\infty[\Prob] = - \log \max_{i \in \text{supp}(\vp)} (\Kp)_i$ only considers the most common elements. Thus, in principle, the problem of finding a maximum entropy distribution depends on the choice of $\alpha$.

\begin{theorem}
	\label{thm:max_ent}
	\citep{leinster2016} Let $\Xs$ be a similarity space. There exists a probability distribution $\Prob^*_\X$ that maximizes $\KEnt[\cdot]$ for all $\alpha \in \reals_{\ge 0}$, simultaneously. Moreover, $\Ent^*_\X \triangleq \hspace{-2mm} \underset{\Prob \in \mathbf{\Delta}_{|\X|}}{\sup} \hspace{-0mm} \KEnt[\Prob]$ does not depend on $\alpha$.
\end{theorem}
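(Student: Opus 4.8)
The plan is to pass from $\KEnt$ to the exponentiated ``diversity'' $D_\alpha[\Prob] \triangleq \exp(\KEnt[\Prob]) = M_{\vp,\,1-\alpha}\!\left(\tfrac{1}{\Kp}\right)$. Since $\log$ is strictly increasing, maximizing $\KEnt$ is equivalent to maximizing $D_\alpha$, so it suffices to exhibit a single distribution that maximizes $D_\alpha$ for every $\alpha$ and whose maximal value is independent of $\alpha$. Two preliminaries frame the argument: by the continuity property $D_\alpha$ attains its supremum on the compact simplex $\Spx$ for each fixed $\alpha$, and by the $\alpha$-monotonicity property the map $\alpha \mapsto \sup_\Prob \KEnt[\Prob]$ is non-increasing (a supremum of non-increasing functions). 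The non-increase means the binding difficulty will sit at small $\alpha$, where the supremum is largest.

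The central object is what I will call a \emph{balanced} distribution $\Prob^*$: one whose similarity profile $\mK\vp^*$ equals a single constant $c$ on $\mathrm{supp}(\Prob^*)$ and satisfies $(\mK\vp^*)_i \ge c$ for all $i$. The first key step is a direct computation: since the active entries $1/(\mK\vp^*)_i$ are all equal to $1/c$, the weighted power mean collapses to that common value regardless of its order, so $D_\alpha[\Prob^*] = 1/c$ and $\KEnt[\Prob^*] = -\log c$ for every $\alpha \ge 0$. This one computation already delivers the $\alpha$-independence of the optimal value, \emph{provided} $\Prob^*$ is optimal; hence the whole theorem reduces to two assertions: balanced distributions are simultaneous maximizers, and a balanced distribution exists.

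The second key step, the maximality lemma ($D_\alpha[\Prob] \le 1/c$ for all $\Prob$ and all $\alpha$), is where I expect the main obstacle to lie, precisely because the awkward feature is that $\Prob$ appears both as the averaging weight and inside the profile $\mK\Prob$, and because monotonicity forces us to control the worst case at small $\alpha$. In the positive-definite regime — which, as noted above, includes every metric space of negative type — the order $\alpha=2$ is transparent: $D_2[\Prob] = 1/\langle \vp, \Kp\rangle$, so maximizing $D_2$ means minimizing the convex quadratic $\langle \vp, \Kp\rangle$ over $\Spx$, and convexity gives the clean estimate $\langle \vp, \Kp\rangle \ge 2\langle \mK\vp^*, \vp\rangle - \langle \vp^*, \mK\vp^*\rangle \ge 2c - c = c$, using $\mK\vp^* \ge c\,\mathbf{1}$ entrywise and $\langle \vp^*, \mK\vp^*\rangle = c$. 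The genuinely delicate part is upgrading from this single order to \emph{all} $\alpha$; I would reproduce the argument of \citep{leinster2016}, which bounds $\sum_i \vp_i (\Kp)_i^{\,\alpha-1}$ against $c^{\alpha-1}$ by a careful Jensen / power-mean estimate exploiting the off-support slack $(\mK\vp^*)_i \ge c$, with the sign of $\alpha-1$ dictating the direction of the inequality.

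The final step is existence of a balanced distribution. In the positive-definite case this comes for free: the convex quadratic program of minimizing $\langle \vp, \Kp\rangle$ over $\Spx$ attains its minimum, and its Karush--Kuhn--Tucker stationarity and complementary-slackness conditions are \emph{exactly} the balanced conditions. For an arbitrary similarity matrix $\mK$ I would instead use a \emph{weighting}: a vector $\vect{w}$ solving $\mK\vect{w} = \mathbf{1}$ on a subset $B \subseteq \X$; normalizing a nonnegative weighting yields $\vp^* = \vect{w}/\langle \mathbf{1}, \vect{w}\rangle$, supported on $B$ with constant profile $c = 1/\langle \mathbf{1}, \vect{w}\rangle$. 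Selecting $B$ to maximize the magnitude $\langle \mathbf{1}, \vect{w}\rangle$ among subsets admitting a nonnegative weighting (nonempty, since singletons qualify, and attained by finiteness of $\X$), I would then verify the off-support inequality $(\mK\vp^*)_i \ge c$ at this maximizer, since a violation would allow one to enlarge $B$ and strictly increase the magnitude, contradicting maximality. Combining the three steps, the resulting $\Prob^*$ is the desired simultaneous maximizer and $\mathbb{H}^*_\X = -\log c$ is independent of $\alpha$.
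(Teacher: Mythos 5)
First, a calibration: the paper never proves this theorem --- it is imported from \citet{leinster2016}, with the surrounding text explicitly deferring all proofs to that reference --- so your proposal can only be judged against the cited source. Its skeleton does match that source's: your ``balanced'' distributions are Leinster--Meckes' invariant/maximizing distributions, your computation that such a $\Prob^*$ has $\exp(\KEnt[\Prob^*]) = 1/c$ for \emph{every} $\alpha$ is correct and is precisely where the $\alpha$-independence of the optimal value comes from, and the reduction of the theorem to ``a balanced distribution exists and dominates every distribution at every order'' is the right architecture. The genuine gap is the maximality lemma, and it sits exactly where you admit you expect trouble. Your complete argument covers only $\alpha = 2$ (and, via Jensen, $\alpha \ge 2$), and only under positive definiteness --- an assumption the theorem does not make. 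For the decisive range $\alpha \in [0,1]$ (decisive because, as you note, $\alpha$-monotonicity makes $\sup_\Prob \KEnt[\Prob]$ largest there), you write that you ``would reproduce the argument of \citet{leinster2016}''; since the statement being proved \emph{is} that argument's conclusion, this is circular and leaves the heart of the theorem unproved. Nor does the hinted Jensen/power-mean route close it: at $\alpha = 0$ the required inequality is $\sum_i \vp_i (\mK \vp^*)_i / (\Kp)_i \le 1$ for all $\vp$, and the entrywise slack $(\mK\vp^*)_i \ge c$ combined with Jensen goes the wrong way. Concretely, writing $T_j \triangleq \sum_i \mK_{ij} \vp_i / (\Kp)_i$, one has $\langle \vp, T \rangle = 1$ exactly and the requirement becomes $\langle \vp^* - \vp, T \rangle \le 0$ --- a global inequality about $\mK$ of the same flavor as the Bregman positivity that this very paper can only \emph{conjecture} (Conjecture 1) even in the positive-definite case. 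This is why the Leinster--Meckes proof is a multi-lemma, several-page argument rather than a power-mean estimate.

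A secondary gap lies in your existence step for general $\mK$: the claim that a violation of $(\mK\vp^*)_j \ge c$ off the support ``would allow one to enlarge $B$ and strictly increase the magnitude'' is asserted, not proved, and it is not obvious --- adjoining a point to $B$ need not preserve the existence of a nonnegative weighting on the enlarged set, let alone increase its magnitude. Ironically, this is the repairable gap, and your own quadratic-program idea fixes it without positive definiteness: $\langle \vp, \Kp \rangle$ is continuous on the compact simplex, so a global minimizer $\vp^*$ exists, and because the constraints are affine the first-order conditions $\langle \mK\vp^*, \vp - \vp^* \rangle \ge 0$ for all $\vp \in \Spx$ hold there with no convexity needed; taking $\vp = \vect{e}_j$ gives $(\mK\vp^*)_j \ge \langle \vp^*, \mK\vp^* \rangle \triangleq c$ for every $j$, and since the $\vp^*$-average of the entries $(\mK\vp^*)_i$ equals $c$ while every entry is $\ge c$, the profile equals $c$ on $\mathrm{supp}(\vp^*)$, i.e.\ $\vp^*$ is balanced. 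So existence can be salvaged; what remains genuinely missing in your proposal is the maximality lemma for small $\alpha$ and general $\mK$, which is the actual content of the theorem.
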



Remarkably, Thm. \ref{thm:max_ent} shows that the maximum entropy distribution is independent of $\alpha$ and thus, the maximum value of the GAIT entropy is an intrinsic property of the space: this quantity is a \textit{geometric invariant}. In fact, if $\kappa(x, y) \triangleq e^{-d(x, y)}$ for a metric $d$ on $\X$, there exist deep connections between $\Ent^*_\X$ and the magnitude of the metric space $(\X, d)$~\citep{leinster_magnitude}.

\begin{theorem}
	\label{thm:cst_sim_prof}
	~\citep{leinster2016} Let $\Prob$ be a distribution on a similarity space $\Xs$. $\KEnt[\Prob]$ is independent of $\alpha$ if and only if $\mK \Prob(x) = \mK \Prob(y)$ for all $x, y \in \text{supp}(\Prob)$.
\end{theorem}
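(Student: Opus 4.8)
The plan is to recognize the GAIT entropy as (the logarithm of) a weighted power mean and reduce the claim to the classical monotonicity of such means in their order. By Definition~\ref{def:entropy}, $\KEnt[\Prob] = \log M_{\vp,\,1-\alpha}\!\lpar \frac{1}{\Kp}\rpar$. Since $\log$ is strictly increasing and $\beta \triangleq 1-\alpha$ is a decreasing bijection between $\alpha \in [0,\infty)$ and $\beta \in (-\infty,1]$, the entropy is independent of $\alpha$ if and only if the map $\beta \mapsto M_{\vp,\beta}\!\lpar \frac{1}{\Kp}\rpar$ is constant on $(-\infty,1]$. Writing $\vx \triangleq \frac{1}{\Kp}$, everything thus reduces to characterizing when the weighted power mean $M_{\vp,\beta}(\vx)$ fails to depend on $\beta$.

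The \emph{if} direction is a direct computation: if $(\Kp)_i = c$ for every $i \in \text{supp}(\Prob)$, then $\sum_i \vp_i (\Kp)_i^{\alpha-1} = c^{\alpha-1}\sum_i \vp_i = c^{\alpha-1}$, whence $\KEnt[\Prob] = \frac{1}{1-\alpha}\log c^{\alpha-1} = -\log c$, which is manifestly free of $\alpha$. For the \emph{only if} direction I would invoke the sharp monotonicity of power means: the map $\beta \mapsto M_{\vp,\beta}(\vx)$ is non-decreasing, and it is \emph{strictly} increasing unless $\vx$ is constant across the indices $i$ with $\vp_i > 0$, in which case the mean equals that common value for all $\beta$. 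Granting this, constancy of $M_{\vp,\beta}(\vx)$ on the interval $(-\infty,1]$ forces $\vx = \frac{1}{\Kp}$ — and hence $\Kp$ — to be constant on $\text{supp}(\Prob)$, which is exactly the asserted condition $\mK\Prob(x) = \mK\Prob(y)$ for all $x,y \in \text{supp}(\Prob)$.

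The main obstacle is establishing this strict monotonicity together with its equality case. The cleanest route is to pass to logarithms: let $X$ be the random variable taking value $\vx_i$ with probability $\vp_i$ and set $\phi(\beta) \triangleq \log \Exp[X^\beta] = \log \Exp[e^{\beta \log X}]$, so that $\log M_{\vp,\beta}(\vx) = \phi(\beta)/\beta = \KEnt[\Prob]$ with $\beta = 1-\alpha$. As the cumulant generating function of $\log X$, the map $\phi$ is convex with $\phi(0) = 0$, and $\phi''(\beta)$ equals the variance of $\log X$ under the exponentially tilted law with weights proportional to $\vp_i \vx_i^\beta$; this variance is strictly positive unless $\log X$ is $\Prob$-almost surely constant, i.e.\ unless $\vx$ is constant on $\text{supp}(\Prob)$. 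Convexity together with $\phi(0)=0$ makes the chord slope $\phi(\beta)/\beta$ non-decreasing in $\beta$, and strict convexity makes it strictly increasing; this is precisely the monotonicity invoked above. The only points needing care are the degenerate value $\beta = 0$ (the geometric mean), handled by the continuity of $\KEnt[\Prob]$ in $\alpha$ already recorded in the properties list, and the bookkeeping to ensure that only indices in $\text{supp}(\Prob)$ contribute, which follows from the \textbf{Absence} property.
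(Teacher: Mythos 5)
The paper never proves this theorem itself---it is imported verbatim from Leinster (2016), and the appendix's proof section skips it---so there is no internal proof to compare against; your argument is correct and is essentially the standard one from that reference: rewrite $\KEnt[\Prob]$ as $\phi(\beta)/\beta$ with $\beta = 1-\alpha$ and $\phi$ the cumulant generating function of $\log X$ for $X = 1/(\Kp)_i$ with law $\vp$, then use that chord slopes of a convex function through the origin are non-decreasing, strictly so unless $\log X$ is a.s.\ constant. Both directions check out, including the treatment of $\beta = 0$ by continuity and the restriction to $\mathrm{supp}(\Prob)$; the only detail worth making explicit is that $(\Kp)_i \ge \mK_{ii}\vp_i = \vp_i > 0$ for $i \in \mathrm{supp}(\Prob)$ (since $\kappa(x,x)=1$), so $\log(1/(\Kp)_i)$ is finite and the tilted laws share the support of $\vp$, which is what makes ``variance zero'' equivalent to ``$\Kp$ constant on the support.''
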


Recall the behavior of the similarity profile observed for $\vp^*$ in Fig. \ref{fig:3pt_space}. Thm. \ref{thm:cst_sim_prof} indicates that this is not a coincidence: inducing a similarity profile which is constant over the support of a distribution $\Prob$ is a necessary condition for $\Prob$ being a maximum entropy distribution. In the setting $\alpha=1$ and $\mK = \eye$, the condition $\mK \vp = \vp = \lambda \mathbf{1}$ for some $\lambda \in \reals_{\ge 0}$, is equivalent to the well known fact that the uniform distribution maximizes Shannon entropy. 

\subsection{Concavity of $\mathbb{H}^{\mK}_1[\cdot]$}

A common interpretation of the entropy of a probability distribution is that of the amount of \textit{uncertainty} in the values/qualities of the associated random variable. From this point of view, the concavity of the entropy function is a rather intuitive and desirable property: ``entropy should increase under averaging''. 

Consider the case $\mK = \mathbf{I}$. $\mathbb{H}^{\mathbf{I}}_{\alpha}[\cdot]$ reduces to the the R\'enyi entropy of order $\alpha$. For general values of $\alpha$, this is not a concave function, but rather only Schur-concave~\citep{renyi_schur}. However, $\mathbb{H}^{\mathbf{I}}_1[\cdot]$ coincides with the Shannon entropy, which is a strictly concave function. Since the subsequent theoretical developments make extensive use of the concavity of the entropy, we restrict our attention to the case $\alpha=1$ for the rest of the paper. 

To the best of our knowledge, whether the entropy $\mathbb{H}^{\mK}_1[\Prob]$ is a (strictly) concave function of $\Prob$ for general similarity kernel $\mK$ is currently an open problem. Although a proof of this result has remained elusive to us, we believe there are strong indicators, both empirical and theoretical, pointing towards a positive answer. We formalize these beliefs in the following conjecture:
\begin{conjecture} Let $(\mathcal{X}, \kappa)$ be a finite similarity space with Gram matrix $\mK$. If $\mK$ is positive definite and $\kappa$ satisfies the multiplicative triangle inequality, then $\mathbb{H}_1^\mK[\cdot]$ is strictly concave in the interior of $\mathbf{\Delta}_{|\X|}$.  \label{conj:conc}
\end{conjecture}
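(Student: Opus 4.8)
The plan is to prove strict concavity by showing that the Hessian of $\mathbb{H}^{\mK}_1$ is negative definite on the tangent space $T=\{\mathbf v\in\reals^{|\X|}:\langle\mathbf 1,\mathbf v\rangle=0\}$ of the simplex at every interior point $\vp$. Writing $u_i=(\mK\vp)_i$ and $w_i=(\mK\mathbf v)_i$, differentiating $\mathbb{H}^{\mK}_1[\vp]=-\langle\vp,\log(\mK\vp)\rangle$ twice yields the quadratic form
\begin{equation}
\mathbf v^\top\big(\nabla^2\mathbb{H}^{\mK}_1[\vp]\big)\mathbf v=\sum_{i}\frac{p_i\,w_i^2}{u_i^2}-2\sum_i\frac{v_i\,w_i}{u_i}.
\end{equation}
This first step is routine, so the whole statement reduces to proving that this expression is strictly negative for every nonzero $\mathbf v\in T$.

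Next I would recast this as a contraction estimate. Introduce $\mathbf P=\mathrm{diag}(\vp)$, $\mathbf D=\mathrm{diag}(\mK\vp)$, and the matrix $\mathbf W=\mathbf D^{-1}\mK\mathbf P$, whose entries $W_{ij}=\kappa_{ij}p_j/u_i$ are nonnegative and sum to one along each row, so $\mathbf W$ is row-stochastic with $\mathbf W\mathbf 1=\mathbf 1$. Setting $\mathbf r=\mathbf P^{-1}\mathbf v$ and completing the square turns the form into
\begin{equation}
\mathbf v^\top\big(\nabla^2\mathbb{H}^{\mK}_1[\vp]\big)\mathbf v=\lVert(\mathbf W-\eye)\mathbf r\rVert_{\mathbf P}^2-\lVert\mathbf r\rVert_{\mathbf P}^2,\qquad \lVert\mathbf x\rVert_{\mathbf P}^2\triangleq\langle\mathbf x,\mathbf P\mathbf x\rangle.
\end{equation}
Hence concavity is \emph{equivalent} to the operator bound $\lVert\eye-\mathbf W\rVert_{\mathbf P\text{-op}}\le1$, and strict concavity on $T$ follows once this norm is strictly below one on the $\mathbf P$-orthogonal complement of $\mathbf 1$ (note $\mathbf r\in\mathbf 1^{\perp_{\mathbf P}}$ exactly when $\mathbf v\in T$). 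Equivalently, via the substitution $\mathbf w=\mK\mathbf v$, the condition becomes the matrix inequality $\mathbf D\mK^{-1}+\mK^{-1}\mathbf D-\mathbf P\succeq0$.

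I would then extract everything that positive-definiteness gives for free. The matrix $\mathbf W$ is similar to the symmetric matrix $\mathbf N=\mathbf D^{-1/2}\mathbf P^{1/2}\mK\mathbf P^{1/2}\mathbf D^{-1/2}$, which is positive definite precisely because $\mK\succ0$; therefore all eigenvalues of $\mathbf W$ are real and positive. Since $\mathbf W$ is row-stochastic (and irreducible whenever $\kappa>0$, as for $\kappa=e^{-d}$), Perron--Frobenius places its spectrum in $(0,1]$ with a simple top eigenvalue $1$ attained at $\mathbf 1$. Consequently the eigenvalues of $\eye-\mathbf W$ lie in $[0,1)$, its spectral radius is strictly below one, and its kernel is exactly $\mathrm{span}(\mathbf 1)$, which meets $T$ trivially. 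This is where positive-definiteness is used essentially.

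The hard part will be closing the gap between this spectral information and the operator-norm estimate that is actually required. The operator $\mathbf W$ is \emph{not} normal for $\langle\cdot,\cdot\rangle_{\mathbf P}$ (equivalently $\mK$ and $\mathbf P$ do not commute), so its $\mathbf P$-operator norm can exceed its spectral radius; the eigenvalue location alone does not imply $\lVert\eye-\mathbf W\rVert_{\mathbf P\text{-op}}\le1$. A two-point calculation confirms the bound there for every $\kappa_{xy}\in(0,1)$, but the naive route of bounding $\lVert(\mathbf W-\eye)\mathbf r\rVert_{\mathbf P}$ by Jensen's inequality discards cancellation and is provably too weak once $\kappa_{xy}$ is close to one. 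The crux is thus to control this departure from normality, and this is exactly where the multiplicative triangle inequality $\kappa_{xy}\ge\kappa_{xz}\kappa_{zy}$ must enter: it constrains the off-diagonal structure of $\mK$ and should endow $\mK^{-1}$ with diagonal-dominance / $M$-matrix structure sufficient to establish $\mathbf D\mK^{-1}+\mK^{-1}\mathbf D-\mathbf P\succeq0$ directly. Securing this structural control of $\mK^{-1}$ (or, equivalently, of the non-normal part of $\mathbf W$) is the genuine obstacle and the heart of the conjecture; the remaining strictness statement then follows because the kernel of $\eye-\mathbf W$ is only $\mathrm{span}(\mathbf 1)$.
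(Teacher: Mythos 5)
There is a genuine gap, and it is the one you yourself flag at the end: your final paragraph does not prove the inequality $\mathbf{D}\mK^{-1}+\mK^{-1}\mathbf{D}-\mathbf{P}\succeq 0$ (equivalently, the $\mathbf{P}$-operator-norm bound on $\eye-\mathbf{W}$); it only expresses the hope that the multiplicative triangle inequality ``should'' endow $\mK^{-1}$ with the required structure. That unproven step is not a technical loose end --- it is the entire content of the statement. You should also know that the paper has no proof to compare against: the statement is posed as a \emph{conjecture}, described in the main text as an open problem, and the appendix on concavity only records the authors' failed attempts (Jensen-type bounds, the bound $\log b \le \frac{b}{a}+\log a - 1$ applied to $\frac{\Kp}{\Kq}$, Donsker--Varadhan, Taylor expansion of the Bregman gap), a formula for the Hessian with positive definiteness verified only in the $2\times 2$ case, and numerical evidence from random search and gradient-based optimization. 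All downstream claims in the paper that depend on it are explicitly marked as conditional ($^\clubsuit$). So your proposal, like the paper, stops exactly at the hard step.

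That said, your reduction is correct as far as it goes, and is in fact sharper than anything recorded in the paper, so it is worth saying what checks out. The quadratic form $\sum_i p_i w_i^2/u_i^2 - 2\sum_i v_i w_i/u_i$ is the correct Hessian form; it agrees with the paper's expression for $-\nabla^2_{\Prob}\mathbb{H}_1^{\mK}[\Prob]$ once the middle term there is read as $\mK\,\text{diag}\lpar\Prob/(\mK\Prob)^2\rpar\mK$ (the printed version is missing the trailing $\mK$). The identity expressing this form as $\lVert(\mathbf{W}-\eye)\mathbf{r}\rVert_{\mathbf{P}}^2-\lVert\mathbf{r}\rVert_{\mathbf{P}}^2$ is algebraically exact; $\mathbf{W}=\mathbf{D}^{-1}\mK\mathbf{P}$ is indeed row-stochastic with real, positive spectrum contained in $(0,1]$ when $\mK\succ 0$; and your diagnosis that this spectral information cannot close the argument is accurate. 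One observation sharpens it further: $\mathbf{W}$ \emph{is} self-adjoint with respect to the inner product $\left<\mathbf{x},\mathbf{y}\right>_{\mathbf{PD}} \triangleq \mathbf{x}^\top\mathbf{P}\mathbf{D}\mathbf{y}$, since $\mathbf{P}\mathbf{D}\mathbf{W}=\mathbf{P}\mK\mathbf{P}$ is symmetric; hence $\lVert\eye-\mathbf{W}\rVert\le\rho(\eye-\mathbf{W})<1$ holds for free in the $\mathbf{PD}$-norm, and the whole difficulty is precisely the mismatch between the $\mathbf{PD}$-norm and the $\mathbf{P}$-norm in which the Hessian form lives (the equivalence constants scale like $\max_i u_i/\min_i u_i$ and destroy the bound). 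Also note that your unrestricted matrix inequality is a priori stronger than what concavity on the simplex requires, since only tangent directions matter --- a slack you may need to exploit. Until the norm mismatch, or equivalently the matrix inequality restricted to the tangent space, is controlled using the transitivity structure of $\kappa$, the statement remains exactly what the paper says it is: an open conjecture.
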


\vspace{-0.7ex}

Fig. \ref{fig:concave_and_mog} shows the relationship between the linear approximation of the entropy and the value of the entropy over segment of the convex combinations between two measures. This behavior is consistent with our hypothesis on the concavity of $\mathbb{H}_1^\mK[\cdot]$.

We emphasize the fact that the presence of the term $\log(\Kp)$ complicates the analysis, as it incompatible with most linear algebra-based proof techniques, and it renders most information theory-based bounds too loose, as we explain in App~\ref{sec:concavity_details}. Nevertheless, we provide extensive numerical experiments in App.~\ref{sec:concavity_details} which support our conjecture.  In the remainder of this work, claims \textit{dependent} on this conjecture are labelled~$^\clubsuit$. 
 

\vspace{-2ex}

\subsection{Comparing probability distributions}

The previous conjecture implies that    $-\mathbb{H}^{\mathbf{K}}_1[\cdot]$ is a strictly convex function. This naturally suggests considering the Bregman divergence induced by the negative GAIT entropy. This is analogous to the construction of the Kullback-Leibler divergence as the Bregman divergence induced by the negative Shannon entropy.

Straightfoward computation shows that the gap between the negative GAIT entropy at $\vp$ and its linear approximation around $\vq$ evaluated at $\vp$ is:
\begin{align*}
    &-\mathbb{H}^{\mathbf{K}}_1[\vp] - \lspar - \mathbb{H}^{\mathbf{K}}_1[\vq]  + \left < -\nabla_{\vq} \mathbb{H}^{\mathbf{K}}_1[\vq], \, \vp - \vq \right > \rspar \\
    &= 1 + \left< \vp, \log  \frac{\Kp}{\Kq} \right > - \left <\vq, \frac{\Kp}{\Kq} \right> \stackrel{\text{(Conj. 1)}}{\ge} 0.
\end{align*}

\begin{definition}
	\label{def:divergence} \emph{(\textbf{GAIT Divergence})}$^\clubsuit$
	The GAIT divergence between distributions $\Prob$ and $\Qrob$ on a finite similarity space $\Xs$ is given by:
	\vspace{-1.5ex}
	\begin{align}
	    \label{eq:divergence}
        \hspace{-0mm}\BDiv [ \Prob \, || \, \Qrob ] \triangleq  1 + \Exp_{\Prob} \lspar  \log  \frac{\mK\Prob}{\mK\Qrob} \rspar - \Exp_{\Qrob} \lspar \frac{\mK\Prob}{\mK\Qrob} \rspar.
    \end{align}
\end{definition}

\begin{figure}[t]
    \centering
    \includegraphics[width=0.45\columnwidth]{./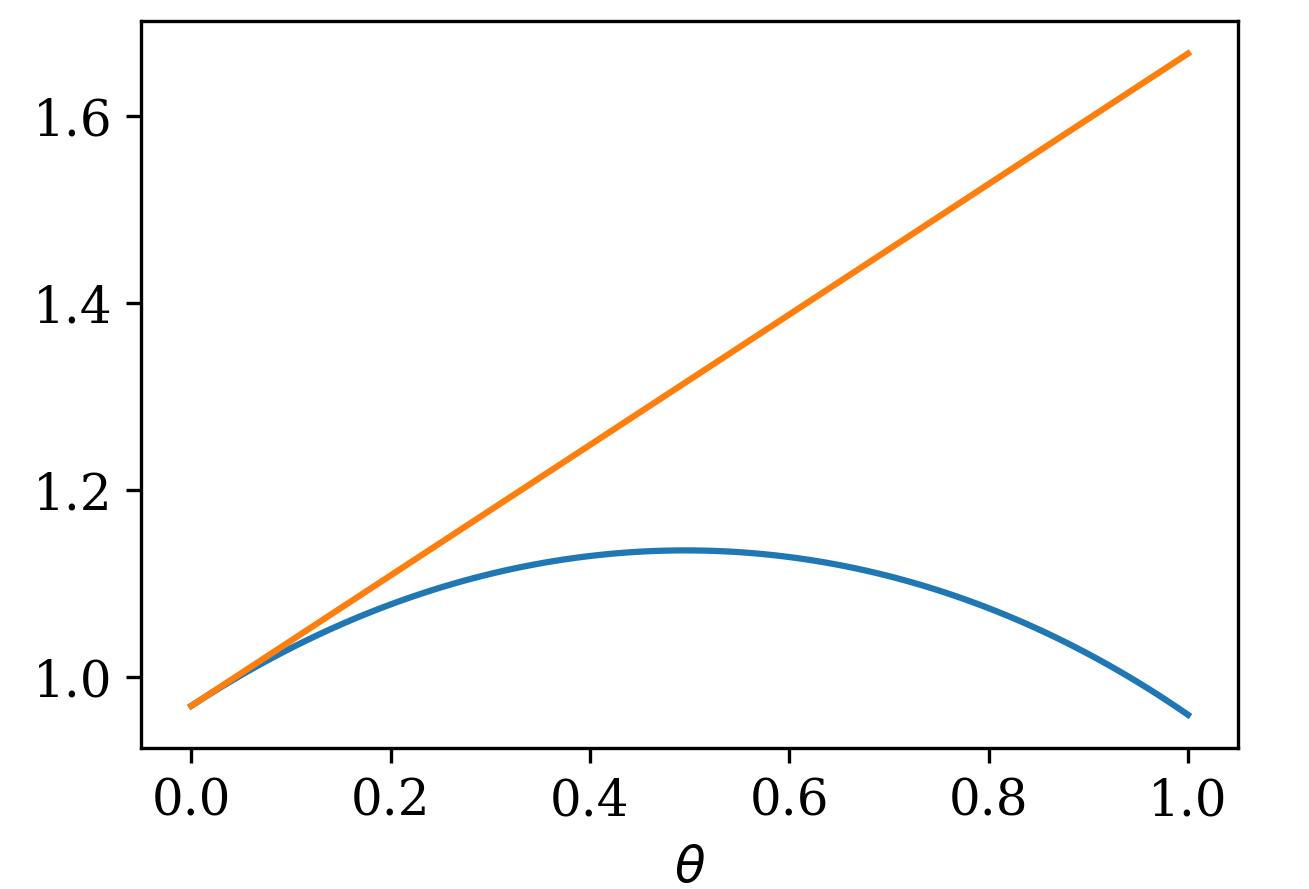}
    \includegraphics[width=0.5\columnwidth]{./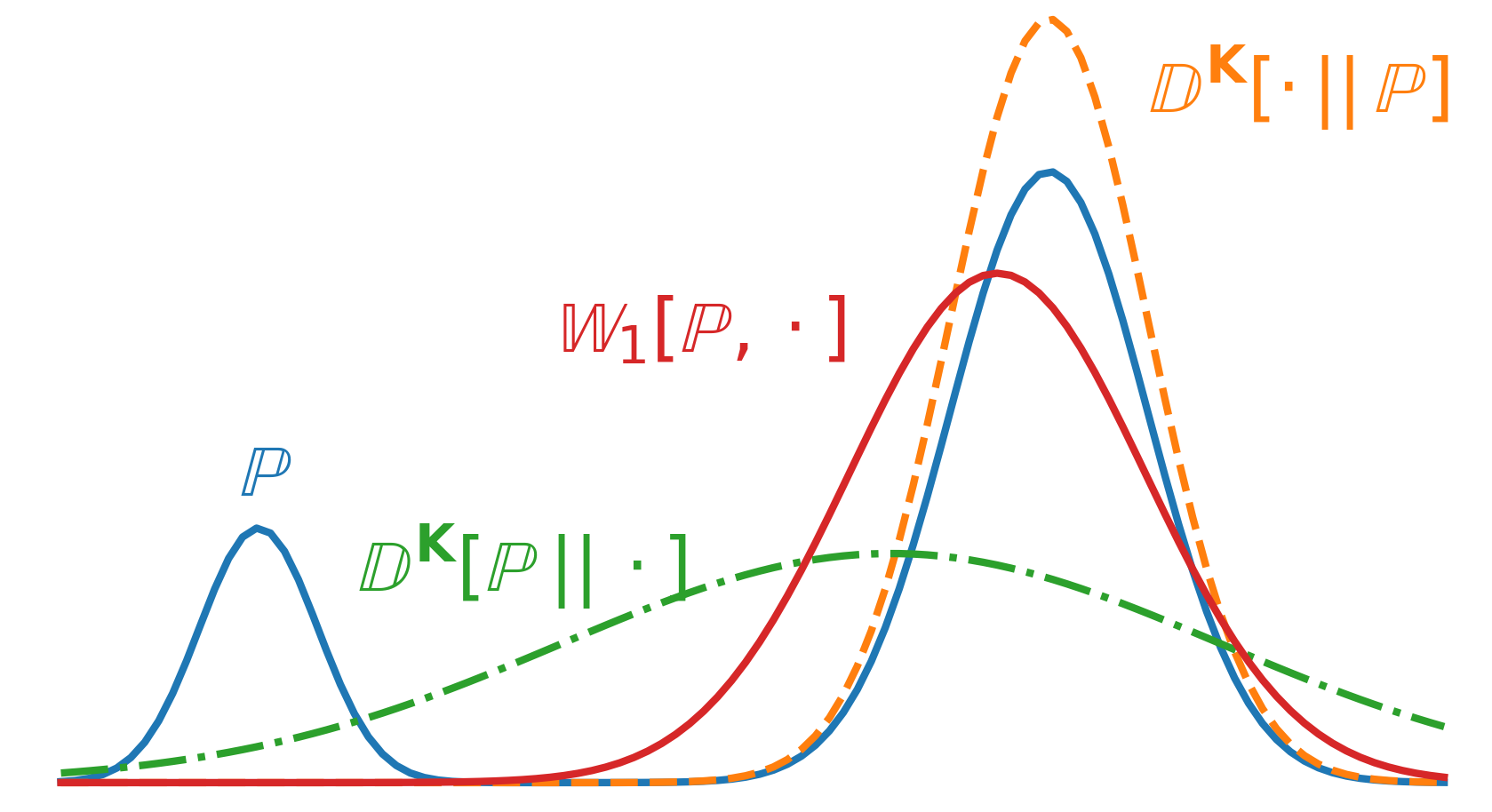}
    \caption{\textbf{Left:} The  {\color{blue}entropy} $\mathbb{H}^{\mathbf{K}}_1[(1 - \theta) \vq + \theta \vp]$ is upper-bounded by the  {\color{orange}linear approximation} at $\vq$, given by $\mathbb{H}^{\mathbf{K}}_1[\vq] + \theta  \left < \nabla_{\vq} \mathbb{H}^{\mathbf{K}}_1[\vq], \, \vp - \vq \right >$. \textbf{Right:} Optimal Gaussian model under various divergences on a simple mixture of Gaussians task under an RBF kernel. $\mathbb{W}_1$ denotes the 1-Wasserstein distance.}
    \label{fig:concave_and_mog}
\end{figure}

When $\mK = \eye$, the GAIT divergence reduces to the Kullback-Leibler divergence. Compared to the family of $f$-divergences ~\citep{f_div}, this definition computes point-wise ratios between the similarity profiles $\mK \Prob$ and $\mK \Qrob$ rather than the probability masses (or more generally, Radon-Nikodym w.r.t. a reference measure). We highlight that $\mK \Prob(x)$ provides a \emph{global} view of the space via the Gram matrix from the perspective of $x \in \X$.  Additionally, the GAIT divergence by definition inherits all the properties of Bregman divergences. In particular, $\BDiv[\Prob \, || \, \Qrob]$ is convex in $\Prob$.

\textbf{Forward and backward GAIT divergence.}
Like the Kullback-Leibler divergence, the GAIT divergence is not symmetric and different orderings of the arguments induce different behaviors. Let $\mathcal{Q}$ be a family of distributions in which we would like to find an approximation $\Qrob$ to $\Prob \notin \mathcal{Q}$. $\arg \min_\mathcal{Q} \BDiv[\cdot \, || \, \Prob]$ concentrates around one of the modes of $\Prob$; this behavior is known as \textit{mode seeking}. On the other hand, $\arg \min_\mathcal{Q} \BDiv[\Prob \, || \, \cdot]$ induces a \textit{mass covering} behavior. Fig. \ref{fig:concave_and_mog} displays this phenomenon when finding the best (single) Gaussian approximation to a mixture of Gaussians.

\begin{table*}[h!]
\centering
\caption{Definitions of GAIT mutual information and joint entropy.} \label{tab:definitions}
\begin{tabular}{|r|l|} \hline
     \textbf{Joint Entropy} &  $\mathbb{H}^{\mK \otimes \mLam}[X, Y] \triangleq - \Exp_{x, y \sim \Prob}[\log ([\mK \otimes \mLam] \Prob)_{x, y}]$ \\ \hline
     \textbf{Conditional Entropy} & $\mathbb{H}^{\mK, \mLam}[X | Y] \triangleq \mathbb{H}^{\mK \otimes \mLam}[X, Y] - \mathbb{H}^{\mLam}[Y]$ \\ \hline
     \textbf{Mutual Information} & $\mathbb{I}^{\mK, \mLam}[X ; Y] \triangleq \mathbb{H}^{\mK}[X] + \mathbb{H}^{\mLam}[Y] - \mathbb{H}^{\mK \otimes \mLam}[X, Y]$\\ \hline
     \textbf{Conditional M.I.} & $\mathbb{I}^{\mK, \mLam, \mThe}[X ; Y | Z] \triangleq \mathbb{H}^{\mK, \mThe}[X|Z] + \mathbb{H}^{\mLam, \mThe}[Y|Z] - \mathbb{H}^{\mK \otimes \mLam, \mThe}[X, Y | Z]$\\ \hline
\end{tabular}{}
\end{table*}

\textbf{Empirical distributions.} Although we have developed our divergence in the setting of distributions over a finite similarity space, we can effectively compare two empirical distributions over a continuous space. Note that if an arbitrary $x \in \X$ (or more generally a measurable set $E$ for a given choice of $\sigma$-algebra) has measure zero under both $\mu$ and $\nu$, then such $x$ (or $E$) is irrelevant in the computation of $\BDiv [ \Prob \, || \, \Qrob ]$. Therefore, when comparing empirical measures, the possibly continuous expectations involved in the extension of Eq. \eqref{def:divergence} to general measures reduce to finite sums over the corresponding supports.

Concretely, let $\Xs$ be a (possibly continuous) similarity space and consider the empirical distributions $\hat{\Prob} = \sum_{i = 1}^n \vp_i \delta_{x_i}$ and $\hat{\Qrob} = \sum_{j = 1}^m \vq_j \delta_{y_i}$ with $\vp \in \mathbf{\Delta}_n$ and $\vq \in \mathbf{\Delta}_m$. The Gram matrix of the restriction of $\Xs$ to $\mathcal{S} \triangleq \text{supp}(\Prob) \cup \text{supp}(\Qrob)$ has the block structure $
  \mK_{\mathcal{S}} \triangleq \begin{pmatrix} \mK_{xx} & \mK_{xy} \\  \mK_{yx} & \mK_{yy}  \end{pmatrix}$, where $\mK_{xx}$ is $n \times n$, $\mK_{yy}$ is $m \times m$ and $\mK_{xy} = \mK^\top_{yx}$.  It is easy to verify that
\begin{align}
    \BDiv [ \hat{\Prob} \, || \, \hat{\Qrob} ] = 1 + \left <\vp, \log  \frac{\mK_{xx}\vp}{\mK_{xy}\vq} \right>  -\left <\vq, \frac{\mK_{yx}\vp}{\mK_{yy}\vq}\right >.
    \label{eq:emp_loss}
\end{align}

\textbf{Computational complexity.} The computation of Eq. \eqref{eq:emp_loss} requires $\mathcal{O}(|\kappa|(n+m)^2)$ operations, where~$|\kappa|$ represents the cost of a kernel evaluation. This exhibits a quadratic behavior in the size of the union of the supports, typical of kernel-based approaches~\citep{mmd_gan}. We highlight that Eqs. \eqref{def:divergence} and \eqref{eq:emp_loss} provide a quantitative assessment of the dissimilarity between $\Prob$ and $\Qrob$ via a \emph{closed form expression}. This is in sharp contrast to the multiple variants of optimal transport which require the solution of an optimization problem or the execution of several iterations of matrix scaling algorithms. Moreover, the proposals of~\citet{cuturi_fast_bar, bregman} require at least $\Omega((|\kappa| + L)mn)$ operations, where $L$ denotes the number of Sinkhorn iterations, which is an increasing function of the desired optimization tolerance. A quantitative comparison is presented in App. \ref{sec:app_cplx}.

\textbf{Weak topology.} The type of topology induced by a divergence on the space of probability measures plays important role in the context of training neural generative models. Several studies~\citep{wgan, cuturi_learning, ot_gan} have exhibited how divergences which induce a weak topology constitute learning signals with useful gradients. In App.~\ref{sec:parallel}, we provide an example in which the GAIT divergence can provide a smooth training signal despite being evaluated on distribution with disjoint supports.

\subsection{Mutual Information}

We now use the GAIT entropy to define similarity-sensitive generalization of standard concepts related to mutual information.  As before, we restrict our attention to $\alpha=1$. This is required to get the chain rule of conditional probability for the R\'enyi entropy and to use Conj. \ref{conj:conc}. Finally, we note that although one could use the GAIT divergence to define a mutual information, in a fashion analogous to how traditional mutual information is defined via the KL divergence, the resulting object is challenging to study theoretically.  Instead, we use a definition based on entropy, which is equivalent in spaces without similarity structure.

\begin{definition} Let $X$, $Y$, $Z$ be random variables taking values on the similarity spaces $(\mathcal{X}, \kappa)$, $(\mathcal{Y}, \lambda)$, $(\mathcal{Z}, \theta)$ with corresponding Gram matrices $\mK$, $\mLam$, $\mThe$. Let $[\kappa \otimes \lambda] ((x, y), (x', y')) \triangleq \kappa(x, x') \lambda(y, y')$, and $(\mK \Qrob)_x \triangleq \Exp_{x' \sim \Qrob}[\kappa(x, x')]$ denotes the expected similarity between object $x$ and a random $\Qrob$-distributed object. Let $\Prob$ be the joint distribution of $X$ and $Y$. 
Then the joint entropy, conditional entropy, mutual information and conditional mutual information are defined following the formulas in Table. \ref{tab:definitions}.
\end{definition}
Note that the GAIT joint entropy is simply the entropy of the joint distribution with respect to the tensor product kernel.  This immediately implies monotonicity in the kernels $\mK$ and $\mLam$.  Note also that the chain rule of conditional probability holds by definition.

\begin{figure*}[t!]
    \vspace{-0.2cm}
    \centering
    \includegraphics[trim={1cm 0.5cm 1cm 1.8cm},clip,width=0.25\textwidth]{./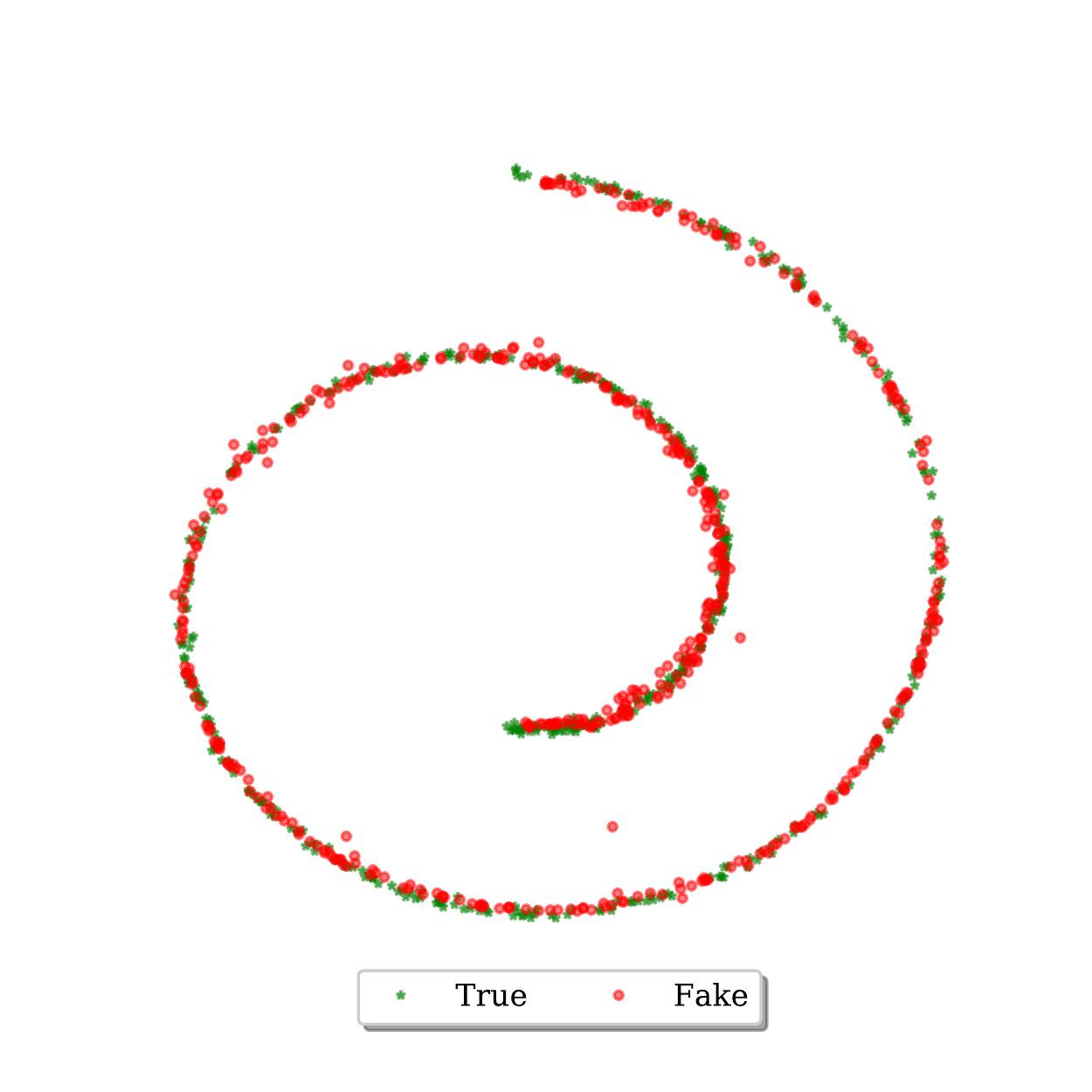}
    \hfill
    \includegraphics[width=0.29\textwidth]{./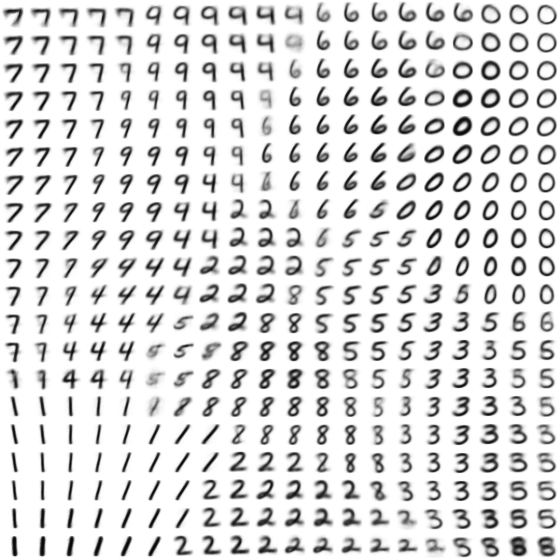} \hfill
    \includegraphics[width=0.29\textwidth]{./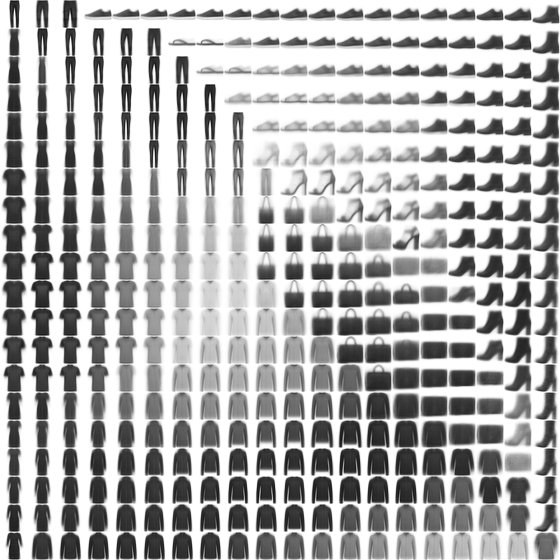}
    \hspace{0.7cm}
    \caption{\textbf{Left:} Generated Swiss roll data. \textbf{Center and Right:} Manifolds for MNIST and Fashion MNIST.}
    \label{fig:gen_models}
\end{figure*}

Subject to these definitions, similarity-sensitive versions of a number theorems analogous to standard results of information theory follow:
\begin{theorem}
	Let $X$, $Y$ be independent, then:
	\label{thm:indentropy}
	\begin{equation}
	\mathbb{H}^{\mK \otimes \mLam}[X, Y] = \mathbb{H}^{\mK}[X] + \mathbb{H}^{\mLam}[Y].
	\end{equation}
\end{theorem}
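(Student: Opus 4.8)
The plan is to work directly from the definition of the joint entropy in Table~\ref{tab:definitions} and reduce it to the sum of marginal entropies by exploiting the multiplicative structure of the tensor product kernel together with the product form of the joint law. Write $\vp$ and $\vq$ for the marginal distributions of $X$ and $Y$; independence means that the joint distribution factorizes as $\Prob(x, y) = \vp_x \vq_y$.

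The crucial step is to show that the joint similarity profile factorizes into the product of the marginal similarity profiles, namely $([\mK \otimes \mLam] \Prob)_{(x, y)} = (\Kp)_x \, (\mLam \vq)_y$. This follows by expanding the defining expectation and using that \emph{both} the kernel and the distribution split across the two coordinates:
\begin{align*}
    ([\mK \otimes \mLam] \Prob)_{(x, y)} &= \sum_{x', y'} \kappa(x, x') \lambda(y, y') \, \vp_{x'} \vq_{y'} \\
    &= \left( \sum_{x'} \kappa(x, x') \vp_{x'} \right) \left( \sum_{y'} \lambda(y, y') \vq_{y'} \right).
\end{align*}
I would emphasize that this is the single place where both hypotheses are used at once: the tensor product form $[\kappa \otimes \lambda]((x,y),(x',y')) = \kappa(x,x')\lambda(y,y')$ and the independence of $X$ and $Y$. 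Without either one the double sum would fail to separate, and the identity would break.

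Once the profile factorizes, the remainder is routine. Taking logarithms converts the product into a sum, $\log([\mK \otimes \mLam]\Prob)_{(x,y)} = \log(\Kp)_x + \log(\mLam \vq)_y$, so by linearity of expectation the joint entropy splits into two terms. Since $\log(\Kp)_x$ depends only on the first coordinate and $\log(\mLam\vq)_y$ only on the second, each term marginalizes against the product law to an expectation over the relevant single variable, recovering $-\Exp_{x\sim\vp}[\log(\Kp)_x] = \mathbb{H}^{\mK}[X]$ and $-\Exp_{y\sim\vq}[\log(\mLam\vq)_y] = \mathbb{H}^{\mLam}[Y]$. I do not anticipate a genuine obstacle here: the entire content of the theorem is the factorization of the similarity profile, and crucially no appeal to Conjecture~\ref{conj:conc} is required, since the argument is a pure identity rather than an inequality.
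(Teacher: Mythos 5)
Your proof is correct and is essentially the same argument as the paper's: both use independence together with the tensor-product structure of the kernel to factorize the joint similarity profile, then split the logarithm and marginalize each term. The only cosmetic difference is that you write finite sums and carry the minus signs explicitly, while the paper works with nested expectations (and sloppily drops the negative signs), so nothing substantive separates the two.
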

When the conditioning variables are perfectly identifiable ($\mLam = \eye$), we recover a simple expression for the conditional entropy:
\begin{theorem}
	For any kernel $\kappa$, \label{thm:conddecomp}
	\begin{equation}
	\hspace{0mm} \mathbb{H}^{\mK, \eye}[X | Y] = \Exp_{y \sim \Prob_y} [ \mathbb{H}^{\mK}[X | Y = y]].
	\end{equation}
\end{theorem}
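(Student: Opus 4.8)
The plan is to prove the identity by a direct expansion of the definitions in Table~\ref{tab:definitions}, the only real computation being the evaluation of the tensor-product similarity profile when the conditioning kernel is the identity. First I would compute $([\mK \otimes \eye]\Prob)_{x,y}$. Since $[\kappa \otimes \lambda]((x,y),(x',y')) = \kappa(x,x')\lambda(y,y')$ and $\lambda(y,y') = \mathbbm{1}[y=y']$ when $\mLam = \eye$, the expectation over $(x',y') \sim \Prob$ collapses the sum over $y'$ to the single term $y' = y$, giving $([\mK \otimes \eye]\Prob)_{x,y} = \sum_{x'} \Prob(x',y)\,\kappa(x,x')$. Factoring the joint mass as $\Prob(x',y) = \Prob_y(y)\,\Prob(x' \mid y)$ then yields the key identity $([\mK \otimes \eye]\Prob)_{x,y} = \Prob_y(y)\,(\mK \Prob_{X\mid Y=y})(x)$, where $\Prob_{X\mid Y=y}$ is the conditional law of $X$ given $Y=y$.

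Next I would substitute this into the joint entropy $\mathbb{H}^{\mK \otimes \eye}[X,Y] = -\Exp_{(x,y)\sim\Prob}[\log([\mK\otimes\eye]\Prob)_{x,y}]$ and split the logarithm of the product into $\log \Prob_y(y) + \log (\mK \Prob_{X\mid Y=y})(x)$. Marginalizing $x$ out of the first piece (using $\sum_x \Prob(x,y) = \Prob_y(y)$) produces $-\sum_y \Prob_y(y)\log \Prob_y(y)$, which is exactly $\mathbb{H}^{\eye}[Y]$, since $\eye\,\Prob_y = \Prob_y$ makes the GAIT entropy with the identity kernel coincide with the Shannon entropy of the marginal. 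Rewriting the second piece with $\Prob(x,y) = \Prob_y(y)\Prob(x\mid y)$ groups it as $\sum_y \Prob_y(y)\big[-\sum_x \Prob(x\mid y)\log(\mK\Prob_{X\mid Y=y})(x)\big] = \Exp_{y\sim\Prob_y}[\mathbb{H}^{\mK}[X\mid Y=y]]$. Hence $\mathbb{H}^{\mK\otimes\eye}[X,Y] = \mathbb{H}^{\eye}[Y] + \Exp_{y\sim\Prob_y}[\mathbb{H}^{\mK}[X\mid Y=y]]$, and subtracting $\mathbb{H}^{\eye}[Y]$ as prescribed by the definition of conditional entropy gives the claim.

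I do not expect a genuine obstacle here: the argument is bookkeeping driven by the collapse of the identity kernel inside the tensor product. The two points requiring care are (i) correctly identifying $\mathbb{H}^{\eye}[Y]$ with the ordinary Shannon entropy of $\Prob_y$, which relies only on $\eye\,\Prob_y = \Prob_y$, and (ii) the clean factorization of the joint distribution into marginal-times-conditional inside the logarithm, where the conventions $0\log 0 = 0$ handle any points outside the support. Unlike the concavity-based results elsewhere in the paper, this statement holds for \emph{every} kernel $\kappa$ and does not invoke Conjecture~\ref{conj:conc}.
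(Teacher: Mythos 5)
Your proof is correct and follows essentially the same route as the paper's: collapse the identity factor of the tensor-product kernel, factor the joint as $\Prob(x',y) = \Prob_y(y)\,\Prob(x'\mid y)$ inside the similarity profile, split the logarithm so the $\mathbb{H}^{\eye}[Y]$ terms cancel, and regroup the remainder as $\Exp_{y \sim \Prob_y}[\mathbb{H}^{\mK}[X \mid Y = y]]$. If anything, your write-up is slightly more careful with the signs from the entropy definition than the paper's own derivation, which drops them notationally; no changes are needed.
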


Using Conj. \ref{conj:conc}, we are also able to prove that conditioning on additional information cannot increase entropy, as intuitively expected.
\begin{theorem}$^\clubsuit$
    \label{thm:ineq}
	For any similarity kernel $\kappa$,
	\begin{equation}
	\mathbb{H}^{\mK, \eye}[X | Y] \le \mathbb{H}^{\mK}[X].
	\end{equation}
\end{theorem}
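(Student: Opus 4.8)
The plan is to reduce the statement to a single application of Jensen's inequality, leveraging the concavity of $\mathbb{H}^{\mK}_1[\cdot]$ granted by Conjecture \ref{conj:conc} (hence the $^\clubsuit$ label). First I would invoke Theorem \ref{thm:conddecomp}: because the conditioning variable is perfectly identifiable ($\mLam = \eye$), the left-hand side collapses to an average of per-slice entropies,
\[
\mathbb{H}^{\mK, \eye}[X \mid Y] = \Exp_{y \sim \Prob_y}\lspar \mathbb{H}^{\mK}[\Prob_{X \mid y}] \rspar,
\]
where $\Prob_{X \mid y}$ denotes the conditional law of $X$ given $Y = y$, viewed as a vector in $\Spx$. (The same Gram matrix $\mK$ governs each slice, which is consistent by the \emph{Absence} property, since restricting to the support of $\Prob_{X\mid y}$ leaves the entropy unchanged.)

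The key structural observation is that the marginal of $X$ is a $\Prob_y$-mixture of these conditionals. By the law of total probability, for each $x$,
\[
\Prob_X(x) = \sum_y \Prob_y(y)\, \Prob_{X \mid y}(x), \quad \text{i.e.} \quad \Prob_X = \Exp_{y \sim \Prob_y}\lspar \Prob_{X \mid y} \rspar.
\]
Consequently the right-hand side $\mathbb{H}^{\mK}[X] = \mathbb{H}^{\mK}\lspar \Exp_{y}[\Prob_{X \mid y}] \rspar$ is precisely the entropy evaluated at the $\Prob_y$-average of the conditional distributions, so the two sides of the claimed inequality compare the average of the entropies against the entropy of the average.

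The final step applies Jensen: since $\mathbb{H}^{\mK}_1[\cdot]$ is concave on $\Spx$ under Conjecture \ref{conj:conc}, the entropy of the average dominates the average of the entropies,
\[
\Exp_{y}\lspar \mathbb{H}^{\mK}[\Prob_{X \mid y}] \rspar \le \mathbb{H}^{\mK}\lspar \Exp_{y}[\Prob_{X \mid y}] \rspar,
\]
which is exactly the desired bound.

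The \textbf{main obstacle} is entirely absorbed into the concavity hypothesis: once concavity is granted, the deduction is a one-line consequence of Jensen, which is why the theorem carries the $^\clubsuit$ annotation and the genuinely hard, still-open content lives in Conjecture \ref{conj:conc} rather than in the present argument. The one piece of bookkeeping I would check carefully is that Theorem \ref{thm:conddecomp} yields a convex combination with exactly the marginal weights $\Prob_y(y)$, so that the mixture identity for $\Prob_X$ matches, weight for weight, the expectation defining the conditional entropy; this alignment is routine but essential for Jensen to apply with the correct averaging measure.
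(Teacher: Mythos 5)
Your proposal matches the paper's own proof exactly: it likewise applies Theorem \ref{thm:conddecomp} to write $\mathbb{H}^{\mK, \eye}[X \mid Y]$ as $\Exp_{y \sim \Prob_y}[\mathbb{H}^{\mK}[X \mid Y = y]]$, then uses Jensen's inequality via the concavity granted by Conjecture \ref{conj:conc}, together with the observation that the marginal of $X$ is the $\Prob_y$-mixture of the conditionals. Your additional bookkeeping remarks (the mixture weights and the Absence property) only make explicit what the paper's one-line proof leaves implicit.
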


Theorem \ref{thm:ineq} is equivalent to Conj. \ref{conj:conc} when considering a categorical $Y$ mixing over distributions $\{X_y\}_{y \in \Y}$.

Finally, a form of the data processing inequality (DPI), a fundamental result in information theory governing the mutual information of variables in a Markov chain structure, follows from Conj. \ref{conj:conc}.
\begin{theorem}  \emph{(\textbf{Data Processing Inequality})}$^\clubsuit$. \\
	If $X \rightarrow Y \rightarrow Z$ is a Markov chain, then \label{thm:dpi} 
	\begin{equation}
	\mathbb{I}^{\mK, \mThe}[X ; Z]  \le \mathbb{I}^{\mK, \mLam}[X ; Y] + 
	\mathbb{I}^{\mK, \mThe, \mLam}[X ; Z | Y].
	\end{equation}
\end{theorem}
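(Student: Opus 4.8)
The plan is to reduce the inequality, by pure bookkeeping with the definitions in Table~\ref{tab:definitions}, to the non-negativity of a GAIT conditional mutual information, and then to derive that non-negativity from Conjecture~\ref{conj:conc}.

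First I would expand all four information quantities using the joint/conditional relation $\mathbb{H}^{\mathbf{A},\mathbf{B}}[U\mid V]=\mathbb{H}^{\mathbf{A}\otimes\mathbf{B}}[U,V]-\mathbb{H}^{\mathbf{B}}[V]$ and the freedom --- granted by the Symmetry property of the GAIT entropy --- to permute the factors of a tensor-product kernel, so that for instance $\mathbb{H}^{\mThe\otimes\mLam}[Z,Y]=\mathbb{H}^{\mLam\otimes\mThe}[Y,Z]$ and similarly for the triple kernel $\mK\otimes\mThe\otimes\mLam$. Substituting the definition of $\mathbb{I}^{\mK,\mThe,\mLam}[X;Z\mid Y]$ and then collecting terms, the single-variable entropy $\mathbb{H}^{\mK}[X]$, the pair entropy $\mathbb{H}^{\mK\otimes\mLam}[X,Y]$, and $\mathbb{H}^{\mLam}[Y]$ all cancel, and the excess of the right-hand side over the left-hand side collapses to $\mathbb{H}^{\mLam,\mThe}[Y\mid Z]-\mathbb{H}^{\mLam,\mK\otimes\mThe}[Y\mid X,Z]$. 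A second short computation recognizes this difference as the conditional mutual information $\mathbb{I}^{\mK,\mLam,\mThe}[X;Y\mid Z]$. The claim is therefore equivalent to
\begin{equation*}
\mathbb{H}^{\mLam,\mK\otimes\mThe}[Y\mid X,Z]\;\le\;\mathbb{H}^{\mLam,\mThe}[Y\mid Z],
\end{equation*}
that is, to the statement that conditioning $Y$ additionally on $X$ cannot raise its conditional entropy. I emphasize that this reduction is an algebraic identity and does not itself consume the Markov hypothesis; the chain $X\to Y\to Z$ enters only in the final step and in the interpretation of the surviving conditional term.

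The essential step is thus a conditional version of Theorem~\ref{thm:ineq}, and my approach is to push the concavity argument behind that theorem one level deeper: fixing $Z$, I would view the $Z$-conditional laws of $Y$ as a mixture indexed by $X$ and apply Conjecture~\ref{conj:conc} in its ``mixing cannot decrease entropy'' form (the categorical reading spelled out after Theorem~\ref{thm:ineq}). The main obstacle is that here the conditioning variables $Z$ and $(X,Z)$ carry the non-identity kernels $\mThe$ and $\mK\otimes\mThe$, so Theorem~\ref{thm:conddecomp} --- which turns a conditional entropy into a bona fide average of per-value entropies, and which requires an $\eye$ kernel on the conditioning side --- is not directly available. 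To route around this I would exploit the Markov factorization $\Prob(x,y,z)=\Prob(x)\Prob(y\mid x)\Prob(z\mid y)$, expressing both conditional entropies through the shared $Y\to Z$ channel and reducing the comparison to an application of concavity whose mixing weights are the posterior of $X$, which the Markov property collapses to $\Prob(x\mid y)$. Verifying that this slice-wise use of Conjecture~\ref{conj:conc} aggregates correctly against a non-identity conditioning kernel is the delicate point on which the whole argument turns.
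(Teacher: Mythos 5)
Your algebraic reduction is correct, and it is in substance identical to the paper's proof: the paper establishes a chain rule for GAIT mutual information (Lemma~\ref{thm:mi_chain}) and applies it to the two decompositions of $\mathbb{I}^{\mK,\mLam,\mThe}[X;Y,Z]$, obtaining the identity $\mathbb{I}^{\mK,\mLam}[X;Y] + \mathbb{I}^{\mK,\mThe,\mLam}[X;Z\mid Y] = \mathbb{I}^{\mK,\mThe}[X;Z] + \mathbb{I}^{\mK,\mLam,\mThe}[X;Y\mid Z]$, and then concludes from $\mathbb{I}^{\mK,\mLam,\mThe}[X;Y\mid Z]\ge 0$. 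Your expand-and-cancel computation produces exactly this identity (your surviving difference $\mathbb{H}^{\mLam,\mThe}[Y\mid Z]-\mathbb{H}^{\mLam,\mK\otimes\mThe}[Y\mid X,Z]$ is precisely $\mathbb{I}^{\mK,\mLam,\mThe}[X;Y\mid Z]$, as you note), and your observation that the Markov hypothesis is never consumed by the bookkeeping also matches the paper, whose proof likewise never invokes it.

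The difference lies in how the two arguments handle the crucial non-negativity. The paper simply asserts $\mathbb{I}^{\mK,\mLam,\mThe}[X;Y\mid Z]\ge 0$ as a conjecture-dependent ($\clubsuit$) fact and stops; it gives no derivation of this from Conjecture~\ref{conj:conc}. You, by contrast, correctly identify that this is where all the difficulty lives: Theorem~\ref{thm:ineq} only establishes that conditioning cannot raise entropy when the conditioning variable carries the kernel $\eye$, because its Jensen argument runs through Theorem~\ref{thm:conddecomp}, which breaks down for the non-identity kernels $\mThe$ and $\mK\otimes\mThe$ appearing here. Your proposed workaround via the Markov factorization is, by your own admission, unverified, so your proof is incomplete at exactly this step --- but the paper's proof has the same hole, concealed behind the $\clubsuit$ notation. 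In short: same reduction, same unproven kernel-conditioned non-negativity at the core; your writeup is the more explicit about what Conjecture~\ref{conj:conc} would actually need to deliver, and filling that step rigorously would strengthen the paper's own argument, not just yours.
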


Note the presence of the additional term $\mathbb{I}^{\mK, \mLam, \mThe}[X ; Z | Y]$ relative to the non-similarity-sensitive DPI given by $\mathbb{I}[X ; Z]  \le \mathbb{I}[X ; Y]$.  Intuitively, this  can be understood as reflecting that conditioning on $Y$ does not convey all of its usual ``benefit'', as some information is lost due to the imperfect identifiability of elements in $Y$. When $\mLam = \eye$ this term is 0, and the original DPI is recovered.

	\section{Related work}

\textbf{Theories of Information.} Information theory is ubiquitous in modern machine learning: from variable selection via information gain in decision trees \citep{uml}, to using entropy as a regularizer in reinforcement learning \citep{entropy_rl}, to rate-distortion theory for training generative models \citep{broken_elbo}. To the best of our knowledge, the work of \citet{leinster2012, leinster2016} is the first formal treatment of information-theoretic concepts in spaces with non-trivial geometry, albeit in the context of ecology.  

\textbf{Comparing distributions.} The ability to compare probability distributions is at the core of statistics and machine learning. Although traditionally dominated by maximum likelihood estimation, a significant portion of research on parameter estimation has shifted towards methods based on optimal transport, such as the Wasserstein distance \citep{Villani2008OptimalNew}. Two main reasons for this transition are (i) the need to deal with degenerate distributions (which might have density only over a low dimensional manifold) as is the case in the training of generative models \citep{gan, wgan, ot_gan}; and (ii) the development of alternative formulations and relaxations of the original optimal transport objective which make it feasible to approximately compute in practice \citep{cuturi_fast_bar, cuturi_learning}.


\textbf{Relation to kernel theory.} The theory we have presented in this paper revolves around a notion of similarity on $\X$. The operator $\mK \Prob$ corresponds to the embedding of the space of distributions on $\X$ into a reproducing kernel Hilbert space used for comparing distributions without the need for density estimation~\citep{hsed}. In particular, a key concept in this work is that of a characteristic kernel, i.e., a kernel for which the embedding is injective. Note that this condition is equivalent to the positive definiteness of the Gram matrix $\mK$ imposed above. Under these circumstances, the metric structure present in the Hilbert space can be imported to define the Maximum Mean Discrepancy distance between distributions \citep{ak2st}. Our definition of divergence also makes use of the object $\mK \Prob$, but has motivations rooted in information theory rather than functional analysis. We believe that the framework proposed in this paper has the potential to foster connections between both fields. 
	\section{Experiments}\label{sec:experiments}

\begin{figure*}[t!]
    \centering
    \begin{minipage}{.28\textwidth}
        \vspace{-0.8cm}
        \hspace{-2mm}
        \includegraphics[trim={2mm 8mm 5mm 6mm},clip,width=1\columnwidth]{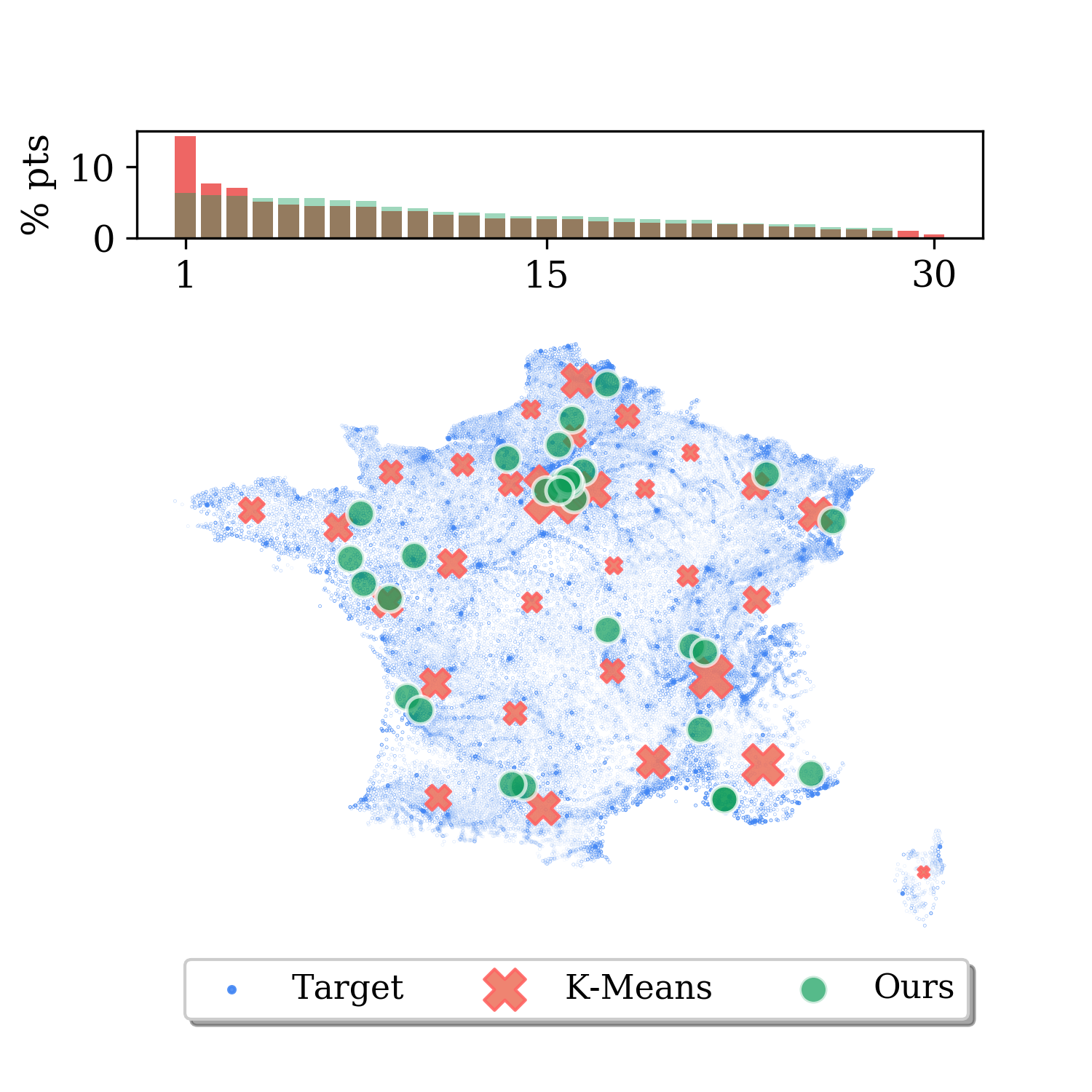} 
        \caption{Approximating a discrete measure with a uniform empirical measure.}
        \label{fig:atomic_approx}
    \end{minipage}
    \hfill
    \begin{minipage}{.29\textwidth}
        \vspace{-0.5cm}
        \includegraphics[trim={0mm 0mm 0mm 0mm},clip, width=0.98\columnwidth]{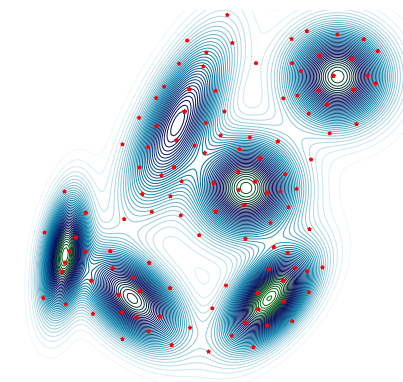}
        \vspace{-3mm}
        \caption{Approximating a continuous density with a finitely-supported measure.}
        \label{fig:supersamples}
    \end{minipage}\hfill
    \begin{minipage}{.3\textwidth}
        \centering
    	\includegraphics[height=0.37\textwidth, width=\columnwidth]{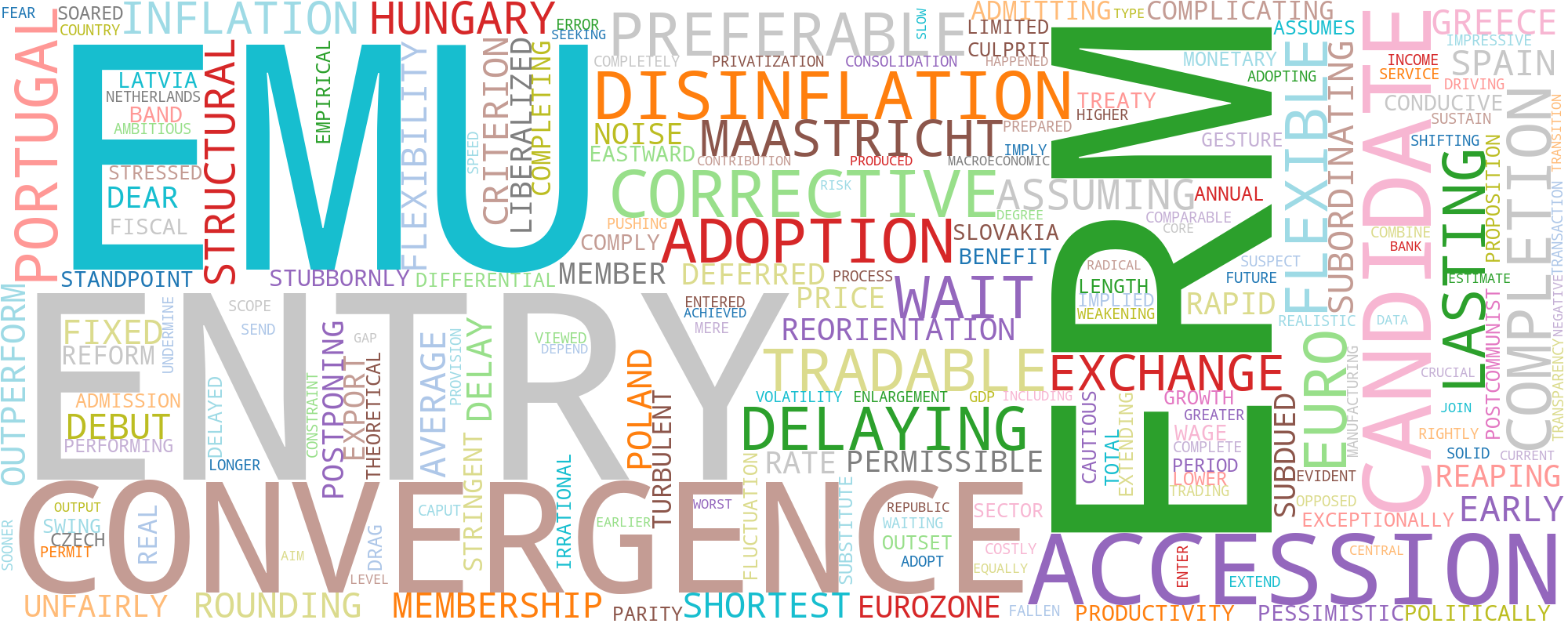} \\
    	\vspace{2mm}
    	\includegraphics[height=0.37\textwidth,width=0.48\columnwidth]{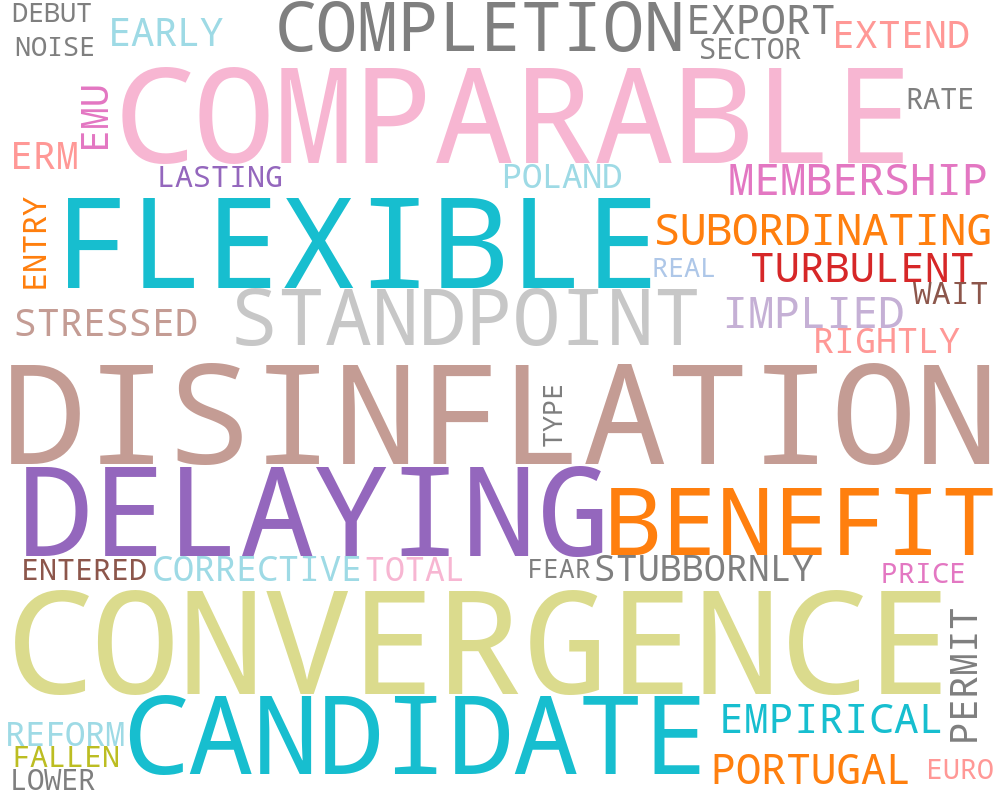} \hfill \includegraphics[height=0.37\textwidth,width=0.48\columnwidth]{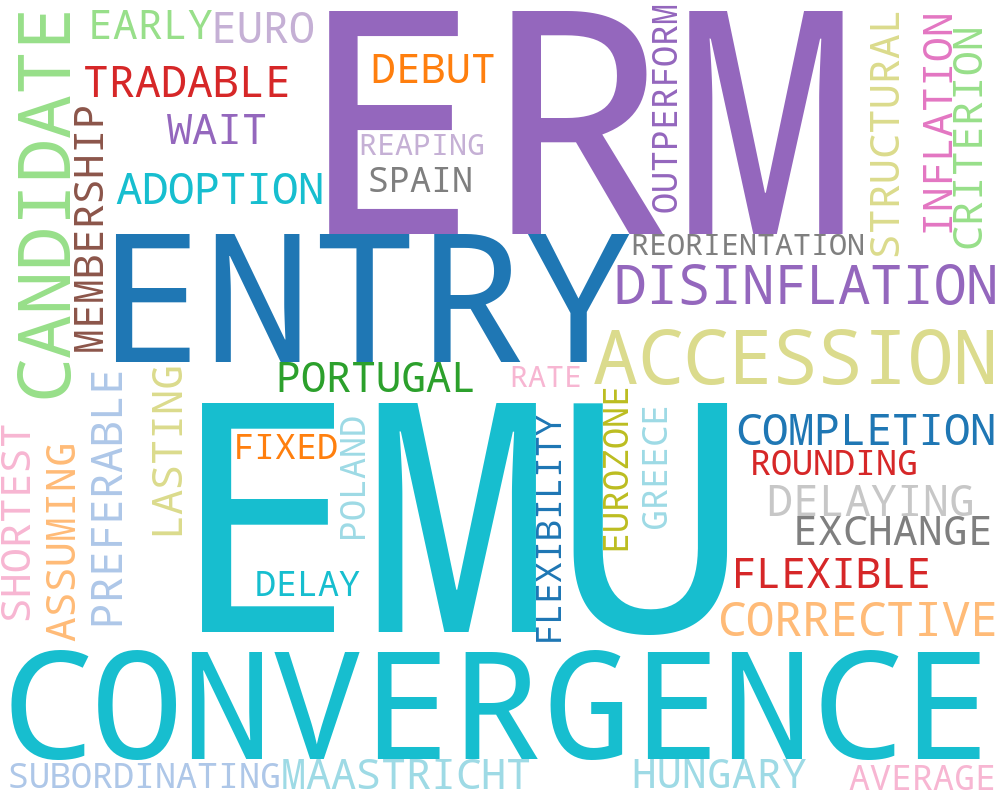}
    	\caption{\textbf{Top:} Original word cloud. \textbf{ Left:} Sparse approximation with support size 43. \textbf{Right:} Top 43 original TF-IDF words.}
    	\label{fig:wordclouds}
    \end{minipage}
\end{figure*}

    

\subsection{Comparison to Optimal Transport} \label{sec:comparison_ot}

\textbf{Image barycenters.} Given a collection of measures $\mathcal{P} = \{\Prob_i\}^n_{i=1}$ on a similarity space, we define the barycenter of $\mathcal{P}$ with respect to the GAIT divergence as $ \arg \min_{\Qrob} \frac{1}{n} \sum_{i=1}^n \BDiv[\Prob_i \, || \, \Qrob]$.  This is inspired by the work of \citet{cuturi_fast_bar} on Wasserstein barycenters. Let the space $\X = [1:28]^2$ denote the pixel grid of an image of size $28 \times 28$. We consider each image in the MNIST dataset as an empirical measure over this grid in which the probability of location $(x, y)$ is proportional to the intensity at the corresponding pixel. In other words, image $i$ is considered as a measure $\Prob_i \in \mathbf{\Delta}_{|\X|}$. Note that in this case the kernel is a function of the distance between two pixels in the grid (two elements of $\X$), rather than the distance between two different images. We use a Gaussian kernel, and compute $\mK \Prob_i$ by convolving the image $\Prob_i$ with an adequate filter, as proposed by \citet{conv_ot}.

\begin{figure}[h]
    \centering
    \includegraphics[trim={0.5cm 0.0cm 0.0cm 0.2cm},clip, width=1.0\columnwidth]{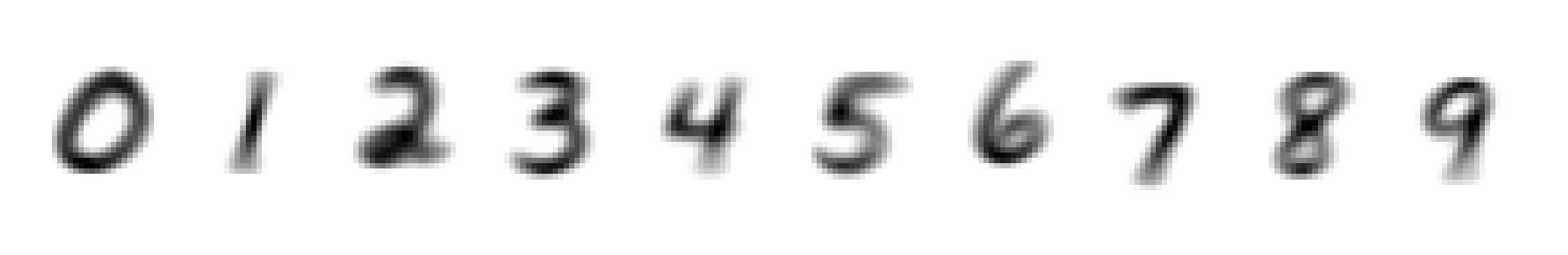}\\
    \includegraphics[trim={0.5cm 0.5cm 0.1cm 0.5cm},clip,width=0.97\columnwidth]{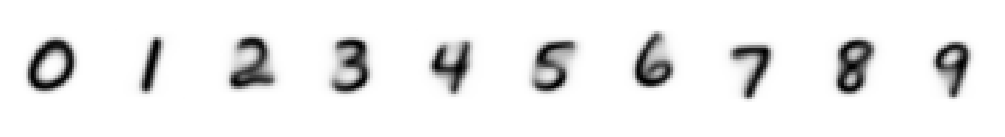}
    \caption{Barycenters for each class of MNIST with our divergence (top) and the method of \citet{cuturi_fast_bar} (bottom).}
    \label{fig:mnist_barys}
\end{figure}

Fig. \ref{fig:mnist_barys} shows the result of gradient-based optimization to find barycenters for each of the classes in MNIST~\citep{mnist} along with the corresponding results using the method of \citet{cuturi_fast_bar}. We note that our method achieves results of comparable quality. Remarkably, the time for computing the barycenter for each class on a single CPU is reduced from 90 seconds using the efficient method proposed by~\citet{cuturi_fast_bar, bregman} (implemented using a convolutional kernel \citep{conv_ot}) to less than 5 seconds using our divergence. Further experiments can be found in App. \ref{sec:interp}.

\textbf{Generative models.} The GAIT divergence can also be used as an objective for training generative models. We illustrate the results of using our divergence with a RBF kernel to learn generative models in Fig.~\ref{fig:gen_models} on a toy Swiss roll dataset, in addition to the MNIST \citep{mnist} and Fashion-MNIST  \citep{fashion_mnist} datasets. For all three datasets, we consider a 2D latent space and replicate the experimental setup used by \citet{cuturi_learning} for MNIST. We were able to use the same $2$-layer multilayer perceptron architecture and optimization hyperparameters for all three datasets, requiring only the tuning of the kernel variance for Swiss roll data's scale.

Moreover, we do not need large batch sizes to get good quality generations from our models. The quality of our samples obtained using batch sizes as small as $50$ are comparable to the ones requiring batch size of $200$ by~\citet{cuturi_learning}. We include additional experimental details and results in App.~\ref{sec:app_genmodels}, along with comparisons to variational auto-encoders \citep{vae}.

\subsection{Approximating measures}


\begin{figure*}[ht!]
    \vspace{-2mm}
    \centering
    \includegraphics[trim={0cm 0.0cm 0.cm 0.cm},clip,width=.32\textwidth]{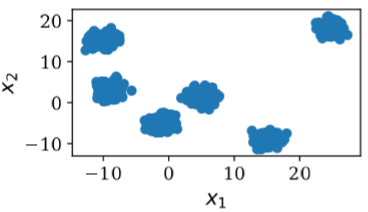}
    \includegraphics[trim={0cm 0.0cm 0cm .0cm},clip,width=0.32\textwidth]{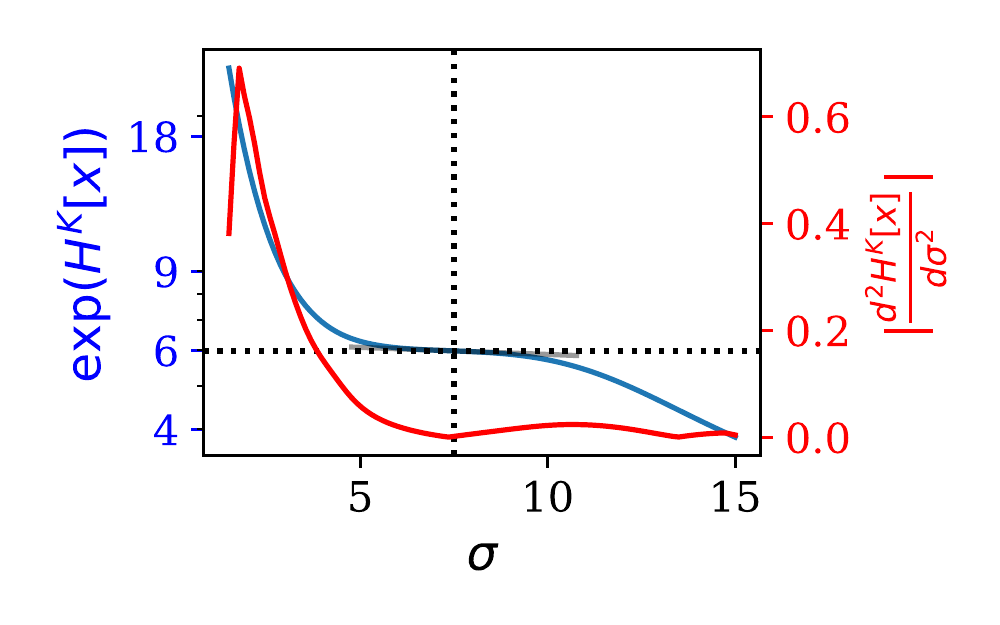}
    \includegraphics[trim={0cm 0.0cm 0cm .0cm},clip,width=0.32\textwidth]{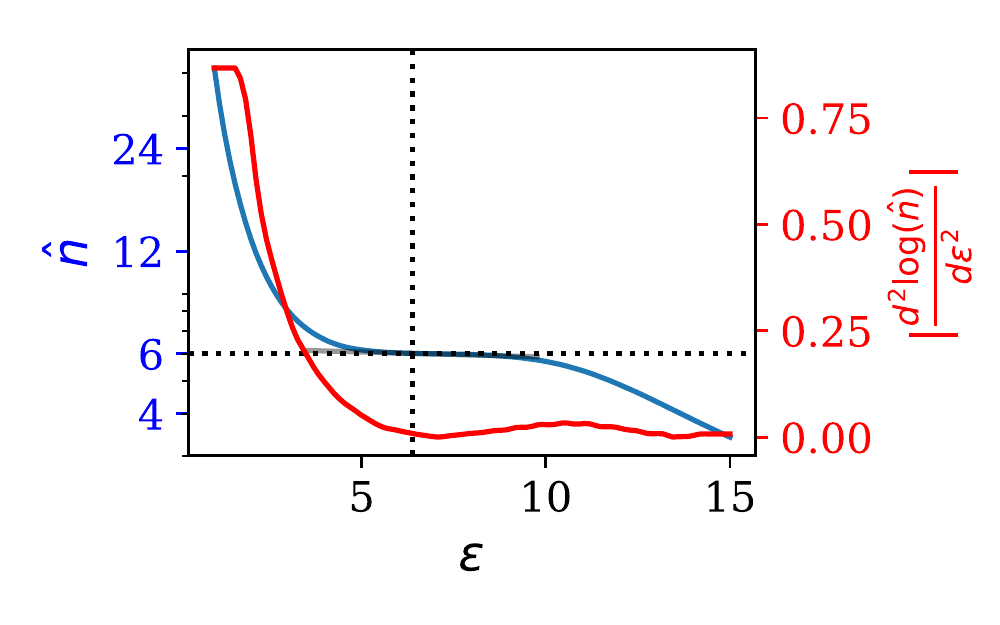}
    
    \vspace{-3mm}
    \caption{\textbf{Left:} 1,000 samples from a mixture of 6 Gaussians.  \textbf{Center:} Modes detected by varying $\sigma$ in our method. \textbf{Right:} Modes detected by varying collision threshold $\epsilon$ in the birthday paradox-based method.}
    
    \label{fig:mog}
\end{figure*}

Our method allows us to find a finitely-supported approximation $\Qrob = \sum_{j = 1}^m \vq_j \delta_{y_i}$ to a (discrete or continuous) target distribution $\Prob$. This is achieved by minimizing the divergence $\BDiv[\Prob || \Qrob]$ between them with respect to the locations $\{y_i\}_{i=1}^m$ and/or the masses of the atoms $\vq \in \mathbf{\Delta}_m$ in the approximating measure. In this section, we consider situations where $\Prob$ is not a subset of the support of $\Qrob$.  As a result, the Kullback-Leibler divergence (the case $\vect{K} = \eye)$ would be infinite and could not be minimized via gradent-based methods.  However, the GAIT divergence can be minimized even in the case of non-overlapping supports since it takes into account similarities between items.

In Fig. \ref{fig:atomic_approx}, we show the results of such an approximation on data for the population of France in 2010 consisting of 36,318 datapoints \citep{france_data}, similar to the setting of \citet{cuturi_fast_bar}.  The weight of each atom in the blue measure is proportional to the population it represents. We use an RBF kernel and an approximating measure consisting of 50 points with uniform weights, and use gradient-based optimization to minimize $\BDiv$ with respect to the location of the atoms of the approximating measure.  We compare with K-means \citep{sklearn} using identical initialization. Note that when using K-means, the resulting allocation of mass from points in the target measure to the nearest centroid can result in a highly unbalanced distribution, shown in the bar plot in orange. In contrast, our objective allows a uniformity constraint on the weight of the centroids, inducing a more homogeneous allocation. This is important in applications where an imbalanced allocation is undesirable, such as the placement of hospitals or schools.

Fig. \ref{fig:supersamples} shows the approximation of the density of a mixture of Gaussians $\Prob$ by a uniform distribution $\Qrob = \frac{1}{N} \sum_{i=1}^N \delta_{x_i}$ over $N=200$ atoms with a polynomial kernel of degree 1.5, similar to the approximate super-samples \citep{ssamples} task presented by \citet{stoch_w_barys} using the Wasserstein distance. We minimize $\BDiv[\Prob \, || \, \Qrob]$ with respect to the locations $\{x_i\}_{i=1}^n$. We estimate the continuous expectations with respect to $\Prob$ by repeatedly sampling minibatches to construct an empirical measure $\hat{\Prob}$. Note how the solution is a ``uniformly spaced'' allocation of the atoms through the space, with the number of points in a given region being proportional to mass of the region. See App. \ref{sec:interp} for a comparison to \citet{stoch_w_barys}.

Finally, one can approximate a measure when the locations of the atoms are fixed.  As an example, we take an article from the News Commentary Parallel Corpus \citep{newscorpus}, using as a measure $\Prob$ the normalized TF-IDF weights of each non-stopword in the article. Here, $\vect{K}$ is given by an RBF kernel applied to the $300$-dimensional GLoVe \citep{glove} embeddings of each word.  We optimize $\Qrob$ applying a penalty to encourage sparsity.  We show the result of this summarization in word-cloud format in Fig. \ref{fig:wordclouds}.  Note that compared to TF-IDF, which places most mass on a few unusual words, our method produces a summary that is more representative of the original text.  This behavior can be modified by varying the bandwidth $\sigma$ of the kernel, producing approximately the same result as TF-IDF when $\sigma$ is very small; details are presented in App. \ref{sec:text}.

\vspace{-1ex}

\subsection{Measuring diversity and counting modes}

As mentioned earlier, the exponential of the entropy $\exp(\mathbb{H}^{\mathbf{K}}_1[\Prob])$ provides a measure of the effective number of points in the space 
\citep{leinster_magnitude}. 
In Fig. \ref{fig:mog}, 
we use an empirical distribution to estimate the number of modes of a mixture of $C$ Gaussians. As the kernel bandwidth $\sigma$ increases, $\exp(\mathbb{H}^{\mathbf{K}}_1[\hat{\Prob}])$ decreases, with a marked plateau around $C$. We highlight that the lack of direct consideration of geometry of the space in the Shannon entropy renders it useless here: at any (non-trivial) scale, $\exp(\mathbb{H}[\hat{\Prob}])$ equals the number of samples, and not the number of classes. Our approach obtains similar results as (a form of) the birthday paradox-based method of \citet{birthday}, while avoiding the need for human evaluation of possible duplicates. Details and tests on MNIST can be found in App. \ref{sec:mode_details}.
	\section{Conclusions}

In this paper, we advocate the use of geometry-aware information theory concepts in machine learning.  We present the similarity-sensitive entropy of~\citet{leinster2012} along with several important properties that connect it to fundamental notions in geometry. 
We then propose a divergence induced by this entropy, which compares probability distributions by taking into account the similarities among the objects on which they are defined. Our proposal shares the empirical performance properties of distances based on optimal transport theory, such as the Wasserstein distance~\citep{Villani2008OptimalNew}, but enjoys a closed-form expression. This obviates the need to solve a linear program or use matrix scaling algorithms~\citep{cuturi_regularized}, reducing computation significantly. Finally, we also propose a similarity-sensitive version of mutual information based on the GAIT entropy. We hope these methods can prove fruitful in extending frameworks such as the information bottleneck for representation learning \citep{info_bottle}, similarity-sensitive cross entropy objectives 
in the spirit of loss-calibrated decision theory~\citep{loss_cal_lacoste}, or the use of entropic regularization of policies in reinforcement learning~\citep{entropy_rl}.




	
	\subsubsection*{Acknowledgments}
	
	This research was partially supported by the Canada CIFAR AI Chair Program and by a Google Focused Research award. Simon Lacoste-Julien is a CIFAR Associate Fellow in the Learning in Machines \& Brains program. We thank Pablo Piantanida for the great tutorial on information theory which inspired this work, and Mark Meckes for remarks on terminology and properties of metrics spaces of negative type.

	
	\bibliographystyle{abbrvnatClean} 
	\bibliography{curated_refs.bib}
	
	\newpage
	
	\appendix
	
	\onecolumn

\section{Revisiting parallel lines}
\label{sec:parallel}

Let $Z \sim \Uniform([0, 1])$ , $\phi \in \reals $, and let $\Prob_{\phi}$ be the distribution of $(\phi, Z) \in \reals^2$, i.e., a (degenerate) uniform distribution on the segment $\{\phi \} \times [0, 1] \subset \reals^2$, illustrated in Fig. \ref{fig:par_lines}.

\begin{figure}[h]
    \centering
	\begin{minipage}{.45\textwidth}
		\centering
		\begin{tikzpicture}
		\draw[->] (0,0)--(3, 0) node[right]{};
		\draw[->] (0,0)--(0, 2.2) node[above]{};
		\draw[-, blue, ultra thick] (1,0)--(1,1) node[right]{$\Prob_1$};
		\draw[-, red, ultra thick] (0,0)--(0,1) node[right]{$\Prob_0$};
		\draw[-, purple, ultra thick] (2,0)--(2,1) node[right]{$\Prob_2$};
		\node at (2.5, 2) {$\reals^2$};
		\end{tikzpicture}
		\captionof{figure}{Distribution $\Prob_{\phi}$ with support on the 1-dim segment $\{\phi \} \times [0, 1]$ for different values of $\phi$.}
		\label{fig:par_lines}
	\end{minipage}
	\hspace{1cm}
	\begin{minipage}{.45\textwidth}
		\vspace{-1mm}
		\centering
		\begin{tikzpicture}[scale=0.7]
			\begin{axis}[
			axis lines = middle,
			xmin=-1, xmax=1,
			ymin=0, ymax=1.2,
			legend style={at={(1.15, 0.35)},anchor=west}, 
			legend columns=1,
			width=7.cm, height=5cm,
			xlabel = $\phi$,
			x label style={at={(axis description cs:1,0.2)},anchor=north},
			]
		
			\addplot [domain=-1:1, samples=100, color=red, thick]{1};
			\addlegendentry{$\delta(\Prob_0, \Prob_{\phi})$}
			
			\addplot [domain=-1:1, samples=100, color=blue, thick]{0.69};
			\addlegendentry{$\mathbb{JS}(\Prob_0, \Prob_{\phi})$}
			
			
			\addplot [domain=-1:1, samples=100, color=green, thick]{abs(x)};
			\addlegendentry{$\mathbb{W}_1(\Prob_0, \Prob_{\phi})$}
			
			\addplot [domain=-1:1, samples=100, color=purple, thick]{x^2 + 1 - exp(-x^2)};
			\addlegendentry{$\BDiv(\Prob_{\phi}, \Prob_0) $}
			
			\end{axis}
			
			\node [red] at (2.72, 2.8) {$\circ$};
			\node [blue] at (2.72, 1.92) {$\circ$};
			\node [black] at (2.72, 0) {$\bullet$};
		
		\end{tikzpicture}
		\captionof{figure}{Values of the divergences as functions of $\phi$. KL divergence values are $\infty$ except at $\phi = 0$.}
		\label{fig:divtheta}
	\end{minipage}
\end{figure}

Our goal is to find the \textit{right} value of $\phi$ for a model distribution $\Prob_{\phi}$ using the dissimilarity with respect to a target distribution $\Prob_{0}$ as a learning signal. The behavior of common divergences on this type of problem was presented by \cite{wgan} as a motivating example for the introduction of OT distances in the context of GANs.
\begin{equation*}
    \begin{split}
        \delta(\Prob_0, \Prob_\phi) &= \begin{cases} 0   \hspace{5mm} \text{if } \phi = 0\\ 1  \hspace{5mm} \text{else}\end{cases}
        \hspace{7mm}
        \mathbb{KL}(\Prob_0, \Prob_\phi) = \mathbb{KL}(\Prob_\phi, \Prob_0) = \begin{cases} 0   \hspace{5mm} \text{if } \phi = 0\\ \infty  \hspace{5mm} \text{else}\end{cases}
        \\
        \mathbb{W}_1(\Prob_0, \Prob_\phi) &= |\phi|
        \hspace{28.5mm}
        \mathbb{JS}(\Prob_0, \Prob_\phi) = \log(2) \, \delta(\Prob_0, \Prob_\phi)
    \end{split}
\end{equation*}

Note that among all these divergences, illustrated in Fig. \ref{fig:divtheta}, only the Wasserstein distance provides a continuous (even a.e. differentiable) objective on $\phi$. We will now study the behavior of the GAIT divergenve in this setting.

Recall that the action of the kernel on a given probability measure corresponds to the mean map $\mK \mu : \X \to \reals$, defined by $\mK \mu(x) \triangleq \Exp_{x' \sim \mu} \lspar \kappa(x, x') \rspar = \int \kappa(x, x') \diff \mu(x')$. In particular, for $\Prob_\phi$:
\begin{equation*}
    \mK \Prob_\phi (x, y) = \int_{\reals^2} \kappa( (x, y), (x', y')) \diff \Prob_\phi(x', y') =  \int_0^1 \kappa( (x, y), (\phi, y')) \diff y'.
\end{equation*}

Let us endow $\reals^2$ with the Euclidean norm $\norm{\cdot}_2$, and define the kernel $\kappa((x, y), (x', y')) \triangleq \exp(-\norm{(x, y) - (x', y')}_2^2)$. Note that this choice is made only for its mathematically convenience in the following algebraic manipulation, but other choices of kernel are possible.  In this case, the mean map reduces to:
\begin{equation*}
    \mK \Prob_\phi (x, y) = \int_0^1 \exp \lspar - \lpar (x - \phi)^2 + (y - y')^2 \rpar \rspar \diff y' = \exp \lspar -(x - \phi)^2 \rspar     \underbrace{\int_0^1 \exp \lspar - (y - y')^2 \rspar \diff y'.}_{\text{$\triangleq I_y$, independent of $\phi$.}}
\end{equation*}

\vspace{-2ex}

We obtain the following expressions for the terms appearing in the divergence:
\begin{equation*}
    \Exp_{(x, y) \sim \Prob_\phi} \log \lspar \frac{\mK \Prob_\phi (x, y)}{\mK \Prob_0 (x, y)} \rspar = \Exp_{(x, y) \sim \Prob_\phi} \log \lspar \frac{\exp \lspar -(x - \phi)^2 \rspar  \bcancel{I_y}}{\exp \lspar -x^2 \rspar \bcancel{I_y}} \rspar = \Exp_{(x, y) \sim \Prob_\phi} \log \exp \lspar x^2 -(x - \phi)^2 \rspar = \phi^2.
\end{equation*}

\begin{equation*}
    \Exp_{(x, y) \sim \Prob_0} \lspar \frac{\mK \Prob_\phi (x, y)}{\mK \Prob_0 (x, y)} \rspar = \Exp_{(x, y) \sim \Prob_0} \exp \lspar x^2 -(x - \phi)^2 \rspar = \exp \{-\phi^2 \}.
\end{equation*}

Finally, we replace the previous expressions in the definition of the GAIT divergence. Remarkably, the result is a smooth function of the parameter $\phi$ with a global optimum at $\phi = 0$. See Fig. \ref{fig:divtheta}.

\begin{equation*}
    \BDiv(\Prob_\phi, \Prob_0) = 1 + \Exp_{(x, y) \sim \Prob_\phi} \log \lspar \frac{\mK \Prob_\phi(x, y)}{\mK \Prob_0(x, y)} \rspar - \Exp_{(x, y) \sim \Prob_0} \lspar \frac{\mK \Prob_\phi(x, y)}{\mK \Prob_0(x, y)} \rspar = \phi^2 + 1 - e^{-\phi^2} \ge 0. 
\end{equation*}

	\section{Proofs} \label{sec:proofs}

\setcounter{theorem}{2}

\begin{theorem}
	Let $X$, $Y$ be independent, then $ 	\mathbb{H}^{\mK \otimes \mLam}[X, Y] = \mathbb{H}^{\mK}[X] + \mathbb{H}^{\mLam}[Y].$
\end{theorem}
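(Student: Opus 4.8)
The plan is to exploit the multiplicative structure of the tensor product kernel together with the additivity of the logarithm; this result requires no appeal to Conjecture~\ref{conj:conc}. First I would use independence to write the joint law as a Kronecker product of the marginals: if $\vp$ and $\vq$ denote the probability vectors of $X$ and $Y$, then $\Prob = \vp \otimes \vq$ as a vector on the product space $\X \times \Y$.

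The crucial step is to factorize the joint similarity profile. By the mixed-product property of the Kronecker product, $(\mK \otimes \mLam)(\vp \otimes \vq) = (\mK \vp) \otimes (\mLam \vq)$. Concretely, this means
\begin{equation*}
([\mK \otimes \mLam]\Prob)_{x, y} = \sum_{x', y'} \kappa(x, x')\,\lambda(y, y')\, \vp_{x'}\, \vq_{y'} = (\mK \vp)_x \, (\mLam \vq)_y,
\end{equation*}
so the joint profile evaluated at $(x, y)$ splits into the product of the two marginal profiles. Substituting this factorization into the definition of the joint entropy, the logarithm converts the product into a sum:
\begin{equation*}
\mathbb{H}^{\mK \otimes \mLam}[X, Y] = -\Exp_{x, y \sim \Prob}\lspar \log (\mK\vp)_x + \log (\mLam\vq)_y \rspar.
\end{equation*}

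Finally I would split the expectation into two terms and marginalize: using $\Prob = \vp\otimes\vq$, the first term becomes $-\sum_{x,y} \vp_x \vq_y \log(\mK\vp)_x$, and summing $\vq_y$ over $y$ gives $1$, leaving $-\sum_x \vp_x \log(\mK\vp)_x = \mathbb{H}^{\mK}[X]$; symmetrically the second term yields $\mathbb{H}^{\mLam}[Y]$, which completes the identity.

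I do not expect a genuine obstacle here, as the argument is a direct computation. The only point requiring care is the factorization of the similarity profile — one must verify that the tensor product kernel acts on the product measure exactly as the product of the two marginal actions (the mixed-product identity) — after which additivity of $\log$ and normalization of the marginals finish the proof immediately.
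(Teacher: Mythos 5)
Your proof is correct and follows essentially the same route as the paper's: the paper factorizes $\Exp_{x',y'}[\kappa(x,x')\lambda(y,y')] = \Exp_{x'}[\kappa(x,x')]\,\Exp_{y'}[\lambda(y,y')]$ using independence and iterated expectations, which is exactly your Kronecker mixed-product identity $(\mK \otimes \mLam)(\vp \otimes \vq) = (\mK\vp)\otimes(\mLam\vq)$ written in probabilistic rather than matrix notation. Both arguments then conclude identically via additivity of the logarithm and marginalization.
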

\begin{proof}
\begingroup
\addtolength{\jot}{0.5em}
\begin{align*}
\vspace{-1cm} \mathbb{H}^{\mK \otimes \mat{J}}[X, Y] &=\Exp_{x, y} \log \lspar \Exp_{x', y'} \kappa(x, x')\lambda(y, y') \rspar \\ &=\Exp_{x, y} \log \lspar \Exp_{x'}\lspar\Exp_{y'} \kappa(x, x')\lambda(y, y') \rspar \rspar\\ 
&=\Exp_{x, y} \log \lspar \Exp_{x'}\lspar\kappa(x, x')\rspar \Exp_{y'} \lspar \lambda(y, y') \rspar \rspar\\
&=\Exp_{x, y} \log \lspar \Exp_{x'}\lspar\kappa(x, x')\rspar \rspar + \log \lspar \Exp_{y'} \lspar \lambda(y, y') \rspar \rspar\\
&=\Exp_{x} \log \lspar \Exp_{x'}\lspar\kappa(x, x')\rspar \rspar + \Exp_{y} \log \lspar \Exp_{y'} \lspar \lambda(y, y') \rspar \rspar\\
&= \mathbb{H}^{\mK}[X] + \mathbb{H}^{\mLam}[Y]. \vspace{-0.2cm}
\end{align*}
\endgroup
\end{proof}

\vspace{-2ex}

\begin{theorem}
	For any kernel $\kappa$, $\mathbb{H}^{\mK, \eye}[X | Y] = \Exp_{y \sim \Prob_y} [ \mathbb{H}^{\mK}[X | Y = y]]$.
\end{theorem}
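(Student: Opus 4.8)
The plan is to unfold the definition of conditional entropy, $\mathbb{H}^{\mK, \eye}[X|Y] = \mathbb{H}^{\mK \otimes \eye}[X, Y] - \mathbb{H}^{\eye}[Y]$, and to show that the subtracted $\mathbb{H}^{\eye}[Y]$ term exactly cancels the $Y$-marginal contribution that is hidden inside the joint similarity profile. The crucial observation is that setting $\mLam = \eye$ makes $\lambda(y, y') = \mathbbm{1}[y = y']$, which collapses the inner expectation over $y'$ in the tensor-product profile down to a single slice.

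Concretely, I would first compute the joint similarity profile under the product kernel. Since $[\kappa \otimes \eye]((x,y),(x',y')) = \kappa(x,x')\,\mathbbm{1}[y = y']$, the profile at $(x,y)$ is $([\mK \otimes \eye]\Prob)_{x,y} = \Exp_{x', y' \sim \Prob} \lspar \kappa(x,x')\,\mathbbm{1}[y = y'] \rspar = \sum_{x'} \Prob(x', y)\, \kappa(x, x')$. Writing $\Prob(x', y) = \Prob_y(y)\,\Prob(x' \mid y)$ factors this as $\Prob_y(y)\,(\mK \Prob_{X \mid Y=y})_x$, where $\Prob_{X\mid Y=y}$ is the conditional law of $X$ and $\mK \Prob_{X\mid Y=y}$ is its GAIT similarity profile. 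This factorization is the heart of the argument.

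Substituting into the joint entropy and splitting the logarithm then gives $\mathbb{H}^{\mK \otimes \eye}[X,Y] = -\Exp_{x,y\sim\Prob}\lspar \log \Prob_y(y) \rspar - \Exp_{x,y\sim\Prob}\lspar \log (\mK \Prob_{X\mid Y=y})_x \rspar$. The first term depends only on $y$, and since $\mLam = \eye$ reduces the GAIT entropy of $Y$ to the Shannon entropy, it equals $\mathbb{H}^{\eye}[Y] = -\Exp_{y\sim\Prob_y}\lspar \log\Prob_y(y) \rspar$. Subtracting it leaves $\mathbb{H}^{\mK,\eye}[X|Y] = -\Exp_{x,y\sim\Prob}\lspar \log (\mK \Prob_{X\mid Y=y})_x \rspar$. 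Finally I would invoke the tower property, $\Exp_{(x,y)\sim\Prob}\lspar \cdot \rspar = \Exp_{y\sim\Prob_y} \Exp_{x\sim\Prob_{X\mid Y=y}} \lspar \cdot \rspar$, and recognize the inner expectation as $\mathbb{H}^{\mK}[X \mid Y = y]$, which yields the claim.

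The only genuine obstacle is the factorization step: everything hinges on $\mLam = \eye$ turning the product kernel into a Kronecker delta in $y$, so that the double expectation over $(x', y')$ collapses to a single sum over $x'$ at the fixed value $y' = y$. For general $\mLam$ this collapse fails and the conditional entropy does not decompose as a clean average of per-slice entropies, so the identity is truly special to the perfectly-identifiable case. The remaining manipulations — splitting the logarithm, identifying the Shannon term, and applying the tower property — are routine.
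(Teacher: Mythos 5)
Your proposal is correct and follows essentially the same route as the paper's proof: unfold the definition of conditional entropy, use $\mLam = \eye$ to collapse the joint similarity profile to the factorization $\Prob_y(y)\,(\mK \Prob_{X\mid Y=y})_x$, cancel the resulting $\log \Prob_y(y)$ term against $\mathbb{H}^{\eye}[Y]$, and finish with the tower property. If anything, your write-up is cleaner, since you keep the signs of the entropies explicit where the paper's derivation elides them.
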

\begin{proof}
\begingroup
\addtolength{\jot}{0.5em}
\begin{align*}
    \vspace{-1cm} \mathbb{H}^{\mK, \eye}[X | Y] &= \mathbb{H}^{\mK, \eye}[X, Y] - \mathbb{H}^{\eye}[Y]\\ 
    &= \Exp_{x, y} \log \lspar \Exp_{x', y'} \kappa(x, x')\mathbf{1}(y, y') \rspar - \mathbb{H}^{\eye}[Y]   \\
    &= \Exp_{x, y} \log \lspar \int_{x'}\int_{y'}p(x', y') \kappa(x, x')\mathbf{1}(y, y') \rspar - \mathbb{H}^{\eye}[Y]  \\
    &= \Exp_{x, y} \log \lspar \int_{x'}p(x' |y)p(y) \kappa(x, x')\rspar - \mathbb{H}^{\eye}[Y]   \\  
    &= \Exp_{x, y} \log \lspar p(y)\Exp_{x'|y}\kappa(x, x')\rspar - \mathbb{H}^{\eye}[Y]   \\
    &= \Exp_{x, y} \log \lspar \Exp_{x'|y}\kappa(x, x')\rspar   \\
    &= \Exp_{y} \lspar \Exp_{x|y} \log \lspar \Exp_{x'|y}\kappa(x, x')\rspar \rspar  \\
    &= \Exp_{y} \lspar \mathbb{H}^{\mK}[X | y] \rspar. \vspace{-0.2cm}
\end{align*}{}
\endgroup
\end{proof}{}

\begin{theorem}$^\clubsuit$
	For any similarity kernel $\kappa$, $\mathbb{H}^{\mK, \eye}[X | Y] \le \mathbb{H}^{\mK}[X].$
\end{theorem}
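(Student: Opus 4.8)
The plan is to reduce the claim to a single application of Jensen's inequality, powered by the concavity conjecture; this is precisely the equivalence already noted in the main text following the statement.

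First I would rewrite the left-hand side using Theorem~\ref{thm:conddecomp}, just established for $\mLam = \eye$: this gives $\mathbb{H}^{\mK, \eye}[X | Y] = \Exp_{y \sim \Prob_y}[\mathbb{H}^{\mK}[X | Y = y]]$, so the conditional entropy is exactly the $\Prob_y$-weighted average of the GAIT entropies $\mathbb{H}^{\mK}[\Prob_{X|Y=y}]$ of the conditional distributions. Second, I would note that the marginal law of $X$ is the corresponding mixture of these conditionals, $\Prob_X = \Exp_{y \sim \Prob_y}[\Prob_{X|Y=y}]$, since $p(x) = \int p(x\,|\,y)\,p(y)\diff y$; viewed inside $\mathbf{\Delta}_{|\X|}$, the marginal $\Prob_X$ is simply the convex combination of the points $\Prob_{X|Y=y}$ with weights $\Prob_y$.

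The inequality then follows at once from the concavity of $\Prob \mapsto \mathbb{H}^{\mK}_1[\Prob]$ asserted in Conj.~\ref{conj:conc}: by Jensen's inequality,
\[
\Exp_{y \sim \Prob_y}\!\big[\mathbb{H}^{\mK}[\Prob_{X|Y=y}]\big] \;\le\; \mathbb{H}^{\mK}\!\big[\Exp_{y \sim \Prob_y}[\Prob_{X|Y=y}]\big] \;=\; \mathbb{H}^{\mK}[\Prob_X] \;=\; \mathbb{H}^{\mK}[X],
\]
which is the desired bound.

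The only substantive obstacle is the concavity itself: every other step is a definitional rewrite or the textbook form of Jensen. Since specializing $Y$ to a categorical variable mixing over an arbitrary family $\{\Prob_{X|Y=y}\}$ turns the statement back into the defining mixing inequality for concavity, Thm.~\ref{thm:ineq} is in fact equivalent to Conj.~\ref{conj:conc}, so there is no route to proving it without the conjecture (nor any gain beyond it). This is exactly why the result carries the $\clubsuit$ label.
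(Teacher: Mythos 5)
Your proposal is correct and follows essentially the same route as the paper's own proof: rewrite $\mathbb{H}^{\mK, \eye}[X | Y]$ via the conditional decomposition of Thm.~\ref{thm:conddecomp}, then apply Jensen's inequality to the conjectured concavity of $\mathbb{H}^{\mK}_1[\cdot]$, identifying the mixture of conditionals with the marginal of $X$. Your added remark that the statement is in fact equivalent to Conj.~\ref{conj:conc} matches the paper's own observation following the theorem.
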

\begin{proof} $
    \mathbb{H}^{\mK, \eye}[X | Y] = \Exp_{y \sim \Prob_y} [ \mathbb{H}^{\mK}[X | Y = y]] = \Exp_{y \sim \Prob_y} [ \mathbb{H}^{\mK}[X | Y = y]] \stackrel{\text{(Jensen)}}{\le} \mathbb{H}^{\mK}[\Exp_{y \sim \Prob_y} [X | Y = y]] = \mathbb{H}^{\mK}[X].
$
\end{proof}{}

\begin{lemma} \label{thm:mi_chain} \emph{(\textbf{Chain Rule of Mutual Information})}$^\clubsuit$. $\mathbb{I}^{\mK, \mLam, \mThe}[X ; Y, Z] = \mathbb{I}^{\mK, \mLam}[X ; Y] + \mathbb{I}^{\mK, \mLam, \mThe}[X ; Y | Z]$
\end{lemma}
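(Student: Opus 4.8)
The plan is to reduce every object appearing in the identity to the GAIT joint and conditional entropies of Table~\ref{tab:definitions} and then to verify the claim by direct cancellation. Because the GAIT conditional entropy is \emph{defined} as a difference of a joint and a marginal entropy, the chain rule of conditional probability holds by construction, and the whole statement becomes an exact algebraic identity among entropies; I therefore expect no appeal to concavity (Conjecture~\ref{conj:conc}) to be needed for this particular step, despite the $\clubsuit$ label.

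First I would expand the joint term on the left. Treating $(Y, Z)$ as a single variable carrying the tensor-product kernel $\mLam \otimes \mThe$, the definition of GAIT mutual information gives
\[
\mathbb{I}^{\mK, \mLam, \mThe}[X; Y, Z] = \mathbb{H}^{\mK}[X] + \mathbb{H}^{\mLam \otimes \mThe}[Y, Z] - \mathbb{H}^{\mK \otimes \mLam \otimes \mThe}[X, Y, Z].
\]
Next I would expand the $Z$-conditioned term $\mathbb{I}^{\mK, \mLam, \mThe}[X; Y \mid Z]$ using its definition, and then eliminate each conditional entropy in favour of joint entropies via $\mathbb{H}^{\mat{A}, \mThe}[\,\cdot \mid Z] = \mathbb{H}^{\mat{A} \otimes \mThe}[\,\cdot\,, Z] - \mathbb{H}^{\mThe}[Z]$. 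This rewrites the conditional mutual information as
\[
\mathbb{I}^{\mK, \mLam, \mThe}[X; Y \mid Z] = \mathbb{H}^{\mK \otimes \mThe}[X, Z] + \mathbb{H}^{\mLam \otimes \mThe}[Y, Z] - \mathbb{H}^{\mThe}[Z] - \mathbb{H}^{\mK \otimes \mLam \otimes \mThe}[X, Y, Z],
\]
i.e. purely in terms of joint entropies together with a single copy of $\mathbb{H}^{\mThe}[Z]$.

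The decisive step is then the subtraction. Forming $\mathbb{I}^{\mK, \mLam, \mThe}[X; Y, Z] - \mathbb{I}^{\mK, \mLam, \mThe}[X; Y \mid Z]$, the two occurrences of the triple-kernel entropy $\mathbb{H}^{\mK \otimes \mLam \otimes \mThe}[X, Y, Z]$ and of $\mathbb{H}^{\mLam \otimes \mThe}[Y, Z]$ cancel, leaving exactly $\mathbb{H}^{\mK}[X] + \mathbb{H}^{\mThe}[Z] - \mathbb{H}^{\mK \otimes \mThe}[X, Z]$. By the definition of mutual information applied to $X$ and $Z$, this residual is precisely the single-pair term $\mathbb{I}^{\mK, \mThe}[X; Z]$, which is the mutual information that, together with the $Z$-conditioned term $\mathbb{I}^{\mK, \mLam, \mThe}[X; Y \mid Z]$, reconstitutes the joint quantity on the left and so completes the chain rule.

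The main obstacle I anticipate is entirely the bookkeeping of the tensor factors and of the conditioning variables. One must check that expanding the conditional mutual information reproduces the \emph{same} triple-kernel entropy $\mathbb{H}^{\mK \otimes \mLam \otimes \mThe}[X, Y, Z]$ that appears on the left (so that it cancels cleanly), and that the three conditional-entropy reductions contribute exactly the copies of $\mathbb{H}^{\mThe}[Z]$ and $\mathbb{H}^{\mLam \otimes \mThe}[Y, Z]$ required for cancellation. Keeping track of which variable is conditioned on, and with which kernel, at each substitution is the step demanding the most care, and it is what pins down that the mutual-information term accompanying the $Z$-conditioned quantity $\mathbb{I}^{\mK, \mLam, \mThe}[X; Y \mid Z]$ is the $X$--$Z$ mutual information $\mathbb{I}^{\mK, \mThe}[X; Z]$.
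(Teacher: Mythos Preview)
Your approach is correct and essentially identical to the paper's: both unfold the conditional mutual information via the definitions in Table~\ref{tab:definitions}, rewrite each conditional entropy as a difference of joint entropies, and cancel. You also correctly identify that the single-pair residual is $\mathbb{I}^{\mK,\mThe}[X;Z]$ rather than the $\mathbb{I}^{\mK,\mLam}[X;Y]$ printed in the lemma statement; the paper's own proof likewise concludes with $\mathbb{I}^{\mK,\mThe}[X;Z]+\mathbb{I}^{\mK,\mLam,\mThe}[X;Y\mid Z]=\mathbb{I}^{\mK,\mLam,\mThe}[X;Y,Z]$, so the discrepancy is a typo in the statement, and your remark that no appeal to Conjecture~\ref{conj:conc} is needed here is accurate.
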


\begin{proof}
By definition:
\begingroup
\addtolength{\jot}{0.5em}
\begin{align*}
    \mathbb{I}^{\mK, \mThe}[X ; Z] &= \mathbb{H}^\mK [X] + \mathbb{H}^\mThe [Z] - \mathbb{H}^{\mK \otimes \mThe}[X, Z]. \\
    \mathbb{I}^{\mK, \mLam, \mThe}[X ; Y | Z] &= \mathbb{H}^{\mK, \mThe} [X|Z] + \mathbb{H}^{\mLam, \mThe} [Y|Z] - \mathbb{H}^{\mK, \mLam, \mThe} [X, Y|Z] \\
    &= \mathbb{H}^{\mK, \mThe} [X, Z] - \mathbb{H}^{\mThe} [Z] + \mathbb{H}^{\mLam, \mThe} [Y, Z] - \mathbb{H}^{\mThe} [Z] -  \mathbb{H}^{\mK, \mLam, \mThe} [X, Y, Z]  + \mathbb{H}^\mThe [Z]
\end{align*}
\endgroup
Thus, $\mathbb{I}^{\mK, \mThe}[X ; Z] + \mathbb{I}^{\mK, \mLam, \mThe}[X ; Y | Z] = \mathbb{H}^\mK [X] + \mathbb{H}^{\mLam, \mThe} [Y, Z] - \mathbb{H}^{\mK, \mLam, \mThe} [X, Y, Z] =\mathbb{I}^{\mK, \mLam, \mThe}[X ; Y, Z].$
\end{proof}{}

\newpage
\begin{theorem}  \emph{(\textbf{Data Processing Inequality})}$^\clubsuit$. \\
	If $X \rightarrow Y \rightarrow Z$ is a Markov chain,  $ \mathbb{I}^{\mK, \mThe}[X ; Z]  \le \mathbb{I}^{\mK, \mLam}[X ; Y] +  \mathbb{I}^{\mK, \mThe, \mLam}[X ; Z | Y].$
\end{theorem}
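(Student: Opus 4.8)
The plan is to obtain the inequality from two applications of the Chain Rule of Mutual Information (Lemma~\ref{thm:mi_chain}), one for each ordering of the pair $(Y, Z)$, after which the claim collapses to a single non-negativity statement. First I would record the identity actually established in the proof of Lemma~\ref{thm:mi_chain}, namely
\[
\mathbb{I}^{\mK, \mLam, \mThe}[X ; Y, Z] = \mathbb{I}^{\mK, \mThe}[X ; Z] + \mathbb{I}^{\mK, \mLam, \mThe}[X ; Y | Z].
\]
Because the GAIT joint entropy is invariant under relabeling of the underlying space (the Symmetry property), $\mathbb{H}^{\mLam \otimes \mThe}[Y, Z] = \mathbb{H}^{\mThe \otimes \mLam}[Z, Y]$ and likewise for the triple $(X, Y, Z)$, so the mutual information of $X$ with the pair does not depend on the order in which $Y$ and $Z$ are listed: $\mathbb{I}^{\mK, \mLam, \mThe}[X ; Y, Z] = \mathbb{I}^{\mK, \mThe, \mLam}[X ; Z, Y]$. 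Applying Lemma~\ref{thm:mi_chain} with the roles of $Y$ and $Z$ (and of $\mLam$ and $\mThe$) interchanged then gives
\[
\mathbb{I}^{\mK, \mThe, \mLam}[X ; Z, Y] = \mathbb{I}^{\mK, \mLam}[X ; Y] + \mathbb{I}^{\mK, \mThe, \mLam}[X ; Z | Y].
\]

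Equating the two decompositions of this common quantity produces the exact identity
\[
\mathbb{I}^{\mK, \mThe}[X ; Z] = \mathbb{I}^{\mK, \mLam}[X ; Y] + \mathbb{I}^{\mK, \mThe, \mLam}[X ; Z | Y] - \mathbb{I}^{\mK, \mLam, \mThe}[X ; Y | Z],
\]
valid for any joint law of $(X, Y, Z)$. Consequently the target inequality is equivalent to the single assertion that the conditional GAIT mutual information is non-negative, $\mathbb{I}^{\mK, \mLam, \mThe}[X ; Y | Z] \ge 0$. I would remark here that the Markov hypothesis is not strictly needed for the inequality itself; it merely identifies the regime in which the classical cross-term $\mathbb{I}[X ; Z | Y]$ would vanish, so that the residual term $\mathbb{I}^{\mK, \mThe, \mLam}[X ; Z | Y]$ on the right measures exactly the information that leaks through the imperfect identifiability of $Y$ (it is $0$ when $\mLam = \eye$).

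The hard part is the remaining non-negativity $\mathbb{I}^{\mK, \mLam, \mThe}[X ; Y | Z] \ge 0$. Expanding the conditional mutual information and collapsing each conditional entropy through its definition, this is equivalent to the submodularity-type bound
\[
\mathbb{H}^{\mK, \mLam \otimes \mThe}[X | Y, Z] \le \mathbb{H}^{\mK, \mThe}[X | Z],
\]
a conditional analogue of Theorem~\ref{thm:ineq} asserting that conditioning on more variables cannot increase the GAIT entropy of $X$. As in the proof of Theorem~\ref{thm:ineq}, I would derive it from a Jensen argument applied to the averaging of $X$-distributions induced by $(Y, Z)$ versus $Z$ alone, which is precisely where the concavity of $\mathbb{H}^{\mK}_1[\cdot]$ from Conjecture~\ref{conj:conc} enters — hence the $\clubsuit$. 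The genuine subtlety, compared with the transparent case where the conditioning kernel is $\eye$, is that for a general kernel $\mThe$ the conditional entropy does not reduce to a plain expectation $\Exp_{z}[\mathbb{H}^{\mK}[X | Z = z]]$ (Theorem~\ref{thm:conddecomp} only applies when that kernel equals $\eye$), so the convex combination over which Jensen is invoked must be set up carefully rather than read off term by term.
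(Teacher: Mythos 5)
Your proposal follows essentially the same route as the paper's proof: apply the chain rule of GAIT mutual information (Lemma~\ref{thm:mi_chain}) to both orderings of the pair $(Y,Z)$, equate the two decompositions of $\mathbb{I}^{\mK,\mLam,\mThe}[X;Y,Z]$, and reduce the claimed inequality to the single non-negativity statement $\mathbb{I}^{\mK,\mLam,\mThe}[X;Y|Z]\ge 0$, which is where the conjecture ($\clubsuit$) enters. If anything you are more careful than the paper, which simply asserts this non-negativity without spelling out the Jensen/concavity argument needed for a general conditioning kernel $\mThe$, and you correctly observe---as the paper's own proof implicitly confirms, since it never invokes the Markov property---that the Markov hypothesis is not actually used to derive the inequality in this form.
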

\begin{proof}
\begin{align*}
    & \mathbb{I}^{\mK, \mLam, \mThe}[X ; Y, Z]  
      \stackrel{\text{(Thm. \ref{thm:mi_chain})}}{=} \mathbb{I}^{\mK, \mLam}[X ; Y] + \mathbb{I}^{\mK, \mLam, \mThe}[X ; Y | Z] 
     \stackrel{\text{(Thm. \ref{thm:mi_chain})}}{=}  \mathbb{I}^{\mK, \mThe}[X ; Z] + \mathbb{I}^{\mK, \mThe, \mLam}[X ; Z | Y].
\end{align*}
Therefore $\mathbb{I}^{\mK, \mLam}[X ; Z]  + \mathbb{I}^{\mK, \mLam, \mThe}[X ; Y | Z] = \mathbb{I}^{\mK, \mThe}[X ; Z] + \mathbb{I}^{\mK, \mThe, \mLam}[X ; Z | Y].$ Finally, we have that $\mathbb{I}^{\mK, \mLam, \mThe}[X ; Y | Z] \ge 0$, which in turn implies that $\mathbb{I}^{\mK, \mLam}[X ; Z]  \le \mathbb{I}^{\mK, \mLam}[X; Y] + \mathbb{I}^{\mK, \mThe, \mLam}[X; Z|Y]. $
\end{proof}

\setcounter{theorem}{6}
Additionally, we are able to prove a series of inequalities illuminating the influence of the similarity matrix on joint entropy in extreme cases:
\begin{theorem}
	For any similarity kernels $\kappa$ and $\lambda$,
	$\mathbb{H}^{\mK}[X] = \mathbb{H}^{\mK \otimes \mat{J}}[X, Y] \le   \mathbb{H}^{\mK \otimes \mLam}[X,Y]  \le \mathbb{H}^{\mK \otimes \eye}[X, Y] = \mathbb{H}^{\eye}[Y] + \mathbb{H}^{\mK, \eye}[X | Y]$
	\label{thm:ent_ineq}
\end{theorem}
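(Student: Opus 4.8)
The plan is to read the statement as a four-link chain—two equalities at the ends and two inequalities in the middle—each of which follows from material already established. I would dispatch the two middle inequalities together using the $\mK$-monotonicity property. The key observation is that, entrywise, $\mat{J} \ge \mLam \ge \eye$ holds for every similarity kernel $\lambda$ on $\Y$: each entry of $\mLam$ lies in $[0,1]$, so it is dominated by the all-ones matrix $\mat{J}$, while its diagonal equals $1$ and its off-diagonal entries are nonnegative, so it dominates $\eye$. Tensoring with $\mK$, whose entries are nonnegative, preserves this ordering componentwise, since $(\mK\otimes\mat{J})_{(x,y),(x',y')} = \kappa(x,x')\cdot 1 \ge \kappa(x,x')\lambda(y,y') \ge \kappa(x,x')[\eye]_{y,y'}$ for all indices. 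Each of the three product matrices is itself a bona fide similarity Gram matrix on $\X\times\Y$ (symmetric, entries in $[0,1]$, unit diagonal, since $\kappa(x,x)\lambda(y,y)=1$), so applying $\mK$-monotonicity to the joint law $\Prob$ on the product space yields $\mathbb{H}^{\mK\otimes\mat{J}}[X,Y] \le \mathbb{H}^{\mK\otimes\mLam}[X,Y] \le \mathbb{H}^{\mK\otimes\eye}[X,Y]$, which are exactly the two middle relations.

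For the left equality I would compute $\mathbb{H}^{\mK\otimes\mat{J}}[X,Y]$ directly from the definition, without any independence hypothesis. Because every entry of $\mat{J}$ equals $1$, the $Y$-factor of the similarity profile is washed out: $([\mK\otimes\mat{J}]\Prob)_{x,y} = \Exp_{x',y'\sim\Prob}[\kappa(x,x')] = \Exp_{x'\sim\Prob_X}[\kappa(x,x')] = (\mK\Prob_X)_x$, where $\Prob_X$ is the $X$-marginal and the summation over $y'$ merely marginalizes the joint law. Since this quantity is independent of $y$, substituting it into the joint-entropy definition and collapsing the expectation over $(x,y)\sim\Prob$ to its $X$-marginal gives $\mathbb{H}^{\mK\otimes\mat{J}}[X,Y] = -\Exp_{x\sim\Prob_X}[\log(\mK\Prob_X)_x] = \mathbb{H}^{\mK}[X]$. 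This coincides with Thm.~\ref{thm:indentropy} specialized to $\lambda=\mat{J}$, for which $\mathbb{H}^{\mat{J}}[Y]=0$ because $(\mat{J}\Prob_Y)_y\equiv 1$ forces the logarithm to vanish; the direct route, however, makes clear that independence of $X$ and $Y$ plays no role.

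The right equality is immediate from the definition of conditional entropy. Specializing $\mathbb{H}^{\mK,\mLam}[X|Y] \triangleq \mathbb{H}^{\mK\otimes\mLam}[X,Y] - \mathbb{H}^{\mLam}[Y]$ to $\mLam = \eye$ and rearranging gives $\mathbb{H}^{\mK\otimes\eye}[X,Y] = \mathbb{H}^{\eye}[Y] + \mathbb{H}^{\mK,\eye}[X|Y]$, which is nothing more than the chain rule holding by construction. Concatenating the four links then closes the chain and completes the argument.

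As for where the work actually lies, none of the steps is analytically deep; the only point demanding care is the first one, namely verifying that the componentwise kernel ordering $\mat{J}\ge\mLam\ge\eye$ genuinely survives the tensor product with $\mK$ and that each product matrix qualifies as a similarity Gram matrix, so that $\mK$-monotonicity transfers verbatim to the enlarged space $\X\times\Y$. I expect this bookkeeping—rather than any estimate—to be the main obstacle. I would also flag that the whole statement rests purely on monotonicity and the definitions, so it is independent of Conjecture~\ref{conj:conc} (it is not marked $\clubsuit$) and does not invoke concavity anywhere.
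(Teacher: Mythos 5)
Your proposal is correct and follows essentially the same route as the paper's proof: the left equality by direct computation using $\lambda(y,y')\equiv 1$, the two middle inequalities by monotonicity of the entropy in the similarity matrix, and the right equality by the chain rule defining conditional entropy. The only difference is that you spell out details the paper leaves implicit (the entrywise ordering $\mat{J}\ge\mLam\ge\eye$ surviving the tensor product, and each product being a valid similarity Gram matrix), and your observation that the result is independent of Conjecture~\ref{conj:conc} matches the paper's omission of the $\clubsuit$ label.
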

\begin{proof}
The first result, $\mathbb{H}^{\mK}[X] = \mathbb{H}^{\mK \otimes \mat{J}}[X, Y]$ follows by noting that $\lambda(y, y') = 1$ for all $y, y'$:
\begin{align*}
\mathbb{H}^{\mK \otimes \mat{J}}[X, Y] &=\Exp_{x, y} \log \lspar \Exp_{x', y'} \kappa(x, x')\lambda(y, y') \rspar=\Exp_{x} \log \lspar\Exp_{x'} \kappa(x, x') \rspar =\mathbb{H}^{\mK}[X]
\end{align*}
 $\mathbb{H}^{\mK \otimes \mat{J}}[X, Y] \le   \mathbb{H}^{\mK \otimes \mLam}[X,Y] \le \mathbb{H}^{\mK \otimes \eye}[X, Y]$ follows by monotonicity of the entropy in the similarity matrices.
 
  $\mathbb{H}^{\mK \otimes \eye}[X, Y] = \mathbb{H}^{\eye}[Y] + \mathbb{H}^{\mK, \eye}[X | Y]$ follows by the chain rule of conditional entropy.
\end{proof}{}

	\section{Verifying the concavity of $\KOEnt[\cdot]$}\label{sec:concavity_details}

\subsection{Proof attempts}

We have made several attempts to show that the GAIT entropy is a concave function at $\alpha=1$. As this is a critical component in our theoretical developments, we provide a list of our previously unsuccessful approaches, in the hopes of facilitating the participation of interested researchers in answering this question.

\begin{itemize}
    \item Jensen's inequality for the $\log(\Kp)$ or $\log(\Kq)$ terms is too loose.
    \item The bound $\log b \le \frac{b}{a} + \log (a) - 1$ applied to the ratio $\frac{\Kp}{\Kq}$ results in a loose bound.  
    \item $-p \log(p)$ is known to be a concave function. However, the action of the similarity matrix on the distribution inside the logarithmic factor in $-\vp^T \log (\Kp)$ complicates the analysis.
    \item The Donsker-Varadhan representation of the Kullbach-Leibler divergence goes in the wrong direction and adds extra terms.
    \item Bounding a Taylor series expansion of the gap between the linear approximation of an interpolation and the value of the entropy along the interpolation.  The analysis is promising but becomes unwieldy due to the presence of $\frac{\mK \vq}{\mK \vp}$ terms.
\end{itemize}

\subsection{Positive definiteness of the Hessian of the negative entropy}

Straightforward computation based on the definition of the GAIT entropy leads to a remarkably simple form for the Hessian of the negative entropy. 
\begin{theorem}
    \begin{equation*}
      -\nabla^2_\mathbb{P} [ \mathbb{H}_1^{\mathbf{K}}[\mathbb{P} ]] = \mathbf{K} \text{diag} \lpar \frac{1}{ \mathbf{K} \mathbb{P}} \rpar - \mathbf{K} \text{diag} \lpar \frac{\mathbb{P}}{ \lpar \mathbf{K} \mathbb{P} \rpar^2} \rpar + \text{diag} \lpar \frac{1}{ \mathbf{K} \mathbb{P}} \rpar  \mathbf{K}
    \end{equation*}
    
Moreover, $-\nabla^2_\mathbb{P} [ \mathbb{H}_1^{\mathbf{K}}[\mathbb{P} ]]$ is positive definite in the $2 \times 2$ case.
\end{theorem}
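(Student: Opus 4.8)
The plan is to obtain the Hessian by differentiating the negative entropy twice and then to prove positive definiteness in the $2\times2$ case via a congruence transformation that turns the determinant into a manifestly positive polynomial. Writing $g(\vp)\triangleq -\KOEnt[\vp]=\langle\vp,\log(\Kp)\rangle$, a first differentiation (using the symmetry of $\mK$) gives $\nabla g=\log(\Kp)+\mK\tfrac{\vp}{\Kp}$. Differentiating a second time, applying the product rule to $\tfrac{\vp}{\Kp}$ and using $\partial_k(\Kp)_i=\mK_{ik}$, I would collect terms into
\begin{equation*}
\nabla^2 g(\vp)=\mK\mat{D}+\mat{D}\mK-\mK\mat{E}\mK,\qquad \mat{D}\triangleq\text{diag}\Big(\tfrac{1}{\Kp}\Big),\quad \mat{E}\triangleq\text{diag}\Big(\tfrac{\vp}{(\Kp)^2}\Big),
\end{equation*}
which is the stated formula; note the middle term must carry $\mK$ on both sides so that $\nabla^2 g$ is symmetric, as any Hessian must be.

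For the $2\times2$ claim I would use the factorization $\mat{E}=\mat{D}\mat{P}\mat{D}$ with $\mat{P}\triangleq\text{diag}(\vp)$. Setting $\mat{M}\triangleq\mK\mat{D}$ this rewrites the Hessian as $\nabla^2 g=\mat{M}+\mat{M}^\top-\mat{M}\mat{P}\mat{M}^\top=\mat{M}\,\widetilde{\mat{H}}\,\mat{M}^\top$ with
\begin{equation*}
\widetilde{\mat{H}}\triangleq 2\,\mathrm{sym}\!\big(\mK^{-1}\text{diag}(\Kp)\big)-\mat{P},\qquad \mathrm{sym}(\mat{A})\triangleq\tfrac12(\mat{A}+\mat{A}^\top).
\end{equation*}
Since $\mK\succ0$ and $\mat{D}\succ0$ make $\mat{M}$ invertible, this is a congruence, so $\nabla^2 g$ and $\widetilde{\mat{H}}$ share the same inertia and it is enough to prove $\widetilde{\mat{H}}\succ0$. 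The payoff is that $\widetilde{\mat{H}}$ is a much simpler matrix: taking $\mK=\big(\begin{smallmatrix}1&\kappa\\\kappa&1\end{smallmatrix}\big)$ with $\kappa\in[0,1)$, $\vp=(p,1-p)$ with $p\in(0,1)$, and $s_1=p+\kappa(1-p)$, $s_2=(1-p)+\kappa p$ (so $s_1+s_2=1+\kappa$), one finds
\begin{equation*}
\widetilde{\mat{H}}=\begin{pmatrix}\tfrac{2s_1}{1-\kappa^2}-p & -\tfrac{\kappa}{1-\kappa}\\[4pt] -\tfrac{\kappa}{1-\kappa} & \tfrac{2s_2}{1-\kappa^2}-(1-p)\end{pmatrix}.
\end{equation*}

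Its leading entry is immediately positive, since $2s_1-p(1-\kappa^2)=p(1+\kappa^2)+2\kappa(1-p)>0$, so everything reduces to the sign of $\det\widetilde{\mat{H}}$. The hard part will be exactly this determinant: written in the original coordinates it is an unwieldy rational function of $p$ and $\kappa$, and I expect the congruence reduction above to be what makes it tractable. Clearing the positive factor $(1-\kappa^2)^2$ and substituting $p^2+(1-p)^2=1-2p(1-p)$, I anticipate the numerator to collapse to
\begin{equation*}
(1-\kappa^2)^2\det\widetilde{\mat{H}}=(1-\kappa)^4\,p(1-p)+\kappa(1-\kappa)(\kappa^2+\kappa+2),
\end{equation*}
which is strictly positive for all $p\in(0,1)$ and $\kappa\in[0,1)$, since the first summand is strictly positive and the second is nonnegative. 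This yields $\widetilde{\mat{H}}\succ0$, and hence $-\nabla^2_\Prob\KOEnt[\Prob]=\nabla^2 g\succ0$. The crux is verifying that the rational determinant genuinely simplifies to this clean factored form; the remaining steps are routine.
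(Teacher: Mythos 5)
Your proposal is correct, and it in fact supplies more than the paper does: the paper states this theorem with no proof at all (the Hessian is introduced as a ``straightforward computation'' and the $2\times 2$ positive-definiteness claim is asserted without argument). Your derivation of $\nabla^2 g=\mK\mat{D}+\mat{D}\mK-\mK\mat{E}\mK$ is right, and your symmetry remark exposes a genuine typo in the paper's displayed formula: as printed, the middle term $-\mK\,\text{diag}\lpar\vp/(\Kp)^2\rpar$ is not symmetric, so it cannot be a Hessian; the correct middle term carries $\mK$ on both sides, exactly as you write. The congruence step is also sound: $\mat{M}=\mK\mat{D}$ is invertible when $\mK\succ 0$ and $\Kp>0$ entrywise, so by Sylvester's law of inertia it suffices to show $\widetilde{\mat{H}}=\mK^{-1}\mat{D}^{-1}+\mat{D}^{-1}\mK^{-1}-\text{diag}(\vp)\succ 0$. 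I checked the algebra you flagged as the crux, and it does collapse as you anticipated: with $A=p(1+\kappa^2)+2\kappa(1-p)$ and $B=(1-p)(1+\kappa^2)+2\kappa p$, the identity $(1+\kappa^2)^2-4\kappa(1+\kappa^2)+4\kappa^2=(1-\kappa)^4$ gives $AB=p(1-p)(1-\kappa)^4+2\kappa(1+\kappa^2)$, and subtracting the scaled off-diagonal square $\kappa^2(1+\kappa)^2$ leaves $2\kappa-\kappa^2-\kappa^4=\kappa(1-\kappa)(\kappa^2+\kappa+2)$, which is precisely your factored form; it is strictly positive for $p\in(0,1)$, $\kappa\in[0,1)$, and together with the positive leading entry, Sylvester's criterion gives $\widetilde{\mat{H}}\succ 0$, hence $-\nabla^2_{\Prob}\mathbb{H}_1^{\mK}[\Prob]\succ 0$. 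Note that your standing assumption $\kappa<1$ (i.e.\ $\mK$ nonsingular) is necessary—at $\kappa=1$ the entropy vanishes identically—and matches the positive-definiteness hypothesis of Conjecture 1. Beyond filling the paper's gap, your reduction to $\widetilde{\mat{H}}$ has an advantage a brute-force minor computation on the Hessian would not: it is stated for general $n$, so it isolates a clean matrix inequality whose proof for $n>2$ would settle the paper's open conjecture.
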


The proving of the conjecture is equivalent proving positive definiteness of the matrix presented above. Furthermore, since we are interested in the behavior of the GAIT entropy operating on probability distributions, it is even sufficient to only consider the action of this matrix as a quadratic form the set of mass-preserving vectors with entries adding up to zero. 


\subsection{Numerical experiments}


\textbf{Random search on $\mathbb{D}^\mK[\vp||\vq]\geq 0$.} We perform a search over vectors $\vp$ and $\vq$ drawn randomly from the simplex, and over random positive definite similarity Gram matrices $\mK$. We have tried restricting our searches to $\vp$ and $\vq$ near the center of the simplex and away from the center, and to $\mK$ closer to $\eye$ or $\mathbf{J}$. In every experiment, we find that $\mathbb{D}^\mK[\vp||\vq]\geq 0$.

Consider the wide experimental setup for search defined in Tab.~\ref{tab:random_search}. Fig.~\ref{fig:concave_histogram} shows the histogram of  $\mathbb{D}^\mK$ over this search, empirically showing the non-negativity of the divergence, and, thus the concavity of the GAIT entropy.

\begin{minipage}{\textwidth}
  \begin{minipage}[b]{0.49\textwidth}
    \centering
        \captionof{table}{Experimental setup for random search.}
        \begin{tabular}{|c|c|}
        \hline
        \textbf{Quantity} & \textbf{Sampling process} \\ \hline
        $n \in \mathbb{Z}$ & $n \sim \text{Uniform}(\{2, \ldots, 11\})$ \\ \hline
        $\boldsymbol{\gamma} \in \mathbb{Z}^{n\times n}$ & $\gamma_{i,j} \sim \text{Uniform}(\{0,\ldots,9\})$ \\ \hline
        $\mathbf{L} \in \mathbb{R}^{n\times n}$ & $L_{i,j} \sim \text{Uniform}(0, 1)^{\gamma_{i,j}}$ \\ \hline
        $\mK \in \mathbb{R}^{n\times n}$ & $\mK = \text{min}(1, \eye + \mathbf{L}\mathbf{L}^T/n)$ \\ \hline
        $\boldsymbol{\alpha} \in \mathbb{R}^n$ & $\alpha_i \sim \text{Uniform}(0, 10)$ \\ \hline
        $\boldsymbol{\beta} \in \mathbb{R}^n$ & $\beta_i \sim \text{Uniform}(0, 10)$ \\ \hline
        $\vp \in \mathbf{\Delta}_n$ & $\vp \sim \text{Dirichlet}(\boldsymbol{\alpha})$ \\ \hline
        $\vq \in \mathbf{\Delta}_n$ & $\vq \sim \text{Dirichlet}(\boldsymbol{\beta})$ \\ \hline
        \end{tabular}%
        \label{tab:random_search}
        \vspace{0.5cm}
    \end{minipage}
  \hfill
  \begin{minipage}[b]{0.49\textwidth}
    \centering
	\includegraphics[trim={0cm 0.5cm 0cm 0cm},clip,height=0.17\textheight]{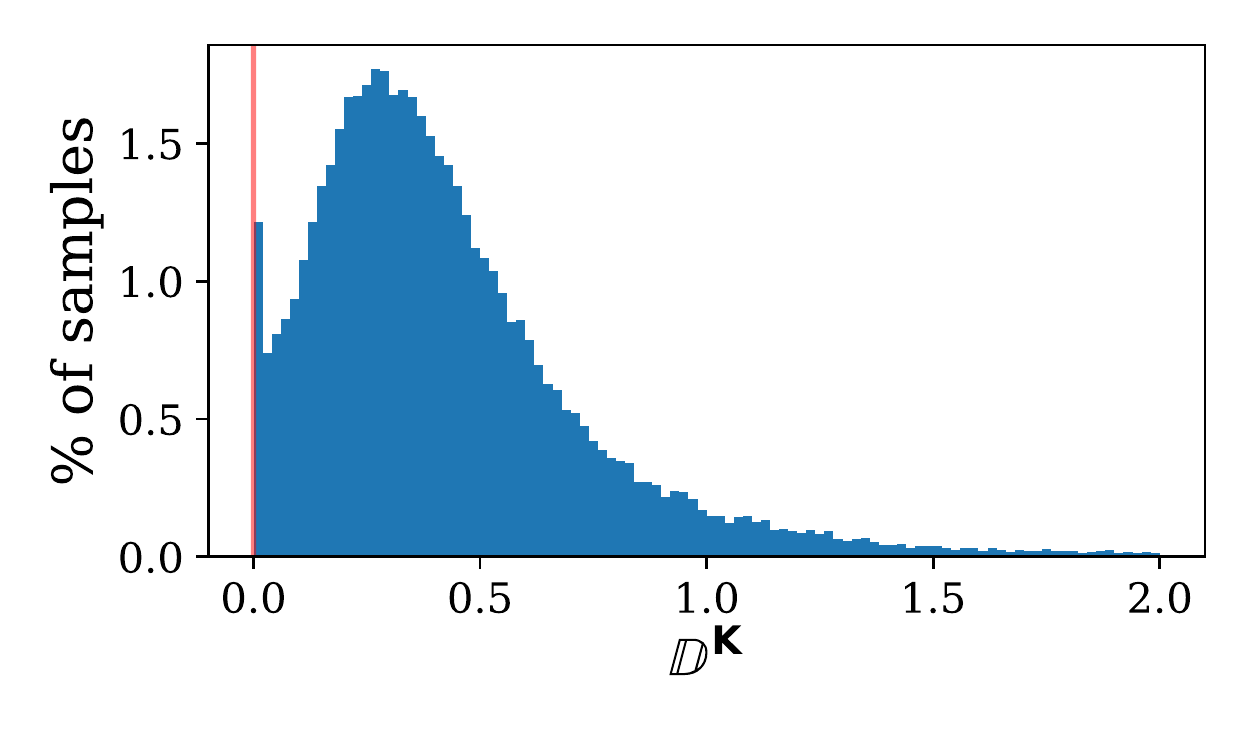}
    \captionof{figure}{Histogram of GAIT entropies obtained using the quantities sampled according to Tab.~\ref{tab:random_search}.} 
    \label{fig:concave_histogram}
  \end{minipage}
  
  \end{minipage}

\textbf{Random search on $-\nabla^2_\mathbb{P} [ \mathbb{H}_1^{\mathbf{K}}[\mathbb{P} ]]$.}

We empirically study the positive definiteness of this matrix via its spectrum. For this, we sample a set of $n$ points in $\mathbb{R}^d$ as well as a (discrete) distribution $\mathbb{P}$ over those points. Then we construct the Gram matrix induced by the kernel $\kappa(x, y) = \exp{(-||x - y||_p)}$. The location of the points, $\mathbb{P}$, $n$, $d$ and $p>=1$ are sampled randomly.  

We performed extensive experiments under this setting and never encountered an instance such that $-\nabla^2_\mathbb{P} [ \mathbb{H}_1^{\mathbf{K}}[\mathbb{P} ]]$ would have any negative eigenvalues. We believe this experimental setting is more holistic than the above experiments since it considers the whole spectrum of the (negative) Hessian rather than a “directional derivative” towards another sampled distribution $\mathbb{Q}$.

\textbf{Optimization.} As an alternative to random search, we also use gradient-based optimization on $\vp$, $\vq$ and $\mK$ to minimize $\mathbb{D}^\mK[\vp||\vq]$. Starting from random initializations, our objective function always converges to values very close to (yet above) zero.


Furthermore, freezing $\mK$ and optimizing over either $\vp$ or $\vq$ while holding the other fixed, results in $\vp = \vq$ at convergence.  On the other hand, if $\vp$ and $\vq$ are fixed such that $\vp\neq\vq$, optimization over $\mK$ converges to $\mK = \mathbf{J}$. We note from the definition of the GAIT divergence that when $\vp=\vq$ or $\mK=\mathbf{J}$, $\mathbb{D}^\mK[\vp||\vq]=0$, which matches the value we obtain at convergence when trying to minimize this quantity.

Recall that the experiments presented in Sec.~\ref{sec:experiments} involve the minimization of some GAIT divergence. We never encountered a negative value for the GAIT divergence during any of these experiments.

\subsection{Finding maximum entropy distributions with gradient ascent}
\label{sec:maxent}

An algorithm with an exponential run-time to find \emph{exact} maximizers of the entropy $\KEnt[\cdot]$ is presented in \cite{leinster2016}. We exploit the fact that the objective is amenable to gradient-based optimization techniques and conduct experiments in spaces with thousands of elements. This also serves as an empirical test for the conjecture about the concavity of the function: there must be a unique maximizer for $\mathbb{H}^\mathbf{\mK}_1[\cdot]$ if it is concave.

We test our ability to find distributions with maximum GAIT entropy via gradient descent. We sample 1000 points in dimensions 5 and 10, and construct a similarity space using a RBF kernel with $\sigma=1$. Then we perform 100 trials by setting the logits of the initialization using a Gaussian distribution with variance 4 for each of the 1000 logits that describe our distribution. We use Adam with learning rate 0.1 and $\alpha=1$. The optimization results are shown in Fig. \ref{fig:maxent}. We reliably obtain negligible variance in the objective value at convergence across random initializations, thus providing an efficient alternative for finding approximate maximum-entropy distributions.

\begin{figure}[h]
	\centering
	\includegraphics[height=0.15\textheight]{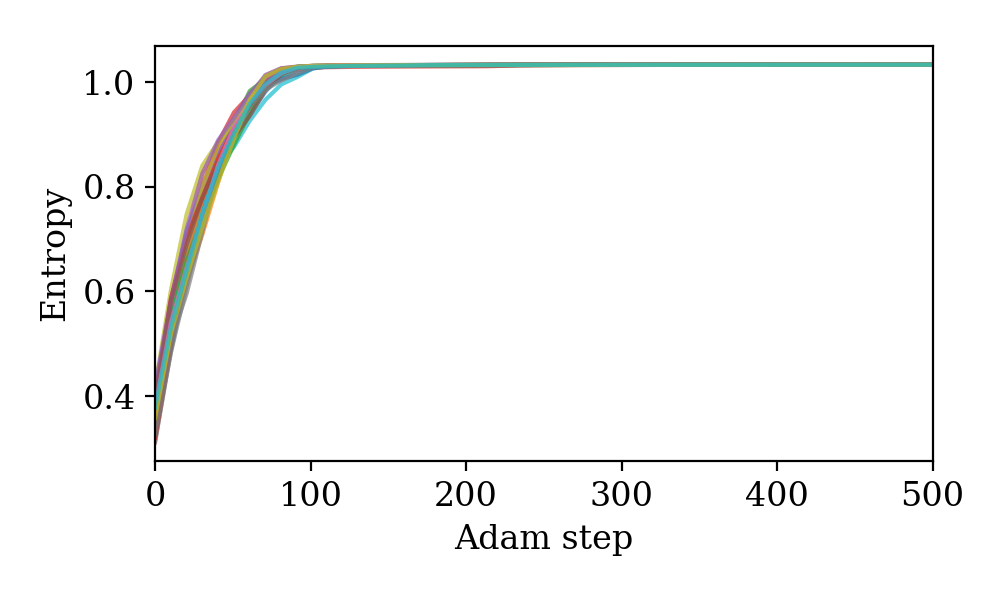}
	\includegraphics[height=0.15\textheight]{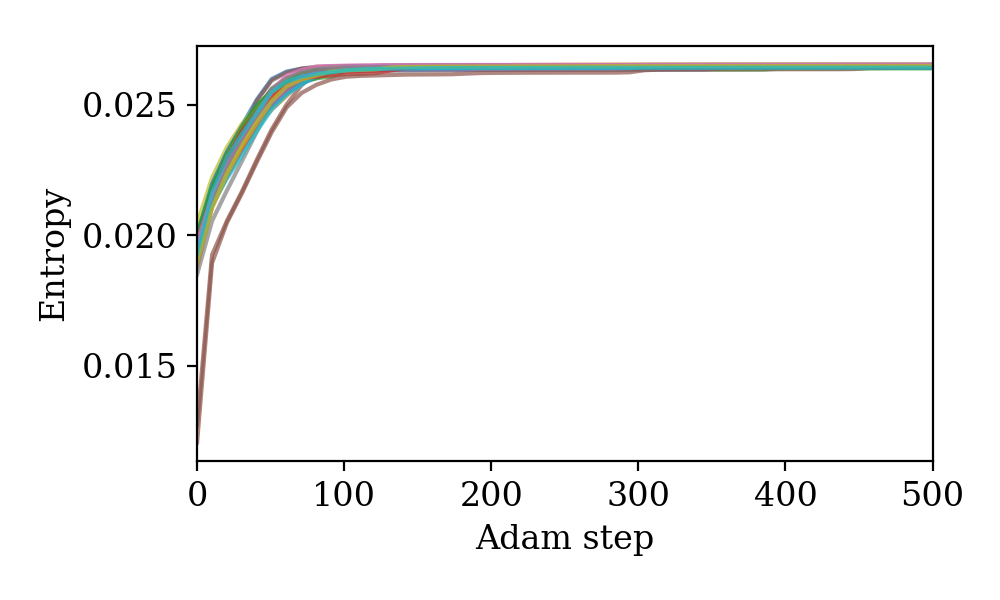}\\
	\vspace{-1ex}
	\caption{Optimization curves for measures with support 1000 in dimension 5 (left) and 100 (right).} 
	\label{fig:maxent}
\end{figure}

\section{Interpolation and Approximation} 

\label{sec:interp}

In all experiments for Figs. \ref{fig:atomic_approx}-\ref{fig:mnist_barys}, we minimize the GAIT divergence using  AMSGrad \citep{amsgrad} in PyTorch \citep{pytorch}. We parameterize the weights of empirical distributions using a softmax function on a vector of temperature-scaled logits. All experiments in the section are run on a single CPU.

\subsection{Approximating measures with finite samples}

In Fig. \ref{fig:atomic_approx} we optimize our approximating measure using Adam for 3000 steps with a learning rate of $10^{-3}$ and minibatches constructed by sampling 50 examples at each step.  We use a Gaussian kernel with $\sigma=0.02$.

In Fig. \ref{fig:supersamples}, we approximate a continuous measure with an empirical measure supported on 200 atoms. We execute Adam for 500 steps using a learning rate of $0.05$ and minibatches of 100 samples from the continuous measure to estimate the discrepancy. The similarity function is given by a polynomial kernel with exponent 1.5: $\kappa(\vect{x}, \vect{y}) \triangleq \frac{1}{1 + \norm{\vect{x} - \vect{y}}^{1.5}}$. Fig. \ref{fig:super_comp} shows that we achieve results of comparable quality to those of \citet{stoch_w_barys} 

\begin{figure}[h]
	\centering
	\includegraphics[height=0.17\textheight]{imgs/bregman_results/supersamples_data_model_tight.png}
	\includegraphics[height=0.17\textheight]{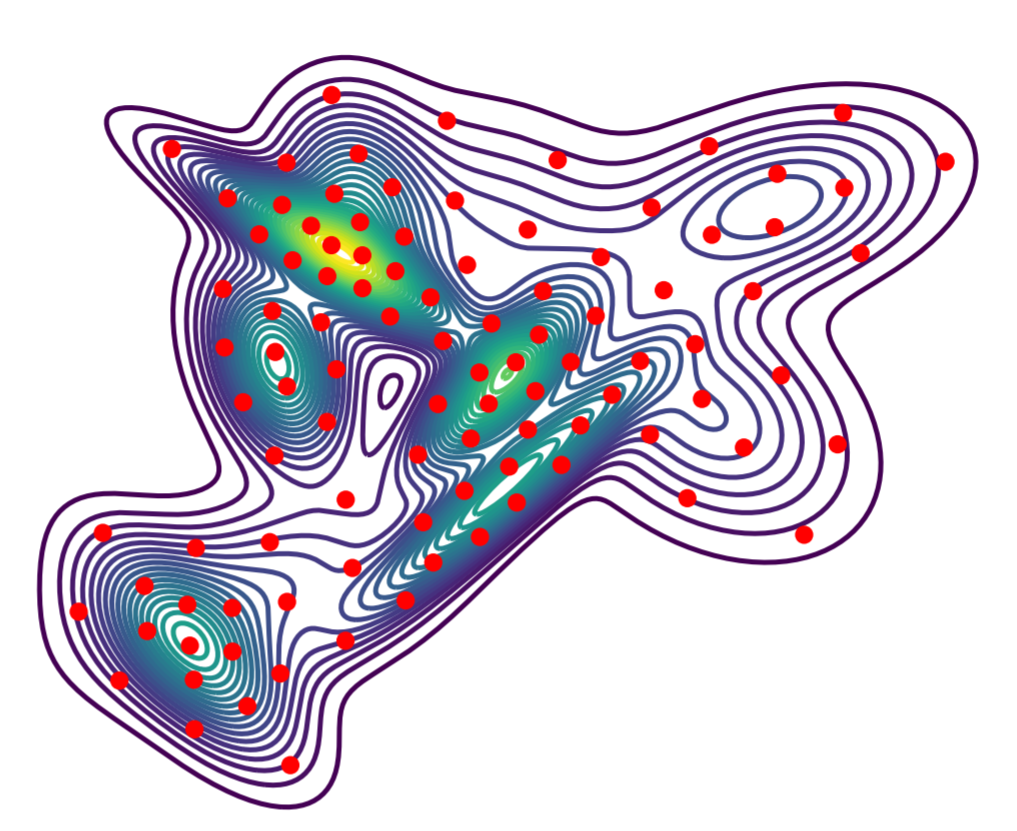}
	\includegraphics[height=0.17\textheight]{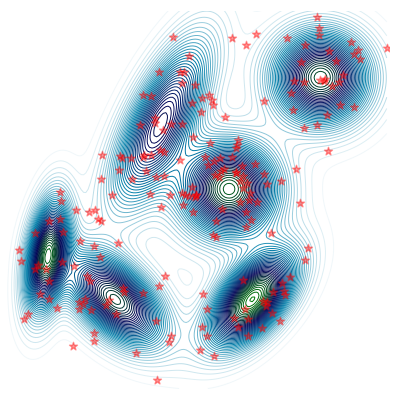}
	\caption{\textbf{Left and Center:} Approximation of a mixture of Gaussians density using our method and the proposal of \cite{stoch_w_barys} (taken from paper). \textbf{Right:} i.i.d samples from the real data distribution.} 
	\label{fig:super_comp}
\end{figure}

\subsection{Image barycenters}
We compute barycenters for each class of MNIST and Fashion-MNIST. We perform gradient descent with Adam using a learning rate of $0.01$ with minibatches of size 32 for 500 optimization steps.  We use a Gaussian kernel with $\sigma=0.04$. The geometry of the grid on which images are defined is given by the Euclidean distance between the coordinates of the pixels. In Fig. \ref{fig:bary_comparison}, we provide barycenters for the each of classes of Fashion MNIST computed via a combination of the methods of \cite{bregman} and \cite{cuturi_fast_bar}.

\begin{figure}[h]
    \centering
    \includegraphics[width=0.8\textwidth]{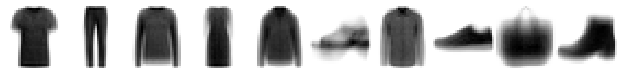}
    \caption{Barycenters for Fashion MNIST computed using our method.}
    \label{fig:bary_comparison}
\end{figure}

\subsection{Text summarization} \label{sec:text}
For our text example, we use the article from the STAT-MT parallel news corpus titled 
``Why Wait for the Euro?'', by Leszek Balcerowicz.  The full text of the article can be found at \url{https://pastebin.com/CnBgbpsJ}.  We use the 300-dimensional GLoVe vectors found at \url{http://nlp.stanford.edu/data/glove.6B.zip} as word embeddings.  TF-IDF is calculated over the entire English portion of the parallel news corpus using the implementation in Scikit-Learn \citep{sklearn}.  We filter stopwords based on the list provided by the Natural Language Toolkit \citep{nltk}. To encourage sparsity in the approximating measure $\vect{q}$, we add the $0.75$-norm of $\vect{q}$ to the divergence loss, weighted by a factor of $0.01$.  We optimize the loss with gradient descent using Adam optimizer, with hyperparameters $\beta_1 = 0, \beta_2 = 0.9, \text{ learning rate}=0.001$, for 25,000 iterations.  Since a truly sparse $\vect{q}$ is not reachable using the softmax function and gradient descent, we set all entries $\vect{q}_i < 0.01$ to be 0 and renormalize after the end of training.  $\vect{q}$ is represented by the softmax function, and is initialized uniformly.

We examine the influence of varying $\sigma$ in Fig. \ref{fig:wordclouds_sigma}.  Decreasing $\sigma$ leads to $\vect{K}$ approaching $\eye$, and the resulting similarity more closely approximates the original measure.  As $\sigma$ approaches 0.01, the two measures become almost identical. See Fig. \ref{fig:wordclouds_sigma}, bottom-left and bottom-right.

\begin{figure*}[h]
    \centering
	\includegraphics[width=0.3\columnwidth]{imgs/bregman_results/cloud_1_05.png} \hspace{1cm}
	\includegraphics[width=0.3\columnwidth]{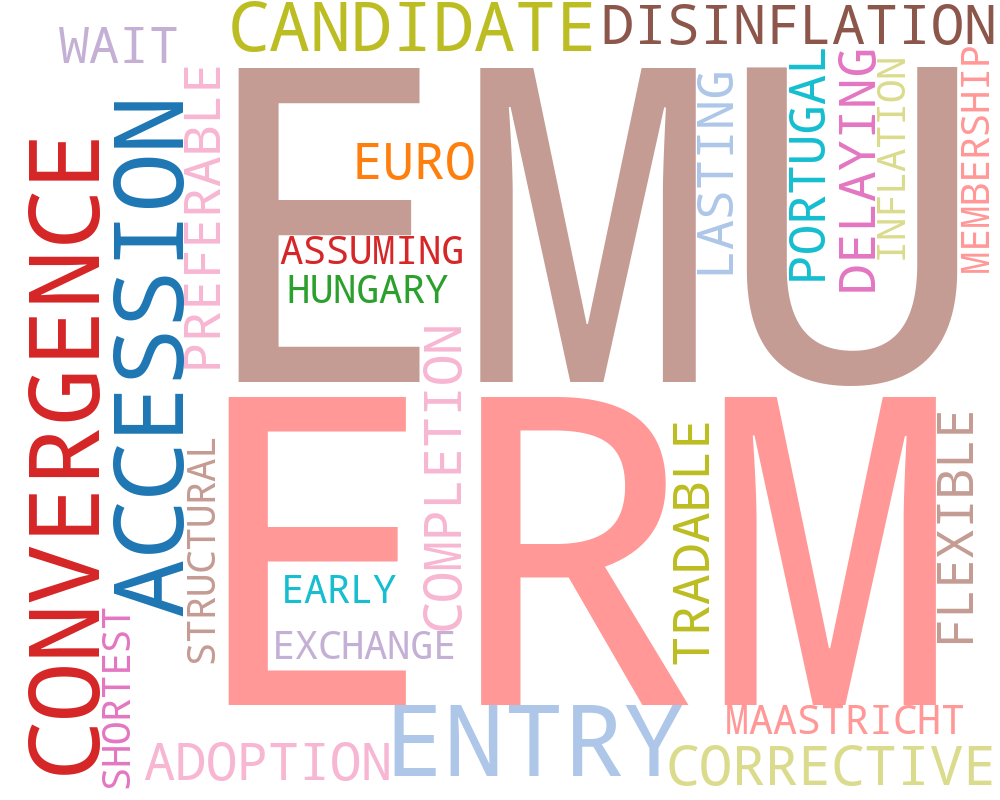} \\ \vspace{0.5cm}
	\includegraphics[width=0.3\columnwidth]{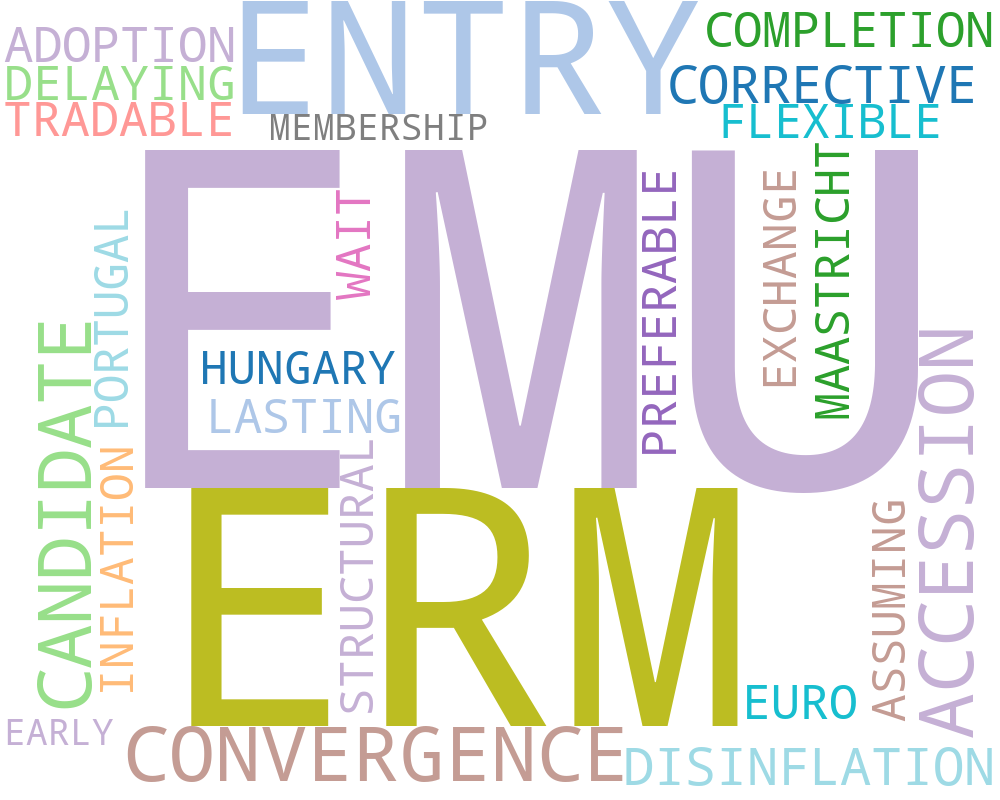} \hspace{1cm}
	\includegraphics[width=0.3\columnwidth]{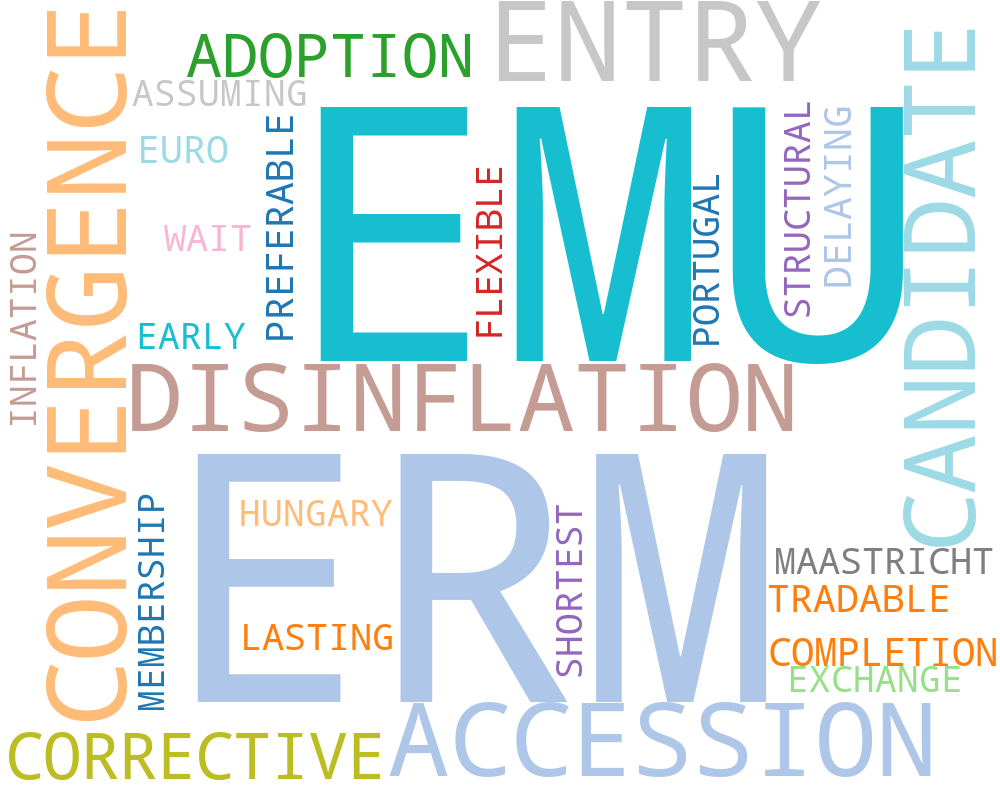}
	\caption{\textbf{Top-left:} Word cloud generated by our method at $\sigma=0.5$. \textbf{Top-right:} Word cloud generated by our method at $\sigma=0.1$. \textbf{Bottom-left:} Word cloud generated by our method at $\sigma=0.01$. \textbf{Bottom-right:} Top 43 original TF-IDF words.}
	\label{fig:wordclouds_sigma}
\end{figure*}

\section{GAN evaluation and mode counting}\label{sec:mode_details}
When the data available takes the form of many i.i.d. samples from a continuous distribution, a natural choice is to generate a Gram matrix $\vect{K}$ using a similarity measure such as an RBF kernel $\kappa_\sigma(\vx, \vect{y}) = \exp{\left(\frac{-||\vect{x}-\vect{y}||^2}{2\sigma^2}\right)}$. 

For comparison, we adapt the birthday paradox-based approach of \cite{birthday}.  Strictly speaking, their method requires human evaluation of possible duplicates, and is thus not comparable to our approach.  As such, we propose an automated version using the same assumptions. We define $\vect{x}$ and $\vect{y}$ as colliding when $d(\vect{x}, \vect{y}) < \epsilon$, and note that the expected number of collisions for a distribution with support $n$ in a sample of size $m$ is $c=\frac{m(m-1)}{n}$.  We can thus estimate $\hat{n} = \frac{m(m-1)}{c}$. When varying $\epsilon$, we observe behavior very similar to that of our entropy measure, with a plateau at $\hat{n} = C$ in our example of a mixture of $C$ Gaussians. The results of this comparison are presented in Fig. \ref{fig:mog}.

To test this on a more challenging dataset, we use a 2-dimensional representation for MNIST obtained using UMAP~\citep{umap}, shown in Fig. \ref{fig:mnist_count}.  Although our method no longer shows a clear plateau at $\mathbb{H}^{\mK}_1[\Prob] \approx \log 10 \approx 2.3$, it does transition from exponential to linear decay at approximately this point, which coincides with the point of minimum curvature with respect to $\sigma$, $\mathbb{H}^{\mK}_1[\Prob] \approx \log 10$. Similar behavior is observed in the case with birthday-inspired estimate; here the point of minimum curvature has $\hat{n} \approx 8$.

\begin{figure*}[h]
    \centering
    \vspace{-1ex}
    \includegraphics[trim={.5cm 0.8cm .5cm .5cm},clip,width=0.32\textwidth]{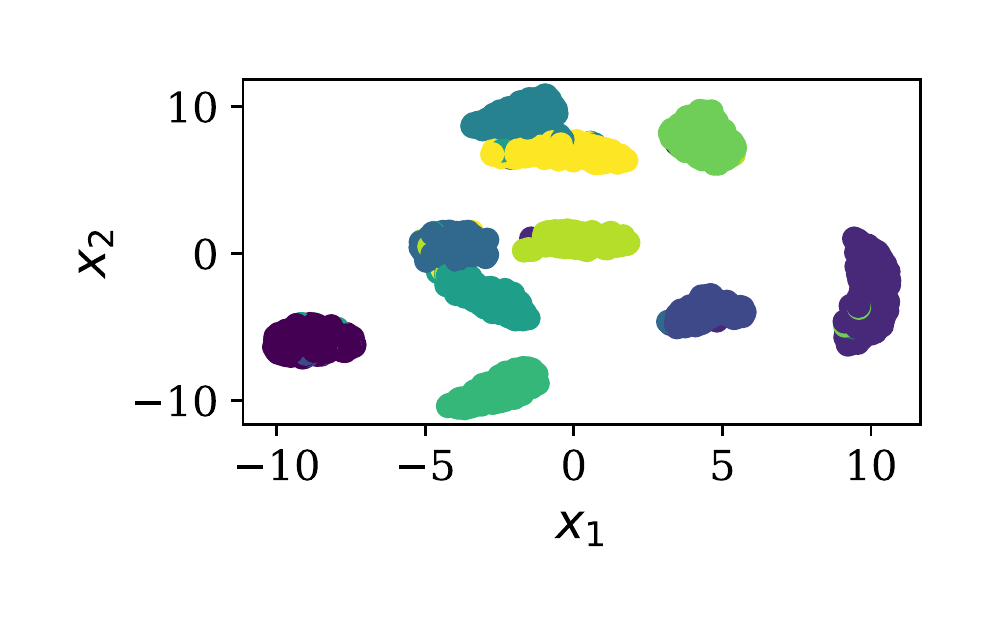} \hfill \includegraphics[trim={0cm 0.0cm .5cm .5cm},clip,width=0.32\textwidth]{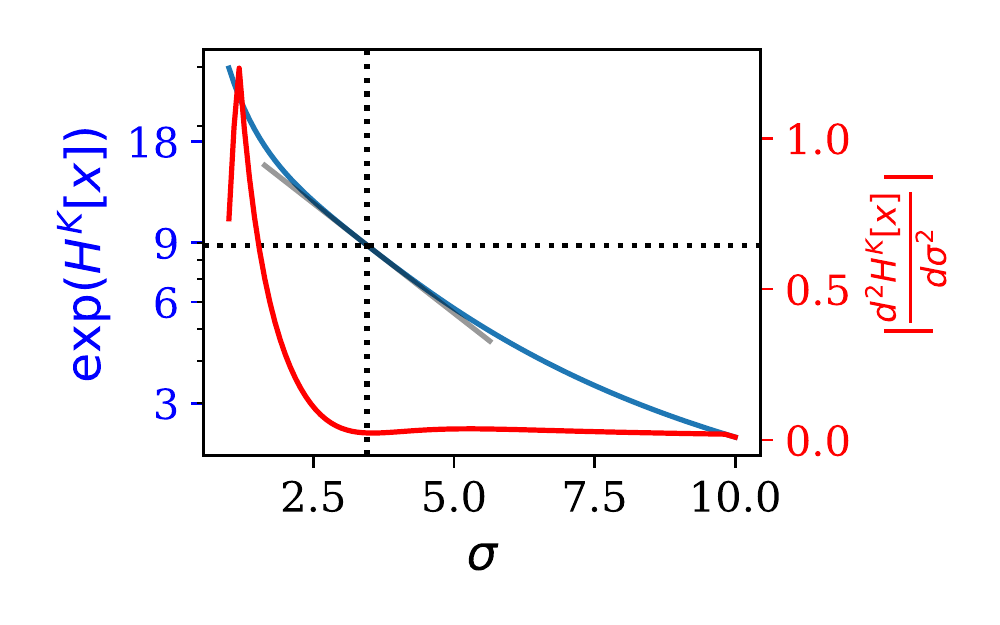}
    \hfill \includegraphics[trim={0cm 0.0cm .5cm .5cm},clip,width=0.32\textwidth]{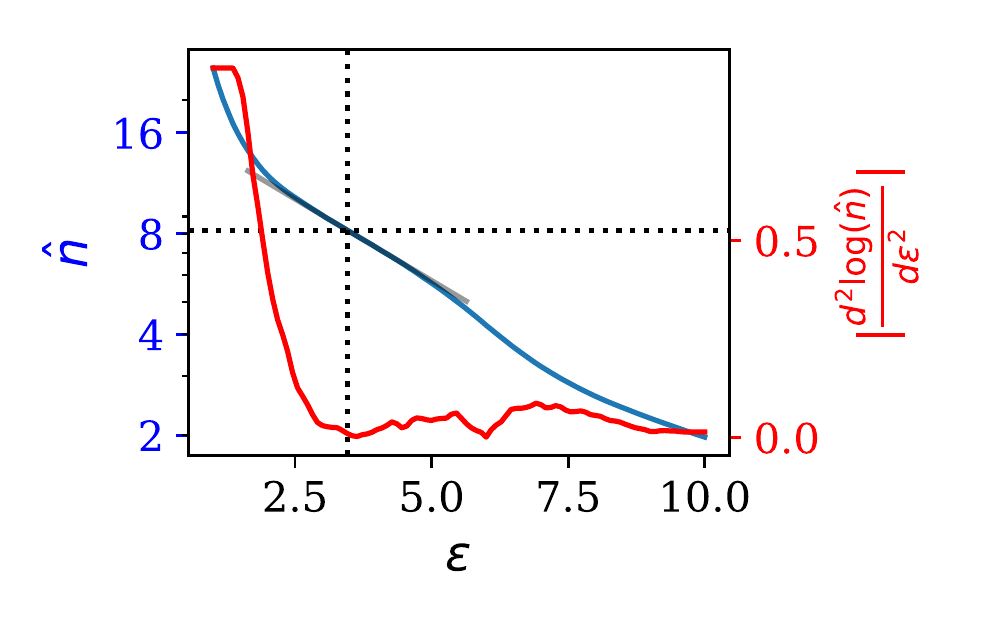}
    \caption{\textbf{Left:} A 2000-image subset of MNIST reduced to 2 dimensions by UMAP.  \textbf{Center:} Our mode estimation.  \textbf{Right:} The birthday paradox method estimate. Note that the left axis is logarithmic.}
    \label{fig:mnist_count}
\end{figure*}

Finally, we also apply this method to evaluating the diversity of GAN samples.  We train a simple WGAN \citep{wgan} on MNIST, and find that the assessed entropy increases steadily as training progresses and the generator masters more modes (see Fig. \ref{fig:gan_eval}). Note that the entropy estimate stabilizes once the generator begins to produce all 10 digits, but long before sample quality ceases improving.

\begin{figure*}[h]
    \centering
    \begin{minipage}{0.33\textwidth}
    \includegraphics[width=\textwidth]{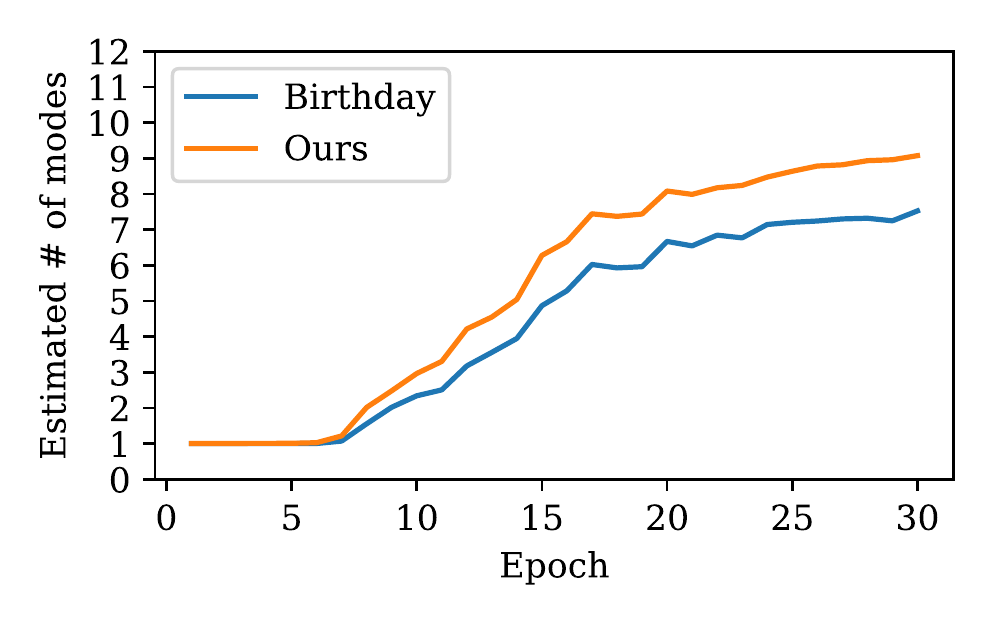}
    \end{minipage}
    \hfill
    \begin{minipage}{0.65\textwidth}
    \includegraphics[width=\textwidth]{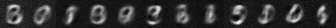}
    \includegraphics[width=\textwidth]{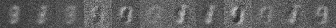}
    \includegraphics[width=\textwidth]{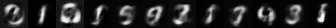}
    \end{minipage}
    \caption{\textbf{Left:} The estimated numbers of modes in the output of a WGAN trained on MNIST. \textbf{Right:} Samples from the same WGAN after 5, 15 and 25 epochs.}
    \label{fig:gan_eval}
\end{figure*}

In all of the experiments corresponding to mode counting, we use $\alpha=1$ and the standard RBF kernel $\kappa_\sigma(\vx, \vect{y}) = \exp{\left(\frac{-\norm{\vect{x}-\vect{y}}^2}{2\sigma^2}\right)}$.  Note that this differs from the kernel given in Section \ref{sec:theory} by using squared Euclidean distance rather than Euclidean distance.  To estimate the point with minimum curvature, we find the value of $\log \hat{n}$ or $\mathbb{H}^{\mK}_1[\vect{p}]$ at 100 values of $\sigma$ or $\epsilon$ evenly spaced between $0.1$ and $25$, and empirically estimate the second derivative with respect to $\sigma$ or $\epsilon$. In the case of the birthday estimate, which is not continuous on finite sample sizes, we use a Savitzky-Golay filter \citep{savgol} of degree 3 and window size 11 to smooth the derivatives.  We estimate the point of minimum curvature to be the first point when the absolute second derivative passes below $0.01$.

To evaluate GANs, we train a simple WGAN-GP~\citep{wgan-gp} with a 3-hidden-layer fully-connected generator, using the ReLU nonlinearity and 256 units in each hidden layer, on a TITAN Xp GPU.  Our latent space has 32 dimensions sampled i.i.d. from $\mathcal{N}(0, 1)$ and the discriminator is trained for four iterations for each generator update.  We use the Adam with learning rate $10^{-4}$ and $\beta_1 = 0$, $\beta_2 = 0.9$.  The weight of the gradient penalty in the WGAN-GP objective is set to $\lambda = 10$. 

To count the number of modes in the output of the generator, we use an instance of UMAP fitted to the entire training set of MNIST to embed all input in $\mathbb{R}^2$.  We use 1,000 samples of true MNIST data to estimate values of $\sigma$ (for our entropy method) and $\epsilon$ for the birthday paradox-based method that minimize curvature and yield estimates of $\exp{\mathbb{H}^{\mK}_1[\vect{p}]} \approx 10$ and $\hat{n} \approx 10$.  We then apply these methods to the output of the generator after each of the first 30 epochs, and report the resulting $\hat{n}$ or $\exp{\mathbb{H}^{\mK}_1[\vect{p}]}$.

\newpage

\section{Generative models}\label{sec:app_genmodels}

For all the generative models in Section~\ref{sec:comparison_ot}, we employ an experimental setup similar to the setup used by \citet{cuturi_learning} for learning generative models on MNIST. Thus, our generative model is a $2$-layer multilayer perceptron with one hidden layer of 500 dimensions with ReLU non-linearities, using a $2$D latent space, trained using mini-batches of size $200$. Note that their method requires a batch size of $200$ to get reasonable generations, but we also obtain comparable results with a significantly smaller batch size of $50$. Since \citet{cuturi_learning} sample latent codes from a unit square, we do the same for MNIST here for easy comparison but sample from a standard Gaussian for Swiss roll and Fashion-MNIST datasets. We train our models by minimizing $\BDiv [ \hat{\Prob} \, || \, \hat{\Qrob} ]$, where $\hat{\Prob}$ is the target empirical measure and $\hat{\Qrob}$ is the model. $\mK$ is the Gram matrix corresponding to a RBF kernel with $\sigma=0.2$ for Swiss roll data, and $\sigma=1.6$ for MNIST and Fashion-MNIST. We use Adam with a learning rate of $5\times 10^{-4}$ to train our models. Fig.~\ref{fig:sinkhorn_compare} compares the manifolds learned by minimizing our divergence with batch sizes $200$ and $50$ with that learned by minimizing the Sinkhorn loss \citep{cuturi_learning} for MNIST.

\begin{figure*}[h]
    \centering
    \includegraphics[width=0.3\textwidth]{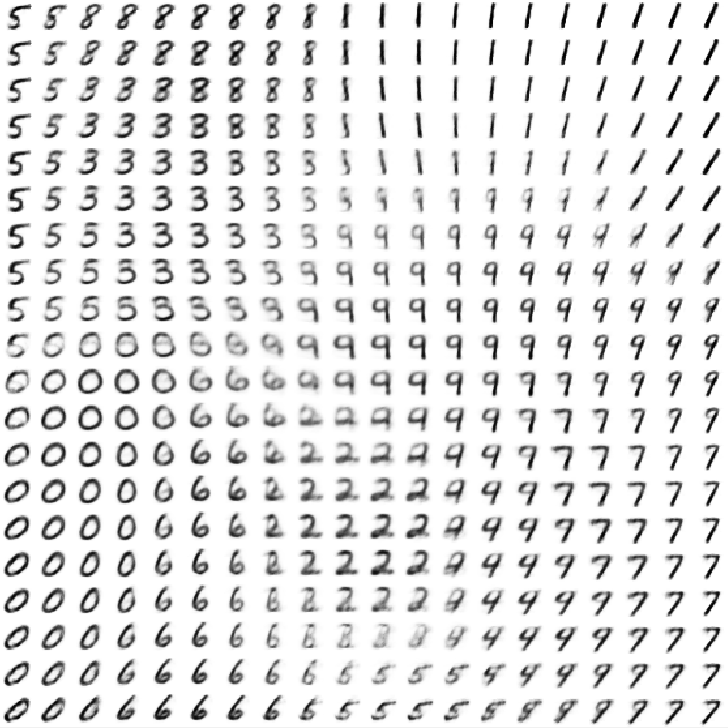}
    \hfill
    \includegraphics[width=0.3\textwidth]{imgs/bregman_results/sinkhorn_compare.png}
    \hfill
    \includegraphics[width=0.3\textwidth]{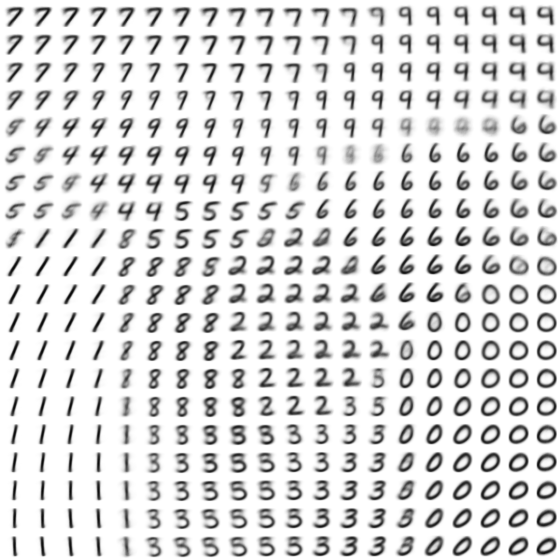}
    \caption{\textbf{Left:} Manifold learned by minimizing Sinkhorn loss, taken from \citet{cuturi_learning}. \textbf{Center:} Manifold learned by minimizing GAIT divergence using their experimental setup. \textbf{Right:} Manifold learned by minimizing GAIT divergence with batch size $50$.}
    \label{fig:sinkhorn_compare}
\end{figure*}

We further compare our generations with those done by variational auto-encoders \citep{vae}. Following their setup,  we use $\tanh$ as the non-linearity in the $2$-layer multilayer perceptron and a lower batch size of $100$, along with the latent codes sampled from a standard Gaussian distribution. We compare our results with theirs in Fig.~\ref{fig:vae_compare}. Both figures are generated using latent codes obtained by taking the inverse c.d.f. of the Gaussian distribution at the corresponding grid locations, similar to the work of \citet{vae}.

\begin{figure*}[h]
    \centering
    \includegraphics[width=0.31\textwidth]{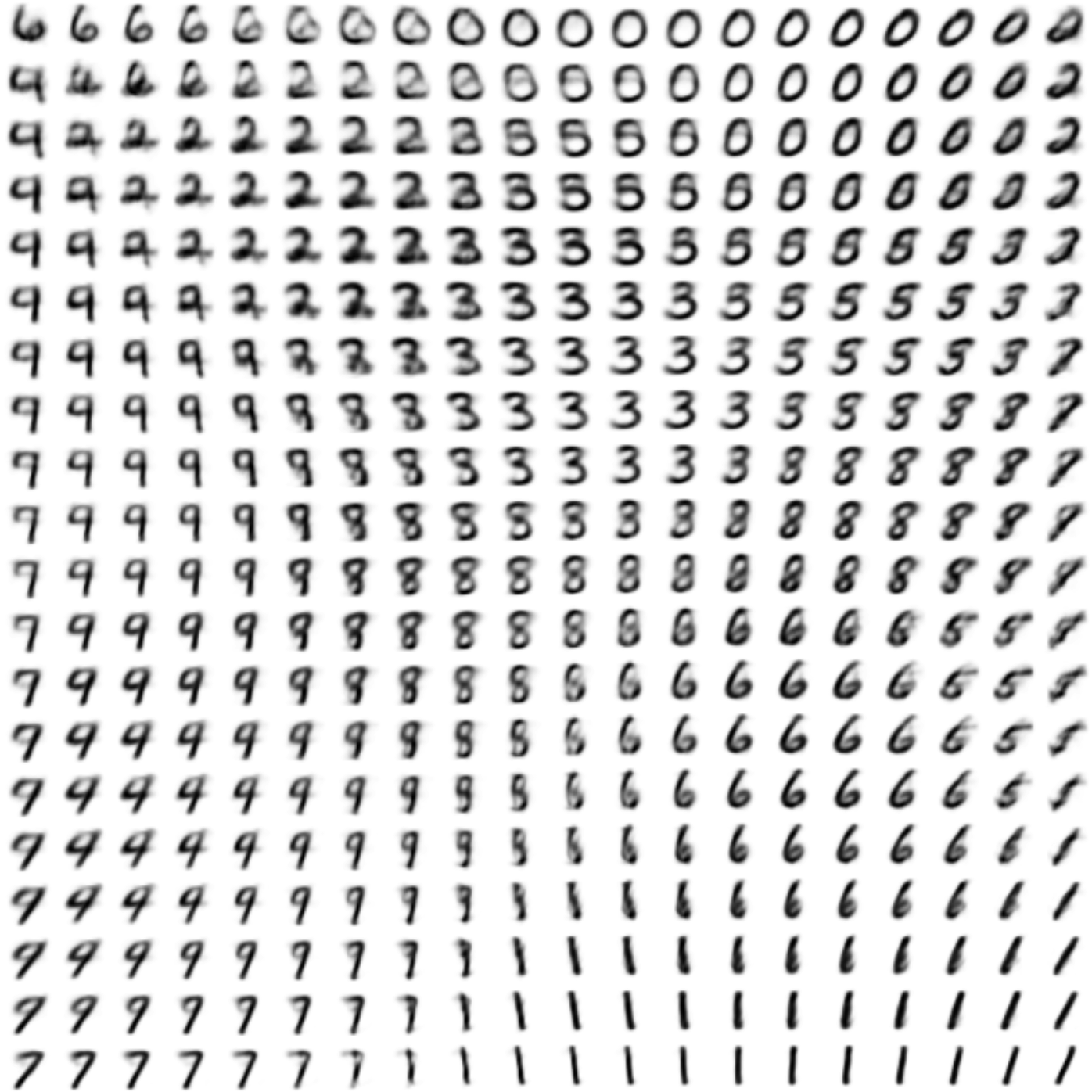}
    \hspace{1cm}
    \includegraphics[width=0.31\textwidth]{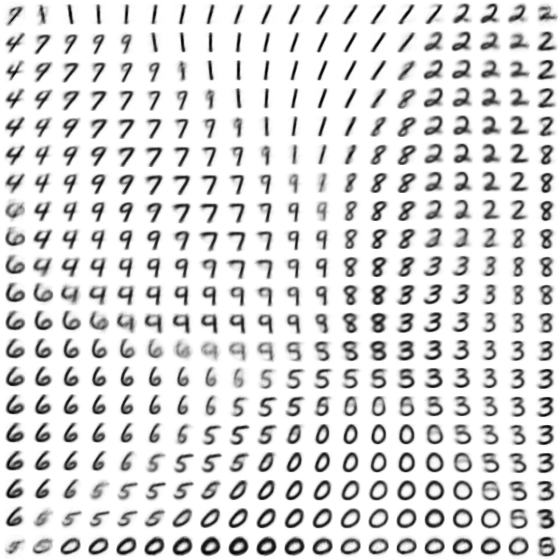}
    \hspace{0.7cm}
    \caption{\textbf{Left:} Manifold learned by Variational Autoencoder, taken from \citet{vae}. \textbf{Right:} Manifold learned by minimizing GAIT divergence using their experimental setup.}
    \label{fig:vae_compare}
\end{figure*}

Finally, in Fig.~\ref{fig:genmodel_samples}, we illustrate Fashion-MNIST and MNIST samples generated by our generative model with a $20$D latent space. The quality of our generations with a $20$D latent space is comparable to the samples generated by the variational auto-encoder with the same latent dimensions in \citet{vae}.

\begin{figure*}[h]
    \centering
    \includegraphics[width=0.31\textwidth]{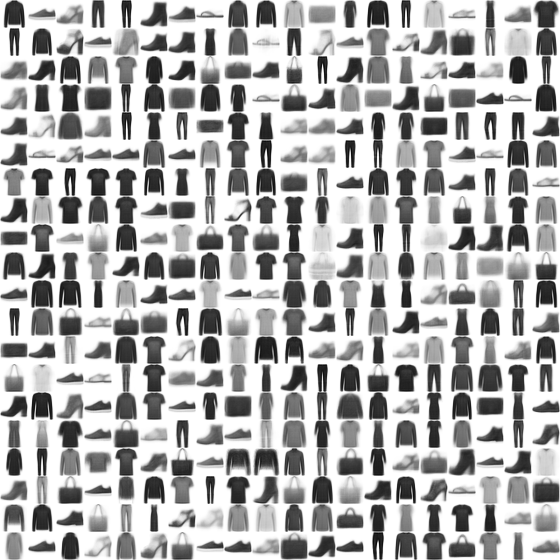}
    \hfill
    \includegraphics[width=0.31\textwidth]{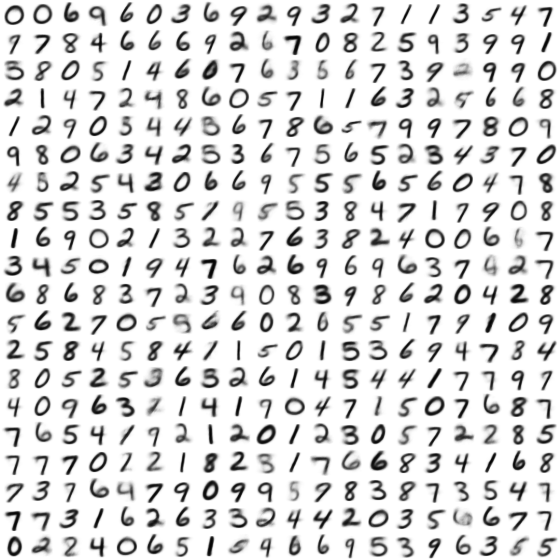}
    \hfill
    \includegraphics[width=0.18\textwidth]{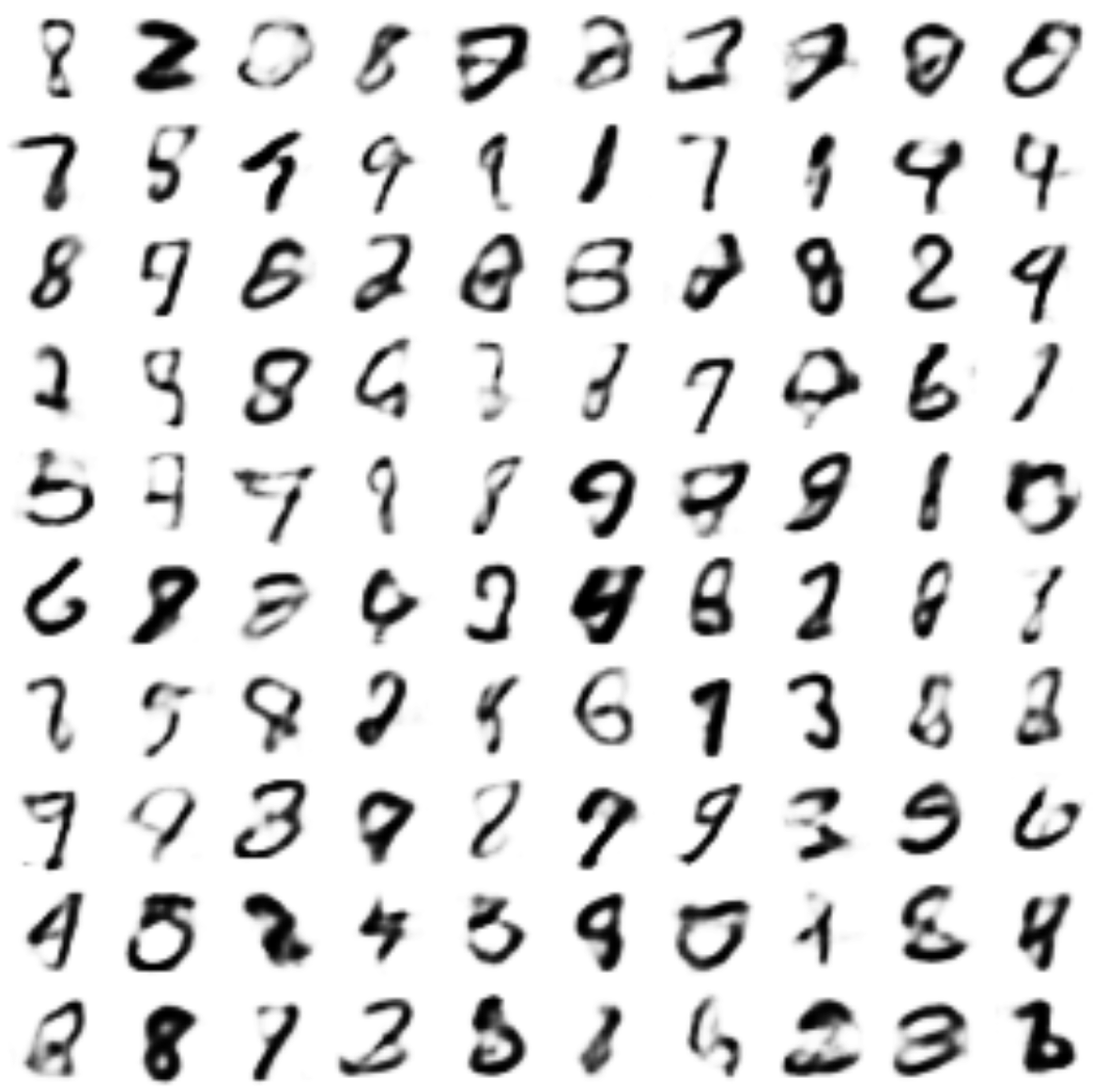}

    \caption{\textbf{Left:} Fashion-MNIST samples from our model with $20$D latent space. \textbf{Center:} MNIST samples from our model with $20$D latent space. \textbf{Right:} MNIST samples from Variational Autoencoder with $20$D latent space, picture taken from \citep{vae}.}
    \label{fig:genmodel_samples}
\end{figure*}




\section{Computational complexity}\label{sec:app_cplx}

\citet{conv_ot} shows how the computation of $\mathbf{K}\mathbb{P}$ can be efficiently performed using convolutions in the case of image-like data. For $d \times d$ images, this takes time $\mathcal{O}(d^3)$, instead of $\mathcal{O}(d^4)$ using a naive approach. Sinkhorn regularized optimal transport requires performing this computation this computation $L$, which highlights the value of the work of \citet{conv_ot} for applications with large $d$. The complexity for computing the close-form GAIT divergence is thus $\mathcal{O}(d^3)$, and the cost for approximating solving the optimal transport problem via Sinkhorn iterations is $\mathcal{O}(Ld^3)$. We draw the attention of the reader to the distinction between the width $d$ of the image, and the size of the support of the measures, $n=d^2$.

Fig. \ref{fig:gait_timing} shows compares the time required by the convolutional approaches of the GAIT divergence computation and the Sinkhorn algorithm approximating the Sinkhorn divergence, between two images of size $d \times d$. \citet{cuturi_learning} found $L=100$ necessary to perform well on generative modeling. Even for the comparatively low values of $L$ presented in Fig. \ref{fig:gait_timing}, we observe that the computation of the GAIT divergence is significantly faster than that of the approximate Sinkhorn divergence. It is possible to compute the GAIT divergence between two images of one megapixel in a quarter of a second (horizontal line). 

\begin{figure*}[h]
    \centering
    \includegraphics[scale=0.6]{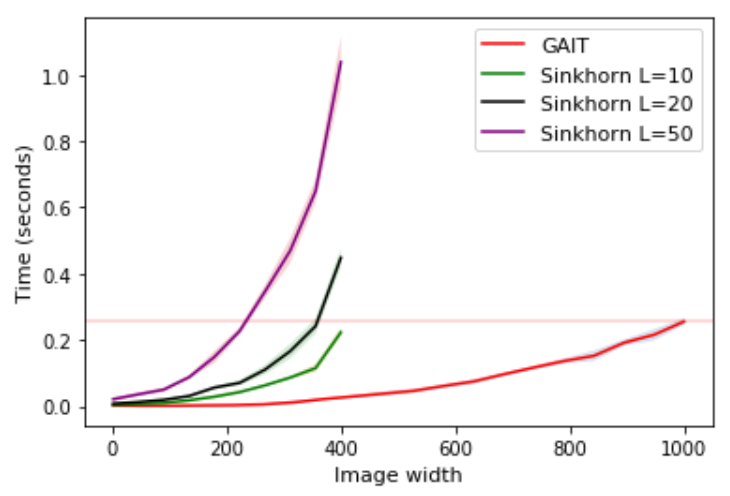}
    \caption{Time comparison between the computation of the GAIT and Sinkhorn divergences between randomly generated images of varying size. Error bars correspond to one standard deviation over a sample of size 30.}
    \label{fig:gait_timing}
\end{figure*}

\end{document}